\definecolor{tabblue}{HTML}{1F77B4}
\definecolor{tabred}{HTML}{D62728}
\newtheorem{assumption}{Assumption}
\newtheorem{theorem}{Theorem}
\newtheorem{proposition}{Proposition}
\newtheorem{lemma}{Lemma}
\newtheorem{corollary}{Corollary}
\newtheorem{remark}{Remark}
\newcommand{\wrapbox}[2]{\resizebox{#1\linewidth}{!}{%
	#2
}%
}
\newcommand{\cu}[1]{
	\ifcat\noexpand#1\relax
	\bm{#1}
	\else
	\mathbf{#1}
	\fi
}
\newcommand{\diff}{\mathop{}\!\mathrm{d}}
\newcommand{\expp}{\mathrm{e}}
\newcommand{\cond}{{\;|\;}}
\let\sup\relax
\let\inf\relax
\let\lim\relax
\DeclareMathOperator*{\sup}{sup\,}  
\DeclareMathOperator*{\inf}{inf\,}  
\DeclareMathOperator*{\lim}{lim\,}  
\let\grad\relax
\DeclareMathOperator{\grad}{\nabla\!}
\newcommand{\expecsym}{\operatorname{\mathbb{E}}}     
\newcommand{\covsym}{\operatorname{Cov}}     
\newcommand{\varrsym}{\operatorname{Var}}     
\newcommand{\diagsym}{\operatorname{diag}}     
\newcommand{\tracesym}{\operatorname{tr}}           
\let\expec\relax
\let\cov\relax
\let\varr\relax
\let\diag\relax
\let\trace\relax
\newcommand{\expec}{\@ifstar{\@expecauto}{\@expecnoauto}}
\newcommand{\@expecauto}[1]{\expecsym \left[ #1 \right]}
\newcommand{\@expecnoauto}[1]{\expecsym [#1]}
\newcommand{\expecbig}[1]{\expecsym \bigl[ #1 \bigr]}
\newcommand{\expecBig}[1]{\expecsym \Bigl[ #1 \Bigr]}
\newcommand{\expecbigg}[1]{\expecsym \biggl[ #1 \biggr]}
\newcommand{\cov}{\@ifstar{\@covauto}{\@covnoauto}}
\newcommand{\@covauto}[1]{\covsym \left[ #1 \right]}
\newcommand{\@covnoauto}[1]{\covsym [#1]}
\newcommand{\varr}{\@ifstar{\@varrauto}{\@varrnoauto}}
\newcommand{\@varrauto}[1]{\varrsym \left[ #1 \right]}
\newcommand{\@varrnoauto}[1]{\varrsym [#1]}
\newcommand{\varrBig}[1]{\varrsym \Bigl[ #1 \Bigr]}
\newcommand{\diag}{\@ifstar{\@diagauto}{\@diagnoauto}}
\newcommand{\@diagauto}[1]{\diagsym \left( #1 \right)}
\newcommand{\@diagnoauto}[1]{\diagsym (#1)}
\newcommand{\trace}{\@ifstar{\@traceauto}{\@tracenoauto}}
\newcommand{\@traceauto}[1]{\tracesym \left( #1 \right)}
\newcommand{\@tracenoauto}[1]{\tracesym (#1)}
\newcommand*{\trans}{{\mkern-1.5mu\mathsf{T}}}
\newcommand*{\R}{\mathbb{R}} 
\let\norm\relax
\DeclarePairedDelimiter{\normbracket}{\lVert}{\rVert}
\newcommand{\norm}{\normbracket}
\let\innerp\relax
\DeclarePairedDelimiter{\innerpbracket}{\langle}{\rangle}
\newcommand{\innerp}{\innerpbracket}
\newcommand{\innerpbig}[1]{\bigl\langle #1 \bigr\rangle}
\let\abs\relax
\DeclarePairedDelimiter{\absbracket}{\lvert}{\rvert}
\newcommand{\abs}{\absbracket}
\newcommand{\absbig}[1]{\bigl \lvert #1 \bigr\rvert}
\newcommand{\absbigg}[1]{\biggl \lvert #1 \biggr\rvert}
\newcommand{\klbig}[2]{\mathrm{KL}\bigl( #1 \, \Vert \, #2 \bigr)}
	\newtheorem{envcounter}{EnvcounterDummy}[\thmenvcounter]
	\newtheorem{proposition}[envcounter]{Proposition}
	\newtheorem{corollary}[envcounter]{Corollary}
	\newtheorem{remark}[envcounter]{Remark}
	\newtheorem{assumption}[envcounter]{Assumption}
\begin{document}

\twocolumn[
\icmltitle{Diffusion differentiable resampling}




\begin{icmlauthorlist}
\icmlauthor{Jennifer Rosina Andersson}{ja}
\icmlauthor{Zheng Zhao}{zz}
\end{icmlauthorlist}

\icmlaffiliation{ja}{Department of Information Technology, Uppsala University, Sweden}
\icmlaffiliation{zz}{Division of Statistics and Machine Learning, Link\"{o}ping University, Sweden}

\icmlcorrespondingauthor{Zheng Zhao (\begin{CJK*}{UTF8}{bsmi}趙正\end{CJK*})}{zheng.zhao@liu.se}

\icmlkeywords{Machine Learning, diffusion models, particle filters, sequential Monte Carlo, resampling, differentiable, reparametrisation, Feynman--Kac}

\vskip 0.3in
]



\printAffiliationsAndNotice{}  

\newcommand{\refmeasure}{\pi_{\mathrm{ref}}}

\begin{abstract}
This paper is concerned with differentiable resampling in the context of sequential Monte Carlo (e.g., particle filtering). 
Drawing on reparametrisation, we propose a new resampling method that is informative and instantly differentiable, based on a training-free diffusion model surrogate. 
We theoretically prove that our diffusion resampling method provides a consistent resampling distribution, and we show empirically that it outperforms the state-of-the-art differentiable resampling methods on multiple filtering and parameter estimation benchmarks. 
Finally, we show that it achieves competitive end-to-end performance when used in learning a complex dynamics-decoder model with high-dimensional image observations.
\end{abstract}

\section{Introduction}
\label{sec:intro}
Consider a distribution $\pi$ and a population of weighted samples $\lbrace (w_i, X_i) \rbrace_{i=1}^N \sim \pi$, where $\lbrace X_i \rbrace_{i=1}^N$ are often identically and independently drawn from another proposal distribution, calibrated by the weights. 
The goal of \emph{resampling} is to transform these weighted samples into an un-weighted set while preserving the original distribution $\pi$. 
In the weak sense, unbiased resampling is defined as a mapping
\begin{equation}
    \begin{split}
        \lbrace (w_i, X_i) \rbrace_{i=1}^N &\mapsto \Bigl\lbrace \Bigl(\frac{1}{N}, X_i^*\Bigr)\Bigr\rbrace_{i=1}^N, \\
        \text{s.t.} \quad \expecbigg{\frac{1}{N}\sum_{i=1}^N\psi(X_i^*)} &= \expec{\psi(X)}
    \end{split}
\end{equation}
for any bounded and continuous test function $\psi$, where $\lbrace X_i^* \rbrace_{i=1}^N$ stands for the re-samples, and $X\sim \pi$. 

Resampling is a key component in sequential Monte Carlo (SMC) samplers for state-space models~\citep[SSMs,][]{ChopinBook2020}.
It not only mitigates particle degeneracy in practice, but has also been shown to represent a jump Markov process in a continuous-time limit~\citep{Chopin2022PDMP}.
However, resampling typically hinders (gradient-based) parameter estimation of SSMs due to discrete randomness. 
For instance, with the commonly used multinomial resampling, one draws indices $I_i \sim \mathrm{Categorical}(w_1, w_2, \ldots, w_N)$ independently for $i=1,2,\ldots, N$, and then defines the resampling mapping by indexing $X^*_i \coloneqq X_{I_i}$. 
If the sample $X^\theta_i$ (and weight) depend on some unknown parameters $\theta$, the pathwise derivative of the re-sample, $\partial X_i^{\theta, *} / \, \partial \theta$, is not defined. 
Moreover, most automatic differentiation libraries (e.g., JAX) will typically drop the undefined derivatives, resulting in erroneous gradient estimates~\citep[see, e.g.,][]{Naesseth2018}. 

Various methods have thus been proposed to make the resampling step differentiable in the context of SMC and SSMs~\citep[see recent surveys in][]{Xiongjie2025, Brady2025pydpf}. 
One line of work focuses on the expectation derivative $\partial \expecbig{X_i^{\theta, *}} / \, \partial \theta$, which is often well defined, rather than the pathwise derivative. 
This can be achieved by combing Fisher's score~\citep{Poyiadjis2011} and stochastic derivatives~\citep{Arya2022}, resulting in, for example, the stop-gradient based method by~\citet{Scibior2021differentiable}. 
However, these REINFORCE-based methods often suffer from high variance and may consequently require a large sample size $N$. 

Another line of work focuses on developing new resampling methods that naturally come with well-defined pathwise derivatives (i.e., reparapemtrisation). 
Notable examples include soft~\citep{Karkus2018} and Gumbel-Softmax resampling~\citep{Jang2017categorical}. 
Both methods essentially form an interpolation between multinomial resampling (which has no gradient) and uninformative resampling (which has a gradient) via a calibration parameter. 
Although they have been empirically shown to work well for certain models, they are fundamentally biased. 
Crucially, one has to decide on a trade-off between the gradient bias and the statistical performance of the forward resampling mapping. 
In addition, \citet{Zhu2020towards} propose to parametrise the resampling with a neural network, which adds both training complexity and additional sources of bias.
\citet{Kviman24a} introduce a deterministic resampling, which also comes with irreducible biases~\citep{Finke2026aistats}.

The perhaps first fully-differentiable-by-construction reparametrisation with consistency guarantees is due to~\citet{Malik2011}, who make a smooth approximation of the empirical cumulative distribution function (CDF) of the weighted samples. 
However, their method focuses on univariate $X$. 
This was later generalised by~\citet{Li2024differentiable} using kernel jittering to approximate the CDF gradient. 
In a similar vein, \citet{Corenflos2021} propose an optimal transport (OT) based resampling method. 
The idea is to learn a (linear) transportation map between the target distribution $\pi$ and the proposal, and approximate the resampling by an ensemble transformation~\citep{Reich2013}
\begin{equation}
    X_i^* = N \, \sum_{j=1}^N P_{i, j}^{\varepsilon} \, X_{j}, \quad i=1,2,\ldots, N,
    \label{equ:eot}
\end{equation}
where $P^\varepsilon\in\R^{N\times N}$ denotes the $\varepsilon$-regularised entropic optimal coupling.
The main problem of OT-based resampling lies in computation, since one has to compute for the transportation plan $P^{\varepsilon}$. 
The cost scales quadratically in the number of samples $N$ with a Sinkhorn implementation, which in turn depends exponentially on the entropy parameter $1\,/\,\varepsilon$~\citep{Luo2023OT, Burns2025linear} to converge. 
In the context of SMC, the method may not work well when the proposal/reference does not well approximate the target. 
\citet[][Fig. 1]{Li2024differentiable} also show a (hypothetical) case when linear transformation of OT is insufficient for exploiting the distribution manifold. 
Nevertheless, this line of work has inspired us to develop a new transportation-based resampling method that avoids these issues. 

Other than making the resampling differentiable, one can also modify the SMC algorithm itself to produce smooth estimates of the marginal log likelihood of SSMs. 
For instance, \citet{Klaas2005} introduce a mixture of SMC proposals to marginalise out the need for resampling, however, in practice, drawing samples from mixture distributions typically also requires discrete sampling. 
This was addressed by~\citet{Lai2022} using implicit reparametrisation, but the approach remains restricted to structured proposals.
Overall, this class of methods rely on customised SMC samplers, potentially limiting their applicability in general. 

Therefore, our main motivation in this paper is to develop a differentiable resampling method that can be generically applied \emph{as is}, without altering the SMC (or SSMs) construction, sacrificing consistency, or increasing computational cost. 
We take inspiration from the transportation-based approach~\citep{Corenflos2021}, but our key departure here is the transport map construction: it needs not to be \emph{computed} but rather \emph{specified}, thereby mitigating the computation issue. 
Moreover, our construction allows for integrating additional information of the target into the map to make it statistically more efficient. 
Our contributions are as follows.
\begin{itemize}
    \item We introduce \emph{diffusion resampling}, a new reparametrisation paradigm that instantly enables automatic differentiation for $\partial X_i^{\theta, *} / \, \partial \theta$, and consequently also for the expectation $\partial \expecbig{X_i^{\theta, *}} / \, \partial \theta$.
    We apply the method for filtering and gradient-based parameter estimation in state-space models with SMC samplers.
    \item We prove that our diffusion resampling method is consistent in the number of samples. 
    We show an informative error bound in Wasserstein distance, explicitly quantifying the error propagation of the resampling. 
    \item We empirically validate our method through both ablation and comparison experiments. 
    The results show that our method consistently outperforms the commonly used differentiable resampling methods for both filtering and parameter estimation problems. 
    Notably, our method is computationally efficient and stable, allowing for practical and robust usage in applications. 
\end{itemize}

See Table~\ref{tbl:app-compare-all} for a comparison of diffusion resampling to the commonly used differentiable resampling schemes. 

\section{Diffusion differentiable resampling}
\label{sec:diffres}
In this section we show how we can make use of a diffusion model~\citep[without training, cf.][]{Baker2025, Wan2025diffpf} to construct a differentiable resampling scheme and apply it for sequential Monte Carlo. 
The idea is akin to Equation~\eqref{equ:eot}, which \emph{computes} a linear transportation map, but here we instead use a diffusion model to \emph{construct} a non-linear map. 
We define the diffusion model via a (forward-time) Langevin stochastic differential equation (SDE)
\begin{equation}
    \begin{split}
        \diff X(t) &= b^2\grad \log \refmeasure(X(t)) \diff t + \sqrt{2} \, b \diff W(t), \\
        X(0) &\sim \pi
    \end{split}
    \label{equ:fwd}
\end{equation}
initialised at the target $\pi$, where $\refmeasure$ is a user-chosen reference distribution from which we can easily sample (e.g., Gaussian), $W$ is a Brownian motion, and $b$ is a dispersion constant. 
Under mild conditions~\citep{Meyn2009}, the marginal distribution $p_t$ of $X(t)$ converges to $\refmeasure$ geometrically fast as $t\to\infty$. 
Importantly, \citet{Song2021scorebased, Anderson1982} show that we can leverage this construction to sample from $\pi$ if we can sample $p_T$ at some terminal time $T>0$ and simulate the reverse-time SDE\footnote{To simplify later analysis we assume that the Brownian motions in Equations~\eqref{equ:fwd} and~\eqref{equ:rev} are the same.}
\begin{align}
    \diff U(t) &= b^2 \bigl[-\grad \log \refmeasure(U(t)) + 2\grad\log p_{T - t}(U(t)) \bigr]\diff t \nonumber\\
    &\quad + \sqrt{2} \, b \diff W(t), \label{equ:rev}\\
    U(0) &\sim p_T.\nonumber
\end{align}
At time $T$, the marginal distribution $q_T$ of $U(T)$ equals $\pi$ by construction, since $X(T - t)$ and $U(t)$ solve the same Kolmogorov forward equation. 
Resampling can thus be achieved by sampling from this reversal at $T$. 
The challenge is that the score function $\grad \log p_t$ is intractable, and in the context of generative diffusion models the score is usually learnt from samples of $\pi$, introducing demanding computations~\citep{Zhao2024rsta}. 
However, given that we have access to $\lbrace (w_i, X_i) \rbrace_{i=1}^N \sim \pi$, we can approximate the score~\citep{BaoEnsScore2024} without training according to
\begin{equation}
    \begin{split}
        &\grad \log p_t(x_t) \\
        &\quad= \frac{\int \grad \log p_{t|0}(x_t \cond x_0) \, p_{t|0}(x_t \cond x_0)\, \pi(x_0) \diff x_0}{\int p_{t|0}(x_t \cond x_0) \, \pi(x_0) \diff x_0} \\
        &\quad\approx \frac{\sum_{i=1}^N w_i \grad \log p_{t|0}(x_t \cond X_i) \, p_{t|0}(x_t \cond X_i)}{\sum_{j=1}^N w_j \, p_{t|0}(x_t \cond X_j)},
    \end{split}
\end{equation}
where the transition $p_{t | 0}$ is analytically tractable for many useful choices of the reference $\refmeasure$. 
For later analysis, we define the score approximation by
\begin{equation}
    \begin{split}
        \begin{split}
            \grad\log p_t(x) &\approx s_N(x, t) \\
            &\coloneqq \sum_{i=1}^N \alpha_i(x, t) \grad\log p_{t|0}(x \cond X_i),
        \end{split}
    \end{split}
    \label{equ:ensemble-score}
\end{equation}
where $\alpha_i(x, t) \coloneqq w_i \, p_{t|0}(x \cond X_i) \, / \sum_{j=1}^N w_j \, p_{t|0}(x \cond X_j)$ stands for the normalised weight. 
This approximation exactly functions as importance sampling, where $\pi(x_0)$ and $p_{t|0}(\cdot \cond x_0)$ stand for the prior/proposal and likelihood, respectively. 
As such, the established $N\to\infty$ consistency properties of importance sampling apply~\citep{ChopinBook2020} at least pointwise for $(x, t) \mapsto s_N(x, t)$.
Evaluation of the function $s$ has an $O(N)$ computational cost if implemented na\"{i}vely, and a logarithmic cost if implemented in parallel~\citep{Lee01012010}. 

Therefore, the resampling can be approximately achieved by simulating the reverse SDE~\eqref{equ:rev} using the ensemble score $s_N$ in Equation~\eqref{equ:ensemble-score} until a terminal time $T$. 
Similar to~\citet{Corenflos2021}, this diffusion process too defines an optimal transportation but in the sense of Jordan--Kinderlehrer--Otto scheme~\citep{Jordan1998}. 
The key distinction is that this map is \emph{given by construction}, and does not need to be \emph{computed} like in OT with Sinkhorn. 
Although the diffusion also assumes $T\to\infty$, we show in Section~\ref{sec:analysis} that $T$ scales better than the entropy parameter $1\,/\,\varepsilon$ as a function of $N$. 

We summarise the diffusion resampling in Algorithm~\ref{alg:diffres} using a simple Euler--Maruyama discretisation for pedagogy. 
It is immediate by construction that this function is differentiable, since the only source of randomness is Gaussian which is reparameterisable. 

\begin{remark}
    \label{remark:doob}
    The ensemble score in Equation~\eqref{equ:ensemble-score} characterises a Doob's $h$-function:
    \begin{equation}
        s_N(x, t) = \grad \log \sum_{i=1}^N h_i(x, t),
    \end{equation}
    where $h_i(x, t) \coloneqq w_i \, p_{t | 0}(x \cond X_i)$ verifies the martingale (harmonic) property, that is, $\sum_{i=1}^N h_i$ is a valid $h$-function under SDE~\eqref{equ:fwd}. 
    As a result, the diffusion resampling process at the terminal time will obtain $\sum_{i=1}^N \gamma_i \, \delta_{X_i}$ for weights $\lbrace \gamma_i \rbrace_{i=1}^N$ that depend on the spatial location, akin to the OT approach.
    When setting $p_T(x)=\sum_{i=1}^N h_i(x, T)$, we obtain a special case $\lbrace \gamma_i=w_i \colon i=1,\ldots, N \rbrace$,  
    and diffusion resampling may thus be viewed as a continuous and differentiable reparametrisation of discrete multinomial resampling. 
    However, the key advantage here is that the diffusion resampling can leverage the additional information from $\refmeasure$ (which implicitly defines a transportation cost and a Rao--Blackwellisation condition) achieving better statistical properties (e.g., variance). 
    See Appendix~\ref{app:elaboration-doob} for elaboration.
\end{remark}

\begin{algorithm2e}[t!]
    \SetAlgoLined
    \DontPrintSemicolon
    \KwInputs{Weighted samples $\lbrace (w_i, X_i) \rbrace_{i=1}^N$, reference distribution $\refmeasure$, time grid $0=t_0 < t_1 < \cdots < t_K=T$, and $b$. }
    \KwOutputs{Differentiable re-samples $\lbrace (\frac{1}{N}, X^{*}_i) \rbrace_{i=1}^N$}
    \For(\tcp*[f]{parallel}){$i=1,2,\ldots, N$}{%
        $U_{i, 0} \sim \refmeasure$\;
        \For{$k=1, 2, \ldots, K$}{%
            $\Delta_k = t_k - t_{k-1}$\;
			Draw $\xi_k^i \sim \mathrm{N}(0, 2 \,  b^2 \, \Delta_k \, I_d)$\;
            $U_{i, t_{k}} = U_{i, t_{k-1}} - b^2 \bigl[\grad \log \refmeasure(U_{i, t_{k-1}}) - 2 \, s_N(U_{i, t_{k-1}}, T - t_{k-1}) \bigr]\, \Delta_k + \xi_k^i$
        }
        $X^*_i \gets U_{i, T}$
    }
    \caption{Diffusion resampling \texttt{diffres}}
    \label{alg:diffres}
\end{algorithm2e}

\subsection{Differentiable sequential Monte Carlo}
\label{sec:diffres-smc}
The diffusion resampling in Algorithm~\ref{alg:diffres} is particularly useful for SMC and SSMs for two reasons. 
First, the resampling method is pathwise-differentiable by construction. 
Moreover, recent advances provide methods for propagating gradients through SDE solvers~\citep[see, e.g.,][]{Bartosh2025sde, LiXueChen2020, Kidger2021gradient} and these methods have been well implemented in common automatic differentiation libraries~\citep{Kidger2021on}.
Secondly, we can fully leverage the sequential structure of SMC to adaptively choose for a suitable reference distribution $\refmeasure$ based on any previous SMC marginal distribution.
To see these aspects, let us begin by considering a parametrised Feynman--Kac model
\begin{equation}
    Q^\theta_{0:J}(z_{0:J}) = \frac{1}{L(\theta)} \prod_{j=0}^J M_{j}^\theta(z_j \cond z_{j-1}) \, G_j^\theta(z_j, z_{j-1}), 
	\label{equ:fk}
\end{equation}
where $M^\theta_j$ and $G_j^\theta$ are Markov transition and potential functions, respectively, and $L(\theta)$ is the marginal likelihood that we often aim to maximise. 
Take an SSM with state transition $p_\theta(z_j \cond z_{j-1})$ and measurement $p_\theta(y_j \cond z_j)$ for example. 
In this case a bootstrap construction of the corresponding Feynman--Kac model is simply $M_j^\theta(z_j \cond z_{j-1}) = p_\theta(z_j \cond z_{j-1})$ and $G_j^\theta(z_j, \cdot) = p_\theta(y_j \cond z_j)$. 
One can draw samples of a Feynman--Kac model with an SMC sampler as in Algorithm~\ref{alg:smc}.
For detailed exposition of Feynman--Kac models and SMC samplers, we refer the readers to~\citet{ChopinBook2020, DelMoral2004}.

\begin{algorithm2e}[h]
	\SetAlgoLined
	\DontPrintSemicolon
	\KwInputs{Feyman--Kac model $Q_{0:J}^\theta$, number of samples $N$, and \texttt{diffres}.}
	\KwOutputs{Weighted samples of $Q_{0:J}^\theta$ and marginal likelihood estimate $L(\theta)$. }
	Draw i.i.d. samples $\lbrace Z_{0, i} \rbrace_{i=1}^N \sim M_0^\theta$.\;
    $L_0(\theta) \gets \sum_{i=1}^N G^\theta_0(Z_{0, i})$\;
	Weight $w_{0, i} \gets G_0^\theta(Z_{0, i}) \, / \, L_0(\theta)$ \;
	\For(\tcp*[f]{$i=1,2,\ldots, N$}){$j=1,2,\ldots, J$}{%
        \uIf{resampling needed}{%
            $Z_{j-1, i}^* \gets \texttt{diffres}\Bigl(\bigl\lbrace (w_{j-1, i}, Z_{j-1, i}) \bigr\rbrace_{i=1}^N \Bigr)$\;
            $w_{j-1, i} \gets 1 \, / \, N$\;
        }
        \Else{
            $Z_{j-1, i}^* \gets Z_{j-1, i}$ \;
        }
		Draw $Z_{j, i} \sim M^\theta_{j}(\cdot \cond Z^*_{j-1, i})$ \;
        $\overline{w}_{j, i} \gets w_{j-1, i} \, G_j^\theta(Z_{j, i}, Z_{j-1, i}^*)$\;
		$L_j(\theta) \gets \sum_{i=1}^N \overline{w}_{j, i}$\;
        Weight $w_{j, i} \gets \overline{w}_{j, i} \, / \, L_j(\theta)$ \;
	}
    $L(\theta) \gets \prod_{j=0}^J L_j(\theta)$\;
	\caption{Differentiable sequential Monte Carlo (SMC) for sampling Feynman--Kac model $Q_{0:J}$. }
	\label{alg:smc}
\end{algorithm2e}

This choice of reference $\refmeasure$ is more flexible compared to that of~\citet{Corenflos2021} who explicitly use the predictive samples $\lbrace (w_{j-1, i}, Z_{j, i})\rbrace_{i=1}^N$ at the $j$-th SMC step as the reference to resample $\lbrace (w_{j, i}, Z_{j, i})\rbrace_{i=1}^N$. 
In contrast, diffusion resampling allows to use, e.g., the posterior samples $\lbrace (w_{j, i}, Z_{j, i})\rbrace_{i=1}^N$ to establish the reference, which can be more informative than the predictive one. 

\subsection{Mean-reverting Gaussian reference}
\label{sec:ref-distribution}
A remaining question is how to choose the reference distribution $\refmeasure$.
In generative sampling, the reference is usually a unit Normal. 
However, this becomes suboptimal for resampling when the target distribution $\pi$ is geometrically far away from $\mathrm{N}(0, I_d)$, and as a consequence we would need large enough $T$ for convergence. 
A more informative choice of $\refmeasure$ is a Gaussian approximation of $\pi$. 
Given that we have access to the target samples $\lbrace (w_i, X_i) \rbrace_{i=1}^N \sim \pi$, we can make use of moment matching, where $\mu_N \coloneqq \sum_{i=1}^N w_i \, X_i$ and $\Sigma_N \coloneqq \sum_{i=1}^N w_i \, (X_i - \mu_N) \, (X_i - \mu_N)^\trans$ stand for the empirical mean and covariance, respectively~\citep{Yang2013, Kang2025}. 
We can then choose the reference measure to be
\begin{equation}
    \begin{split}
        \grad\log \refmeasure(x) = -\Sigma_N^{-1} \, (x - \mu_N), 
    \end{split}
\end{equation}
resulting in a mean-reverting forward SDE
\begin{equation}
    \begin{split}
        \diff X(t) = -b^2 \, \Sigma_N^{-1} \, (X(t) - \mu_N) \diff t + \sqrt{2} \, b \diff W(t)
    \end{split}
    \label{equ:gaussian-fwd}
\end{equation}
whose forward transition required in Equation~\eqref{equ:ensemble-score} is
\begin{equation*}
    \begin{split}
        p_{t|0}(x_t \cond x_0) &= \mathrm{N}(x_t ; m_t(x_0), V_t), \\
        m_t(x_0) &\coloneqq x_0 \, \expp^{-b^2 \, \Sigma_N^{-1} \, t} + \mu_N \, \bigl(1 - \expp^{-b^2 \, \Sigma_N^{-1} \, t}\bigr), \\
        V_t &\coloneqq \Sigma_N \, \bigl( 1 - \expp^{-2\,b^2 \, \Sigma_N^{-1} \, t} \bigr).
    \end{split}
\end{equation*}
To combine with the SMC in Algorithm~\ref{alg:smc}, we compute $\mu_N$ and $\Sigma_N$ based on $\bigl\lbrace (w_{j-1, i}, Z_{j-1, i}) \bigr\rbrace_{i=1}^N$, see Appendix~\ref{app:diffres-gaussian} for details. 
It is also possible to leverage any Gaussian filter which is commonly used for approximating the optimal proposal~\citep{Vandermerwe2000}, to establish the reference. 
The gist here is to exploit the sequential structure of SMC to adaptively and informatively choose the reference, instead of assuming a static and uninformative one, such as $\mathrm{N}(0, I_d)$. 
Using mean-reverting SDEs for informative generative sampling have also been used in domain applications, such as image restoration~\citep{Luo2023IRSDE, Luo2024rsta}.

In practice, to avoid solving the inversion $\Sigma_N^{-1}$, we can approximate it as a diagonal, or directly estimate an empirical precision matrix~\citep{Yuan2007, Fan2016}. 
When the Gaussian construction is insufficient for multi-mode targets, one may use a Gaussian mixture, although the associated semigroup needs to be approximated. 
Another option is to transform with a diffeomorphism to obtain a flexible yet tractable reference process~\citep{Deng2020}. 

\subsection{Exponential integrators}
\label{sec:exp-integrator}
When a Gaussian reference $\refmeasure$ is chosen, the reversal corresponding to the forward Equation~\eqref{equ:gaussian-fwd} will have a semi-linear structure. 
Hence, we can leverage this structure by applying exponential integrators to accelerate the sampling so as to reduce the computation caused by discretisation. 
Consider any semi-linear SDE of the form
\begin{equation}
    \diff U(t) = A \, U(t) + f(U(t), t) \diff t + \sqrt{2} \, b \diff W(t), 
\end{equation}
where in our scenario the linear and non-linear parts respectively correspond to
\begin{equation}
    \begin{split}
        A &= b^2 \, \Sigma_N^{-1}, \\
        f(u, t) &= b^2 (2\grad\log p_{T - t}(u) - \Sigma_N^{-1} \, \mu_N).
    \end{split}
\end{equation}
Provided that $A$ is invertible, \citet{Jentzen2009} propose an exponential integrator
\begin{equation*}
    \begin{split}
        U_{t_{k}} &= \expp^{A \, \Delta_k} \, U_{t_{k-1}} + A^{-1} \, (\expp^{A \, \Delta_k} - I_d) \, f(U_{t_{k-1}}) + B_k, \\
        B_k &= \sqrt{2} \, b \int^{t_{k}}_{t_{k-1}} \expp^{(t_{k} - \tau) \, A} \diff W(\tau),
    \end{split}
\end{equation*}
where $\Delta_k \coloneqq t_{k} - t_{k-1}$, and the Wiener integral $B_k$ simplifies to $B_k \sim \mathrm{N}\bigl(0, \Sigma_N \, (\expp^{2 \, A \, \Delta_k} - I_d)\bigr)$. 
The integrator works effectively if the stiffness of the linear part dominates that of the non-linear part~\citep[see conditions in e.g.,][]{Buckwar2011}. 
Indeed, the structure of the approximate score $s_N$ in Equation~\eqref{equ:ensemble-score} is essentially a product between a Softmax function and a linear one, which is Lipschitz. 
However, we note the Lipschitz constant is not uniform for all $t>0$, resulting in explosive $s_N$ near $t=0$, such as with $V_t$. 
This exponential integrator has been empirically shown to work well for generative diffusion models in practice, for instance by~\citet{Lu2025expn}.

In the case when the matrix $A$ is not invertible, which rarely happens for Gaussian $\refmeasure$ but still possibly numerically, one can also use another lower-order integrator by~\citet{Lord2004}:
\begin{equation*}
    \begin{split}
        U_{t_k} &= \expp^{A \, \Delta_k} \, U_{t_{k-1}} + \Delta_k \, \expp^{A \, \Delta_k} f(U_{t_{k-1}}, t_{k-1}) + B_k, \\
        B_k &\sim \mathrm{N}\bigl(0, 2 \, b^2 \, \expp^{2 \, A \, \Delta_k} \Delta_k\bigr),
    \end{split}
\end{equation*}
which was also considered by~\citet{Zhang2023fast}.

In light of Remark~\ref{remark:doob}, it is even possible to simulate the resampling SDE fully in continuous time~\citep[see, e.g.,][]{Schauer2017, Baker2024}, although most of the currently established techniques are still not (yet) pragmatic enough compared to just using a fine discretisation. 

\section{Convergence analysis}
\label{sec:analysis}
In this section we analyse the convergence properties of diffusion resampling in Algorithm~\ref{alg:diffres}. 
Recall the ideal re-sampler in Equation~\eqref{equ:rev}
\begin{equation*}
    \begin{split}
        \diff U(t) &= b^2 \bigl[-\grad \log \refmeasure(U(t)) + 2\grad\log p_{T - t}(U(t)) \bigr]\diff t \\
        &\quad + \sqrt{2} \, b \diff W(t), \\
        U(0) &\sim p_T.
    \end{split}
\end{equation*}
and the corresponding approximation
\begin{align}
    \diff \widetilde{U}(t) &= b^2 \bigl[-\grad \log \refmeasure(\widetilde{U}(t)) + 2 \, s_N(\widetilde{U}(t), T - t) \bigr]\diff t \nonumber\\
    &\quad + \sqrt{2} \, b \diff W(t), \label{equ:diffres-rev} \\
    \widetilde{U}(0) &\sim \refmeasure.\nonumber
\end{align}
Here, we have access to the weighted samples $\lbrace (w_i, X_i) \rbrace_{i=1}^N$ from $\pi$ independent of the considered SDEs. 
We denote the distributions of $U(t)$ and $\widetilde{U}(t)$ by $q_t=p_{T - t}$ and $\widetilde{q}_t$, respectively, and similarly denote the true and approximate re-sample by $U(T)$ and $\widetilde{U}(T)$.
Clearly, there are two sources of errors: 1) the ensemble score approximation $\grad\log p_{t} \approx s_N$, and 2) the initial distribution approximation $p_T \approx \refmeasure$ due to finite time horizon $T$. 
For clarity, we here focus on continuous-time analysis, although discretisation errors may be considered within a similar framework~\citep[see e.g.,][]{Lord2014}. 
We aim to analyse the geometric distance between the resampling distribution $\widetilde{q}_t$ and the target $\pi$ under these errors in relation to $t$ and $N$. 

Unless otherwise needed, for any parameter $T>0$ we assume the usual textbook linear growth and Lipschitz conditions on $X$ and $U$ so that a strong solution exists and $\int^t_0 \abs{X(\tau)} \diff \tau$ has finite variance, see, for instance, \citet[][pp. 289]{Karatzas1991} or~\citet[][Thm. 5.2.1]{Oksendal2003}. 
This also ensures a smooth transition density $p_{t|\tau}$ for all $0\leq \tau < t\leq T $ so that the ensemble score $s_N$ and its gradient in Equation~\eqref{equ:ensemble-score} are pointwise well defined. 
We also assume the existence of the reverse process in the sense of~\citet{Anderson1982}, i.e., the reversal solves the same Kolmogorov forward equation in reverse time, although the established conditions for this are still implicit~\citep[see, e.g.,][]{Haussmann1986, Millet1989}.
Denote the Wasserstein distance by $\mathsf{W}_l^l(p, q) = \inf_{\gamma \in\Gamma(p, q)}\expec{\abs{X - Y}^l}$ for $(X, Y) \sim \gamma$, where $\Gamma$ is the set of all couplings of $(p, q)$. 
We say a distribution $\nu$ is $z$-strongly log-concave if
\begin{equation*}
    \innerpbig{\grad\log \nu(x) - \grad\log \nu(x'), x-x'} \leq -z \, \abs{x - x'}^2,
\end{equation*}
and we denote it by $\nu \preceq z$. 
The assumptions that we globally use are as follows. 

\begin{assumption}[Diffusion conditions]
    \label{assumption:diffusion}
    There exist positive constants $2 \, C_p < C_{\mathrm{ref}} < 2 \, C_{\mathrm{ref}}^- + 2 \, C_p$ such that
    \begin{equation*}
        \begin{split}
            \absbig{\grad\log \refmeasure(x) - \grad\log \refmeasure(x')} &\leq C_{\mathrm{ref}} \, \abs{x - x'}, 
        \end{split}
    \end{equation*}
    for all $x, x'\in\R^d$, $\refmeasure \preceq C_{\mathrm{ref}}^-$, and $p_t \preceq C_p$ for all $t\geq 0$.
\end{assumption}

\begin{assumption}[Ensemble score condition] There exist a constant $r>0$ and a positive non-increasing smooth function $t\mapsto C_{e}(t)$, such that
    \label{assumption-score-approx}
    \begin{equation*}
        \sup_{x\in\R^d} \expecbig{\abs{\grad\log p_t(x) - s_N(x, t)}^2}^{1/2} \leq \frac{C_e(t)}{N^r}.
    \end{equation*}
    for all $t>0$, where $\mathbb{E}$ takes on the weighted samples.
\end{assumption}
Recall that the ensemble score defined in Equation~\eqref{equ:ensemble-score} is exactly a self-normalised importance sampling, where $\pi$ is the proposal, $\pi(x_0 \cond x_t) \propto p_{t|0}(x_t \cond x_0) \, \pi(x_0)$ is the target, and $\grad\log p_{t|0}(x_t \cond x_0)$ is the test function. 
The condition in Assumption~\ref{assumption-score-approx} is thus akin to non-asymptotic variance bounds of importance sampling, and this has been well established by, for example,~\citet{Agapiou2017} and~\citet[][Thm. 8.5]{ChopinBook2020}. 
Typically, the Monte Carlo order is $r = 1\,/\,2$. 
Since the ensemble score may be unbounded as $t\to0$, we allow $C_e(t)$ to depend on $t$ without further imposing a specific decay rate. 
Similar assumptions have been used in~\citet[][A3]{DeBortoli2022convergence} and~\citet{DeBortoli2025convergence}. 
We have the following results. 

\begin{proposition}
    \label{proposition:main}
    Under Assumptions~\ref{assumption:diffusion} and~\ref{assumption-score-approx} we have 
    \begin{equation*}
        \begin{split}
            \mathsf{W}_2^2(\widetilde{q}_t, q_t) &\leq \mathsf{W}_2^2(p_T, \refmeasure) \, \expp^{b^2 \,(C_{\mathrm{ref}} - 2\, C_p) \, t}  \\
            &\quad+ 2 \, b^2 \, N^{-r} \overline{C}_e(t, T),
        \end{split}
    \end{equation*}
    for all $t\in[0, T)$, where $\overline{C}_e(t, T)$ is defined in Appendix~\ref{app:proof-main}.
\end{proposition}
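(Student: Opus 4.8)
The plan is to bound $\mathsf{W}_2^2(\widetilde q_t, q_t)$ by building an explicit \emph{synchronous} coupling of the two reversals and controlling the growth of the resulting pathwise discrepancy. Exploiting the footnote convention that the ideal reversal~\eqref{equ:rev} and its ensemble approximation~\eqref{equ:diffres-rev} are driven by the \emph{same} Brownian motion $W$, I would start the pair from an optimal coupling of their initial laws, i.e.\ draw $(U(0), \widetilde U(0))$ from a coupling of $(p_T, \refmeasure)$ attaining $\mathsf{W}_2^2(p_T, \refmeasure)$, and then propagate both SDEs under the shared $W$. Writing $D(t) \coloneqq \widetilde U(t) - U(t)$, the diffusion terms cancel, so $D$ solves a pathwise random ODE with no martingale part,
\begin{equation*}
\begin{split}
\diff D(t) = b^2 \bigl[ &-(\grad\log\refmeasure(\widetilde U) - \grad\log\refmeasure(U)) \\
&+ 2\,(s_N(\widetilde U, T-t) - \grad\log p_{T-t}(U)) \bigr]\diff t,
\end{split}
\end{equation*}
and consequently $\diff\abs{D(t)}^2 = 2\,\innerp{D(t), \diff D(t)}$ has no stochastic integral, which is what makes the one-sided analysis clean.

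The core estimate comes from splitting the score mismatch evaluated at the two different arguments through the telescoping decomposition
\begin{equation*}
\begin{split}
&s_N(\widetilde U, T-t) - \grad\log p_{T-t}(U) \\
&\quad = \underbrace{\bigl(s_N(\widetilde U, T-t) - \grad\log p_{T-t}(\widetilde U)\bigr)}_{E} \\
&\quad\quad + \underbrace{\bigl(\grad\log p_{T-t}(\widetilde U) - \grad\log p_{T-t}(U)\bigr)}_{\text{score increment}},
\end{split}
\end{equation*}
so that $\tfrac12\,\tfrac{\diff}{\diff t}\abs{D}^2 = \innerp{D, \dot D}$ breaks into three inner products. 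For the reference contribution I would invoke the Lipschitz bound of Assumption~\ref{assumption:diffusion} (the reversed drift $-\grad\log\refmeasure$ is expansive, so strong log-concavity gives only the wrong-signed one-sided bound and one must use $\abs{\grad\log\refmeasure(\widetilde U)-\grad\log\refmeasure(U)}\le C_{\mathrm{ref}}\abs{D}$), yielding an upper bound $C_{\mathrm{ref}}\abs{D}^2$. For the score increment I would use strong log-concavity $p_{T-t}\preceq C_p$, which contributes the contractive $-2\,C_p\abs{D}^2$; together these give the one-sided Lipschitz rate $C_{\mathrm{ref}} - 2\,C_p$ that drives the exponential factor (the conditions $2\,C_p < C_{\mathrm{ref}} < 2\,C_{\mathrm{ref}}^- + 2\,C_p$ are exactly what make the product of this factor with the geometrically shrinking $\mathsf{W}_2^2(p_T,\refmeasure)$ useful at $t\approx T$). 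The error term $\innerp{D, E}$ I would handle by Cauchy--Schwarz together with Assumption~\ref{assumption-score-approx}, using $\expec{\abs{E}^2}^{1/2}\le C_e(T-t)\,N^{-r}$.

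Assembling and taking expectations gives a Bernoulli-type differential inequality for $m(t)\coloneqq\expec{\abs{D(t)}^2}$, of the form $\tfrac{\diff}{\diff t} m \le \text{(rate)}\cdot m + \text{(source)}\cdot N^{-r}\,C_e(T-t)\,\sqrt{m}$. The substitution $u=\sqrt{m}$ linearises it, and Grönwall's inequality integrates the homogeneous part into the exponential factor of the statement multiplying $\mathsf{W}_2^2(p_T,\refmeasure)$, while the inhomogeneous part integrates into the accumulated error $\overline C_e(t,T)$, precisely the Grönwall-kernel-weighted integral of $C_e(T-\tau)$ recorded in Appendix~\ref{app:proof-main}; I would leave the exact constant bookkeeping to that appendix. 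To finish, since $(\widetilde U(t), U(t))$ is by construction a (generally sub-optimal) coupling of $\widetilde q_t$ and $q_t$, we have $\mathsf{W}_2^2(\widetilde q_t, q_t) \le \expec{\abs{D(t)}^2}$, and the optimal initial coupling turns $\expec{\abs{D(0)}^2}$ into $\mathsf{W}_2^2(p_T,\refmeasure)$, delivering the claimed bound for $t\in[0,T)$.

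The step I expect to be the main obstacle is the control of the error term, because the approximate trajectory $\widetilde U$ is itself a functional of the very weighted samples $\lbrace (w_i, X_i)\rbrace$ that define $s_N$ and hence $E$; the randomness of $\widetilde U(t)$ and of $E(\widetilde U(t),\cdot)$ is therefore coupled, and one cannot naively factor the expectation $\expec{\abs{E(\widetilde U(t), T-t)}^2}$. This is exactly why Assumption~\ref{assumption-score-approx} is stated \emph{uniformly} in $x$ (a supremum over $x\in\R^d$ outside the expectation over samples): the pointwise bound $C_e(T-t)\,N^{-r}$ then applies at the sample-dependent location $\widetilde U(t)$ regardless of where the trajectory lands. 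A secondary technical point is the non-uniform blow-up of $C_e(t)$ as $t\to0$ (equivalently $T-t$ small), so one must check that the kernel-weighted integral defining $\overline C_e(t,T)$ remains finite; restricting to $t\in[0,T)$ as in the statement is what keeps this integrable.
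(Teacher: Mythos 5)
Your proposal follows essentially the same route as the paper's own proof in Appendix~\ref{app:proof-main}: a synchronous coupling under the shared Brownian motion with an optimal initial coupling of $(p_T, \refmeasure)$, the identical telescoping of the drift mismatch into a reference term (bounded via the Lipschitz constant $C_{\mathrm{ref}}$, since, as you note, log-concavity has the wrong sign there), a score-increment term contracted by $p_{T-t} \preceq C_p$, and a Cauchy--Schwarz-controlled approximation error, closed by a nonlinear Gr\"{o}nwall inequality (the paper cites \citet[Thm.~21]{Dragomir2003}, which is exactly your $\sqrt{m}$-substitution argument). Your closing caveat about the dependence between $\widetilde{U}(t)$ and the weighted samples defining $s_N$ is the same delicate point the paper dispatches with its remark that the samples are independent of the SDE system, so your proposal is correct at the same level of rigor as the paper's proof.
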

\begin{proof}
    See Appendix~\ref{app:proof-main}.
\end{proof}
Proposition~\ref{proposition:main} quantifies how the two errors due to score approximation and $p_T \approx \refmeasure$ contribute to the resampling error. 
For a fixed $t$, the score error diminishes as $N\to\infty$ at the same rate $r$ but the resampling error has an irreducible bias due to $\mathsf{W}_2(p_T, \refmeasure)$.
Conversely, if we fix $N$ while increasing $t$, the error bound increases exponentially in $t$, since the SDE may accumulate the score error over time. 
This suggests that $N$ should increase fast enough as a function of $t$ to compensate the two errors. 
A more specific result is thus obtained as follows.

\begin{corollary}
    \label{corollary:asymp}
    Choose $N^{r-c} = \overline{C}_e(t, T)$ for any constant $0<c<r$, then 
    \begin{equation*}
        \begin{split}
            \mathsf{W}_2^2(\widetilde{q}_t, q_t) &\leq 2 \, b^2 \, N^{-c} \\
            &\quad+ \expp^{b^2 \,(C_{\mathrm{ref}} - 2\, C_p) \, t - 2 \, b^2 \, C_\mathrm{ref}^- \, T} \, \mathsf{W}_2^2(\pi, \refmeasure). 
        \end{split}
    \end{equation*}
    Furthermore, there exists a linear choice $t\mapsto T(t)$ such that $\lim_{t\to\infty} \mathsf{W}_2(\widetilde{q}_t, q_t) = 0$.
\end{corollary}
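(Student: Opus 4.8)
The plan is to obtain the Corollary directly from Proposition~\ref{proposition:main}: the only quantity not already bounded there is the ``static'' initialisation factor $\mathsf{W}_2^2(p_T, \refmeasure)$, so I would first control it by a contraction estimate, then substitute and insert the prescribed schedule for $N$ and $T$.

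For the contraction estimate I would exploit the strong log-concavity $\refmeasure \preceq C_{\mathrm{ref}}^-$ granted by Assumption~\ref{assumption:diffusion}. Couple two copies of the forward SDE~\eqref{equ:fwd} driven by the \emph{same} Brownian motion, one started at $X(0)\sim\pi$ and one at $X'(0)\sim\refmeasure$; since $\refmeasure$ is stationary for~\eqref{equ:fwd} we have $X'(t)\sim\refmeasure$ and $X(t)\sim p_t$ for all $t$. The diffusion term cancels in the difference, so
\begin{equation*}
    \begin{split}
        \frac{\diff}{\diff t} \abs{X(t) - X'(t)}^2 &= 2 \, b^2 \innerpbig{\grad\log\refmeasure(X(t)) - \grad\log\refmeasure(X'(t)), \, X(t) - X'(t)} \\
        &\leq -2 \, b^2 \, C_{\mathrm{ref}}^- \abs{X(t) - X'(t)}^2 .
    \end{split}
\end{equation*}
Gr\"{o}nwall's inequality, taking expectations, and choosing the initial pair to be an optimal $\mathsf{W}_2$-coupling of $(\pi, \refmeasure)$ then give $\mathsf{W}_2^2(p_T, \refmeasure) \leq \expp^{-2 \, b^2 \, C_{\mathrm{ref}}^- \, T} \, \mathsf{W}_2^2(\pi, \refmeasure)$.

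Substituting this into Proposition~\ref{proposition:main} produces the first term with the combined exponent $b^2(C_{\mathrm{ref}} - 2 \, C_p)\,t - 2 \, b^2 \, C_{\mathrm{ref}}^-\,T$, while the choice $N^{r-c} = \overline{C}_e(t, T)$ collapses the score term through $N^{-r}\,\overline{C}_e(t, T) = N^{-c}$; together these give the displayed inequality. For the limiting claim I would take the linear schedule $T(t)=\kappa\,t$ and extract the admissible range of $\kappa$ from the ordering $2\,C_p < C_{\mathrm{ref}} < 2\,C_{\mathrm{ref}}^- + 2\,C_p$ in Assumption~\ref{assumption:diffusion}. The right-hand inequality is exactly $(C_{\mathrm{ref}} - 2\,C_p)/(2\,C_{\mathrm{ref}}^-) < 1$ and the left-hand one makes this threshold positive, so any $\kappa>1$ simultaneously respects the constraint $t<T$ required by Proposition~\ref{proposition:main} and drives the first-term exponent $b^2\bigl[(C_{\mathrm{ref}} - 2\,C_p) - 2\,C_{\mathrm{ref}}^-\,\kappa\bigr]\,t \to -\infty$, so the first term vanishes as $t\to\infty$.

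The main obstacle is the second term: $2\,b^2\,N^{-c}\to 0$ requires $N\to\infty$, equivalently $\overline{C}_e(t, \kappa\,t)\to\infty$ along the schedule, and this hinges on the explicit form of $\overline{C}_e$ constructed in Appendix~\ref{app:proof-main}. Here the logic is reversed from the usual one: a \emph{larger} accumulated score budget forces the prescribed $N=\overline{C}_e^{1/(r-c)}$ to grow, which is precisely what sends $N^{-c}$ to zero. I would therefore verify that $\overline{C}_e(t,\kappa t)$ indeed diverges --- the positive rate $C_{\mathrm{ref}} - 2\,C_p>0$ appearing in its Gr\"{o}nwall exponential weight makes this plausible even though $C_e$ is non-increasing --- and, should its growth interact with the first-term decay, enlarge the lower bound on $\kappa$ (still $>1$) so that both contributions tend to zero together. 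Since Assumption~\ref{assumption:diffusion} leaves an entire interval of feasible $\kappa$, such a linear $t\mapsto T(t)$ exists.
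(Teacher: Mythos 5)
Your derivation of the displayed inequality is correct and matches the paper's Appendix~\ref{app:proof-corollary}: the synchronous-coupling argument you give (same Brownian motion, stationarity of $\refmeasure$, strong log-concavity, Gr\"{o}nwall, optimal initial coupling) is precisely the standard proof of the contraction estimate $\mathsf{W}_2^2(p_T, \refmeasure) \leq \expp^{-2\,b^2\,C_{\mathrm{ref}}^-\,T}\,\mathsf{W}_2^2(\pi, \refmeasure)$ that the paper simply imports by citation, and the substitution together with $N^{-r}\,\overline{C}_e(t,T) = N^{-c}$ is identical to the paper's.

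The gap is in the ``furthermore'' part, and it sits exactly at the step you defer. Your schedule $T(t) = \kappa\,t$ with $\kappa > 1$ goes in the wrong direction, and the verification you postpone --- that $\overline{C}_e(t, \kappa\,t)$ diverges --- fails in general. Writing $z = b^2\,(C_{\mathrm{ref}} - 2\,C_p) > 0$ and substituting $u = t - \tau$ in the definition from Appendix~\ref{app:proof-main},
\begin{equation*}
    \overline{C}_e(t, \kappa\,t) = \int_0^t C_e\bigl((\kappa - 1)\,t + u\bigr)\,\expp^{z\,u}\diff u,
\end{equation*}
so for $\kappa > 1$ the integrand itself degrades as $t$ grows. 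Assumption~\ref{assumption-score-approx} only requires $C_e$ to be positive and non-increasing; taking $C_e(s) = \expp^{-\lambda s}$ with $\lambda > z$ --- exponential decay being exactly the Gaussian-reference case emphasised in Remark~\ref{remark:gaussian} --- gives $\overline{C}_e(t, \kappa\,t) \leq \expp^{-\lambda\,(\kappa-1)\,t}/(\lambda - z) \to 0$, so the prescribed $N(t) = \overline{C}_e(t, \kappa\,t)^{1/(r-c)}$ tends to \emph{zero} and the term $2\,b^2\,N^{-c}$ blows up rather than vanishes. Your proposed fix of enlarging $\kappa$ makes this strictly worse, since it accelerates the decay of $C_e((\kappa-1)\,t + u)$.

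The paper resolves the competition you correctly identified by going the opposite way: it requires $0 < T'(t) \leq 1$, with the affine example $T(t) = t + \delta$, $\delta > 0$. Then $\overline{C}_e(t, t+\delta) = \int_0^t C_e(\delta + u)\,\expp^{z\,u}\diff u$ is an integral of a fixed, $t$-independent positive integrand over a growing interval, hence automatically increasing in $t$ with no rate assumption on $C_e$ whatsoever --- this is what the paper's integration-by-parts derivative identity checks, where your choice $T'(t) = \kappa > 1$ would flip the sign of the $(1 - T'(t))$ factor and break the monotonicity argument. Meanwhile the exponent $b^2\,(C_{\mathrm{ref}} - 2\,C_p - 2\,C_{\mathrm{ref}}^-)\,t - 2\,b^2\,C_{\mathrm{ref}}^-\,\delta$ still tends to $-\infty$ by the right-hand inequality of Assumption~\ref{assumption:diffusion}, the same inequality you used. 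In short, ``linear'' in the corollary means slope at most one (affine with unit slope), not proportional with slope exceeding one; the constraint $T(t) > t$ should be met through the offset $\delta$, not through the slope.
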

\begin{proof}
    See Appendix~\ref{app:proof-corollary}.
\end{proof}
Corollary~\ref{corollary:asymp} shows that the diffusion resampling converges as $t\to\infty$, noting that $N$ is a function of $t$.  
The required assumptions are rather mild, and notably, we did not assume any specific decaying rate on $C_e$. 
The result also suggests that we should choose the reference $\refmeasure$ close to $\pi$, and that the spectral gap of $\pi$ is not too large.
There is likely an optimal $b$ (Lambert function) that minimises the error bound but its explicit value may be hard to know prior to running the algorithm. 
One may also note that the factor $N^{-c}$ seems suboptimal, as it becomes slower than the importance sampling rate $N^{-r}$. 
The factor can be further tightened, as explained in Appendix~\ref{app:proof-corollary}.
\begin{remark}
	\label{remark:gaussian}
    When choosing the reference $\refmeasure$ to be a Gaussian, $t\mapsto\overline{C}_e(t, T(t))$ can grow no faster than a polynomial in any of the two arguments, since here $t\mapsto C_e(t)$ decays exponentially. 
    See also~\citet[][Remark 3]{DeBortoli2025convergence}. 
\end{remark}
Remark~\ref{remark:gaussian} show that the required sample size $N$ needs to grow only \emph{polynomially} in $T$ (or equivalently $t$). 
This result improves upon that of~\citet{Corenflos2021} whose sample size scales \emph{exponentially} in their entropy regularisation term $1 \, / \, \varepsilon$, and hence potentially implying cheaper computation. 

A natural follow-up question is whether we can improve the error bound by eliminating the error due to $p_T \approx \refmeasure$. 
This may be achieved by using the technique in~\citet{Corenflos2024FBS, Zhao2024rsta}, where a forward-backward Gibbs chain $(p_{T | 0}, q_{T|0})$ is constructed to avoid explicit sampling from the marginal $p_T$. 
This essentially transforms the error $p_T \approx \refmeasure$ into a statistical correlation of the chain. 
However, since Markov chains typically converge geometrically as well, the required number of chain steps is likely comparable to the diffusion time $T$~\citep{Meyn2009, Andrieu2024}. Furthermore, in practice we only have access to the approximate conditional $\widetilde{q}_{T|0}$. 
Nonetheless, we may still benefit computationally from the fact that a fixed $T$ can allow for moderate discretisation. 

\section{Experiments}
\label{sec:experiments}
In this section we empirically validate our method in both synthetic and real settings. 
The goal is fourfold. 
First, we show that the diffusion resampling, regardless of the differentiability, is a useful resampling method and is at least as good as commonly used resampling methods (Section~\ref{sec:gm-importance-resampling}). 
Second, we perform ablation experiments to test if the diffusion resampling provides a better estimate of the log marginal likelihood function (Section~\ref{sec:exp-lgssm}). 
Third and finally, we apply the diffusion resampling for learning neural network-parametrised SSMs using gradient-based optimisation in complex systems (Sections~\ref{sec:lokta} and~\ref{sec:real-experiment}).
Within all these experiments, we focus on comparing to optimal transport~\citep[OT,][]{Corenflos2021}, Gumbel-Softmax~\citep{Jang2017categorical}, and Soft resampling~\citep{Karkus2018}. 
Our implementations are publicly available at \url{https://github.com/zgbkdlm/diffres}.

\subsection{Gaussian mixture importance resampling}
\label{sec:gm-importance-resampling}
Consider a target distribution $\pi(x) \propto \phi(x) \, p(y \cond x)$ with Gaussian mixture prior $\phi(x) = \sum_{i=1}^c \omega_i \, \mathrm{N}(x; m_i, v_i)$ and likelihood $p(y \cond x) = \mathrm{N}(y; H \, x, \Xi)$. 
The posterior distribution $\pi$ is also a Gaussian mixture available in closed form (see Appendix~\ref{app:exp-gm}).
We use the prior as the proposal to generate importance samples, and then apply the resampling methods to obtain resampled particles.
These are compared to samples directly drawn from $\pi$, using the sliced 1-Wasserstein distance (SWD) and resampling variance for the mean estimator. 
We run experiments 100 times independently with sample size $N=10,000$.

\begin{table}[t!]
    \caption{Sliced 1-Wasserstein distance (SWD, scaled by $10^{-1}$) and resampling variance for the mean estimator (scaled by $10^{-2}$), for the Gaussian mixture resampling experiment in Section~\ref{sec:gm-importance-resampling}.}
    \label{tbl:gm-compare}
    \centering
    \wrapbox{.99}{%
        \begin{tabular}{@{}lll@{}}
        \toprule
        Method                 & SWD    & Resampling variance \\ \midrule
        Diffusion ($T=1$, $K=8$)   & $1.64 \pm 0.35$          & $6.87 \pm 5.87$                        \\
        Diffusion ($T=3$, $K=128$)    & $\mathbf{0.80} \pm 0.21$ & $3.74 \pm 2.99$                        \\
        OT ($\varepsilon=0.3$) & $0.84 \pm 0.22$          & $3.42 \pm 3.26$                        \\
        OT ($\varepsilon=0.6$) & $0.97\pm 0.21$           & $\mathbf{3.41} \pm 3.29$                        \\
        OT ($\varepsilon=0.9$) & $1.14 \pm 0.20$          & $\mathbf{3.41} \pm 3.29$                        \\
        Multinomial            & $0.82 \pm 0.25$          & $3.78 \pm 4.43$                        \\
        Gumbel-Softmax ($0.1$) & $1.40 \pm 0.24$          & $3.92 \pm 3.74$                        \\
        Soft (0.9)             & $0.83 \pm 0.24$          & $3.75 \pm 3.77$                      \\ \bottomrule
        \end{tabular}
    }
\end{table}

Table~\ref{tbl:gm-compare} shows that the diffusion resampling at $K=128$ gives the best SWD, followed by the baseline multinomial resampling and OT ($\varepsilon=0.3$). 
We also observe that the diffusion resampling performance highly depends on the choice of integrator and discretisation, which at $K=8$ is not superior than OT or multinomial. 
In terms of resampling variance for estimating the posterior mean, diffusion resampling is not as performant as OT but is still better than the baseline multinomial. 
OT seems to have stable resampling variance robust in $\varepsilon$. 
Gumbel-Softmax and the soft resampling methods are not comparable to diffusion or OT.
Further results are given in Appendix~\ref{app:exp-gm}.

\begin{figure}[t!]
    \centering
    \includegraphics[width=.99\linewidth]{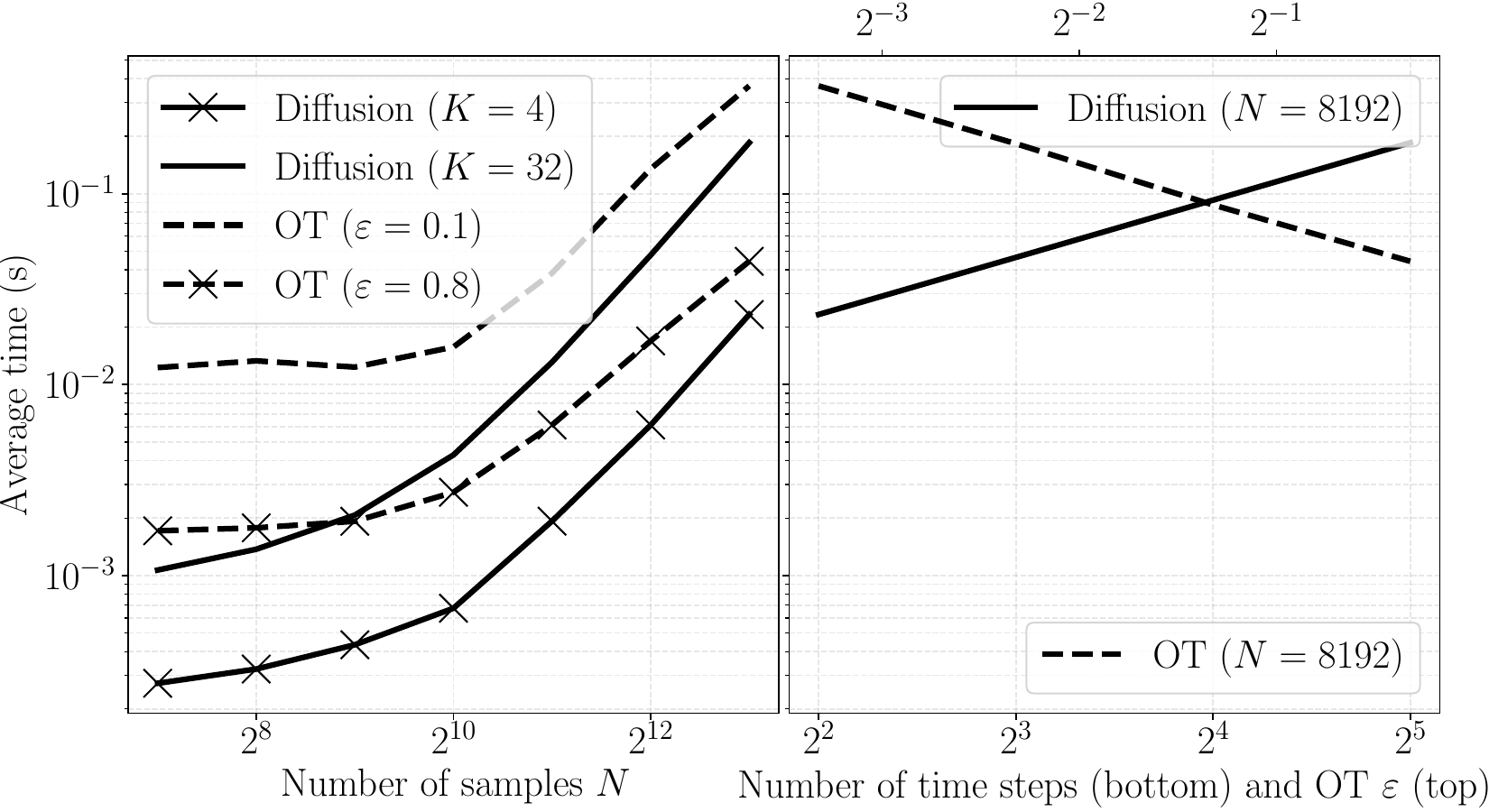}
    \caption{Average running times of diffusion resampling and OT.}
    \label{fig:time}
\end{figure}

The time costs of diffusion resampling and OT are shown in Figure~\ref{fig:time}, and we have two important observations. 
The left figure shows that both methods roughly scale polynomially in the sample size $N$, where diffusion $(K=4)$ is better than OT $(\varepsilon=0.8)$. 
On the right side of the figure, both methods scale linearly in their parameters $K$ and $1 \, / \, \varepsilon$, which aligns with the summary in Table~\ref{tbl:app-compare-all}, and the cross point shows that they have a similar computation at $K \approx 6 \, / \, \varepsilon$ when $N=8,192$. 
We also observe a trend that when $N$ increases, the cross point moves left, showing that diffusion scales better than OT in $N$. 
Details are given in Appendix~\ref{app:time}. 

\subsection{Linear Gaussian SSM}
\label{sec:exp-lgssm}
We now consider particle filtering and compare different resampling methods on the model
\begin{equation}
    \begin{split}
        Z_j \cond Z_{j-1} &\sim \mathrm{N}(z_j ; \theta_1 \, z_{j-1}, I_d), \quad Z_0 \sim \mathrm{N}(0, I_d), \\
        Y_j \cond Z_j &\sim \mathrm{N}(y_j ; \theta_2 \, z_j, 0.5 \, I_d)
    \end{split}
\end{equation}
with parameters $\theta_1=0.5$ and $\theta_2=1$. 
For each experiment, we generate a measurement sequence with 128 time steps and apply a Kalman filter to compute the true filtering solution and log-likelihood $L(\theta_1, \theta_2)$ evaluated at Cartesian $\Theta = [\theta_1 - 0.1, \theta_1 + 0.1]\times [\theta_2 - 0.1, \theta_2 + 0.1]$.
Three errors are evaluated: 1) the error of log-likelihood function estimate $\norm{L - \widehat{L}}^2_2 \coloneqq \int_\Theta (L(\theta_1, \theta_2) - \widehat{L}(\theta_1, \theta_2))^2 \diff \theta_1\diff\theta_2$, where $\widehat{L}$ stands for the estimate by a particle filter with resampling; 2) the KL divergence between the particle samples and the true filtering solution; 3) the estimated parameters $\widehat{\theta}$ by L-BFGS compared to the truth under Euclidean norm $\norm{\theta - \widehat{\theta}}_2$. 
All results are averaged over 100 independent runs, with 32 particles used. 
For details see Appendix~\ref{app:lgssm}.

\begin{figure*}[t!]
    \centering
    \includegraphics[width=.99\linewidth]{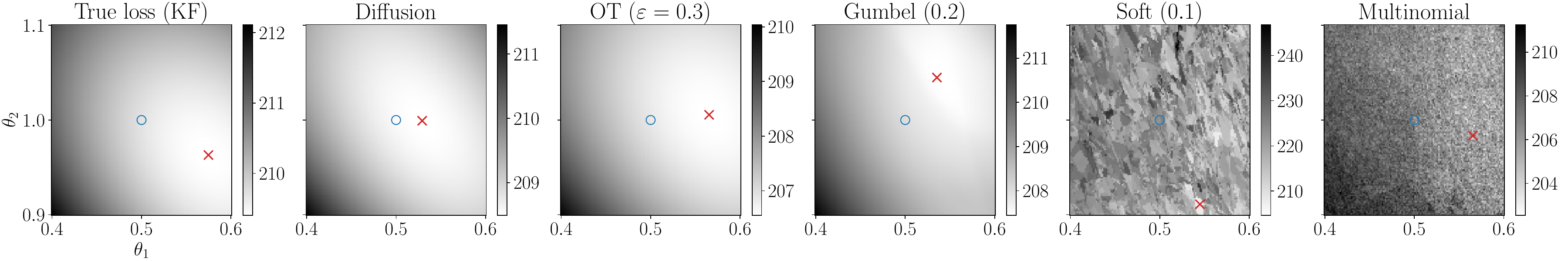}
    \caption{Visualisation of the particle filter estimated log-likelihoods using different resampling methods associated with the LGSSM experiment. 
    Blue circle \textcolor{tabblue}{$\circ$} stands for the true parameter, while the red cross \textcolor{tabred}{$\times$} represents the minimum of the estimated loss function. 
    We see in this example that the diffusion resampling gives an estimate closest to both the true parameter and the true loss minimum. }
    \label{fig:lgssm-loss}
\end{figure*}

\begin{table}[t!]
    \caption{The errors of loss function, filtering (scaled by $10^{-1}$), and parameter estimation (scaled by $10^{-1}$) for Section~\ref{sec:exp-lgssm}. 
    Divergent NaN results are explained in Appendix~\ref{app:lgssm}.}
    \label{tbl:lgssm-compare}
    \centering
    \wrapbox{.99}{%
        \begin{tabular}{@{}llll@{}}
        \toprule
        Method                   & $\norm{L - \widehat{L}}_2$ & Filtering KL          & $\norm{\theta - \widehat{\theta}}_2$ \\ \midrule
        Diffusion ($T=1$, $K=4$) & $2.61 \pm 2.08$          & $4.94 \pm 6.92$          & $\mathbf{1.28} \pm 0.70$           \\
        Diffusion ($T=2$, $K=8$) & $2.61 \pm 1.89$          & $4.40 \pm 4.78$          & $1.29 \pm 0.78$                    \\
        Diffusion ($T=3$, $K=8$) & $\mathbf{2.55} \pm 1.89$ & $\mathbf{4.26} \pm 4.49$ & $1.58 \pm 0.75$                    \\
        OT ($\varepsilon=0.4$)   & $2.64 \pm 2.13$          & $5.07 \pm 6.21$          & $1.53 \pm 1.16$                    \\
        OT ($\varepsilon=0.8$)   & $2.68 \pm 2.16$          & $5.07 \pm 5.70$          & $1.58 \pm 1.22$                    \\
        OT ($\varepsilon=1.6$)   & $2.75 \pm 2.20$          & $5.11 \pm 5.17$          & $1.49 \pm 0.97$                    \\
        Multinomial              & $2.80 \pm 1.84$          & $5.49 \pm 6.87$          & NaN                                \\
        Gumbel-Softmax (0.1)     & $2.79 \pm 2.14$          & $4.83 \pm 5.76$          & NaN                                \\
        Soft (0.9)               & $2.85 \pm 1.80$          & $4.66 \pm 5.68$          & NaN                                \\ \bottomrule
        \end{tabular}
    }
\end{table}

The results in Table~\ref{tbl:lgssm-compare} clearly show that the diffusion approach significantly outperforms other resampling methods across all the three metrics with less variance. 
Importantly, even without considering the differentiability, the diffusion resampling already gives the best filtering estimate, showing that it is a useful resampling method in itself. 
This is likely due to the fact that in diffusion resampling, the reference is given by the current posterior samples instead of the predictive samples like OT.
In contrast to the Gaussian mixture experiment where the diffusion needs larger $K$ to be comparable to OT, the needs for fine discretisation is moderate for this model. 
Also based on Figure~\ref{fig:time}, the diffusion resampling achieves substantially better results than OT when evaluated at the same level of computational cost.

We also observe that almost all the resampling methods encounter divergent parameter estimations to some extent (see more statistics in Appendix~\ref{app:lgssm}). 
In particular, the divergence of Soft and Gumbel-Softmax is too significant to give a meaningful result, and these entries are thus marked as NaN. 
The reason is due to the quasi-Newton optimiser L-BFGS-B which heavily depends on stable and accurate gradient estimates~\citep{Xie2020}. 
This shows that the diffusion and OT approaches are more applicable for generic optimisers, such as second-order ones. 
Therefore, in the next sections we will focus on comparison using first-order gradient-based methods.

\subsection{Prey-predator model}
\label{sec:lokta}
Consider a more challenging Lokta--Voltera model 
\begin{equation}
    \begin{split}
        \diff C(t) &= C(t) \, (\alpha - \beta \, R(t)) \diff t + \sigma \, C(t) \diff W_{1}(t),\\
        \diff R(t) &= R(t) \, (\zeta \, C(t) - \gamma) \diff t + \sigma \, R(t) \diff W_{2}(t), \\
        Y_j &\sim \mathrm{Poisson}\bigl(\lambda(C(t_j), R(t_j))\bigr),
    \end{split}
\end{equation}
where the configuration is detailed in Appendix~\ref{app:prey-predator}.
We generate data at time $t\in[0, 3]$ discretised by Milstein's method with 256 steps. 
We model the dynamics entirely by a neural network, without assuming known the dynamic structure. 
After the neural network has been learnt, we make 100 predictions from the learnt model and then compare them to a reference trajectory generated by the true dynamics. 
All results are averaged over 20 independent runs, with $N=64$ particles used. 
We have also compared to the REINFORCE framework implemented with a stopped gradient method by~\citet{Scibior2021differentiable}.

\begin{figure}[t!]
    \centering
    \includegraphics[width=.99\linewidth]{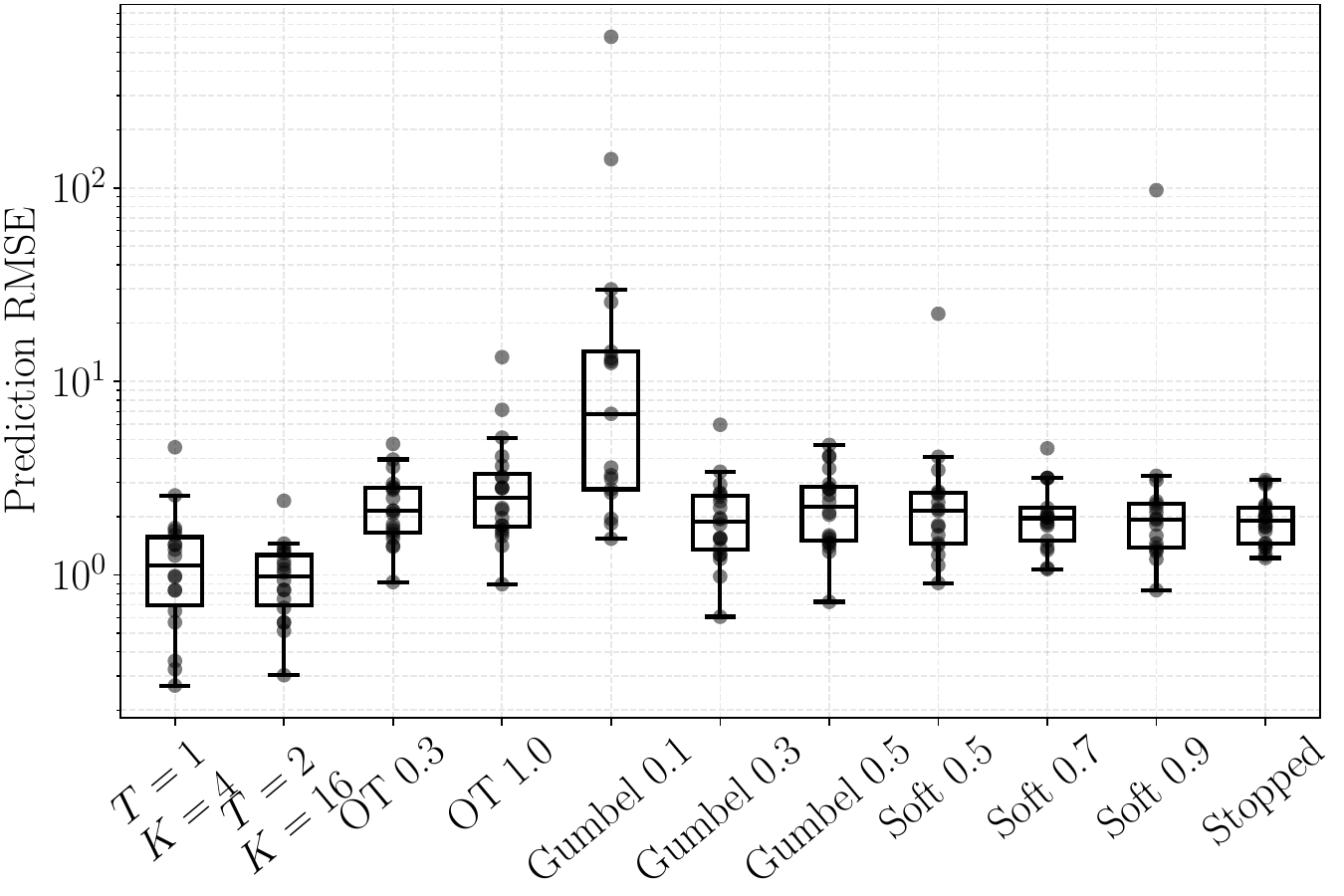}
    \caption{Root mean square errors of the learnt prey-predator models over independent runs (scatter points).}
    \label{fig:lokta-rmse}
\end{figure}

\begin{figure}[t!]
    \centering
    \includegraphics[width=.9\linewidth]{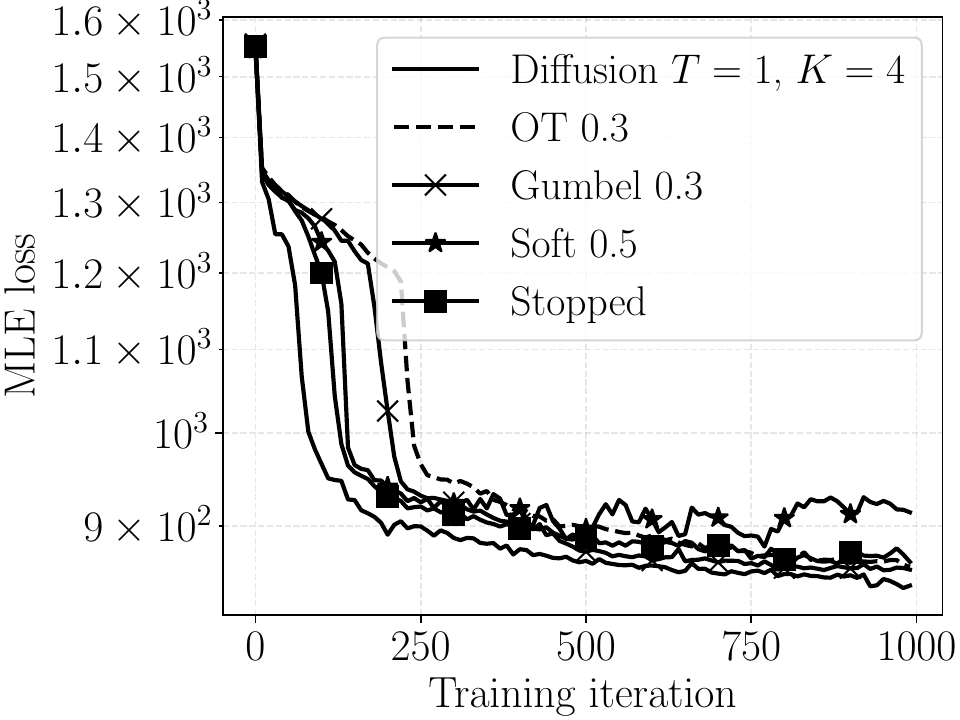}
    \caption{Loss traces (median over all runs) for training the prey-predator model. 
    The training enabled by the diffusion resampling achieves the lowest and stablest. }
    \label{fig:lokta-loss}
\end{figure}

Results are summarised in Figures~\ref{fig:lokta-rmse} and~\ref{fig:lokta-loss}. 
The first figure shows that the trained model using the diffusion resampling achieves the lowest prediction error and is significantly more stable than the baselines. 
The second figure aligns with the results from the first figure, showing that the training loss with diffusion resampling consistently lower bounds that of the other methods, and is more stable. 

\subsection{Vision-based pendulum dynamics tracking}
\label{sec:real-experiment}
Next, we consider the problem of learning the dynamics of a physical system from high-dimensional and structural image observations. 
We simulate pendulum dynamics described by a discrete-time SSM 
\begin{equation}
    \begin{split}
    Z^{(1)}_j &= Z^{(1)}_{j-1} + Z^{(2)}_{j-1} \, \Delta_j + \zeta_j^{(1)}, \\
    Z^{(2)}_j &= Z^{(2)}_{j-1} - \frac{g}{l} \sin\bigl(Z^{(1)}_{j-1} \bigr) \, \Delta_j + \zeta_j^{(2)}, \\
    Y_j \cond Z_j &\sim \mathrm{N}\bigl(r(Z_j), \sigma_{\text{obs}}^2 \, I_{32\times 32}\bigr),
    \end{split}
    \label{eq:pendulum_dynamics}
\end{equation}
where $Z_j \coloneqq \begin{bmatrix} Z^{(1)}_j & Z^{(2)}_j \end{bmatrix}$ represents the state defined by the angle and angular velocity, $\Delta_j$ is the discretisation step, and $\zeta_j \sim \mathcal{N}(0, \Delta_j \, \Lambda_{\zeta})$ is the process noise. 
The true observation function $r$ generates $32 \times 32$ snapshots of the pendulum. 
We generate a trajectory of observations $Y_{0:J}$ over $J=256$ steps. 
In contrast to Section~\ref{sec:lokta}, we learn both the underlying dynamics and the observation model, and we parametrise both the state transition, $Z_j = f_\theta(Z_{j-1}, \zeta_{j})$, and the decoder, $r \approx r_\phi$, by neural networks. 
Parameters $\theta$ and $\phi$ are learnt by minimising the negative log-likelihood estimated by the particle filter with resampling. 
For completeness, we run experiments in two regimes: the first uses adaptive resampling under a tempered observation likelihood, and the second is a more challenging setting with resampling at nearly every filtering step. 
The learnt models are evaluated using the average structural similarity index (SSIM) and peak
signal-to-noise ratio (PSNR) on their predicted image sequences. 

Our results demonstrate that diffusion resampling integrates effectively into this complex, high-dimensional SMC learning pipeline, enabling stable optimisation and achieving competitive performance relative to state-of-the-art differentiable resampling baselines.
Figure~\ref{fig:pendulum_grid} shows an accurate and high-fidelity mean image sequence reconstructed from a latent pendulum dynamics model and decoder learnt with diffusion resampling embedded in the optimisation process. 
The comparative results in Figure~\ref{fig:pendulum_SSIM_PSNR} show that, among the strongest configurations of each resampling class, diffusion resampling achieves mean SSIM and PSNR at least comparable to the baselines.  
See Appendix~\ref{app:pendulum} for further details and results. We also include additional large-scale experiments in Appendix~\ref{app:pbnn} and Appendix~\ref{app:weather}.
\begin{figure}[t!]
    \centering
    \includegraphics[width=.9\linewidth]{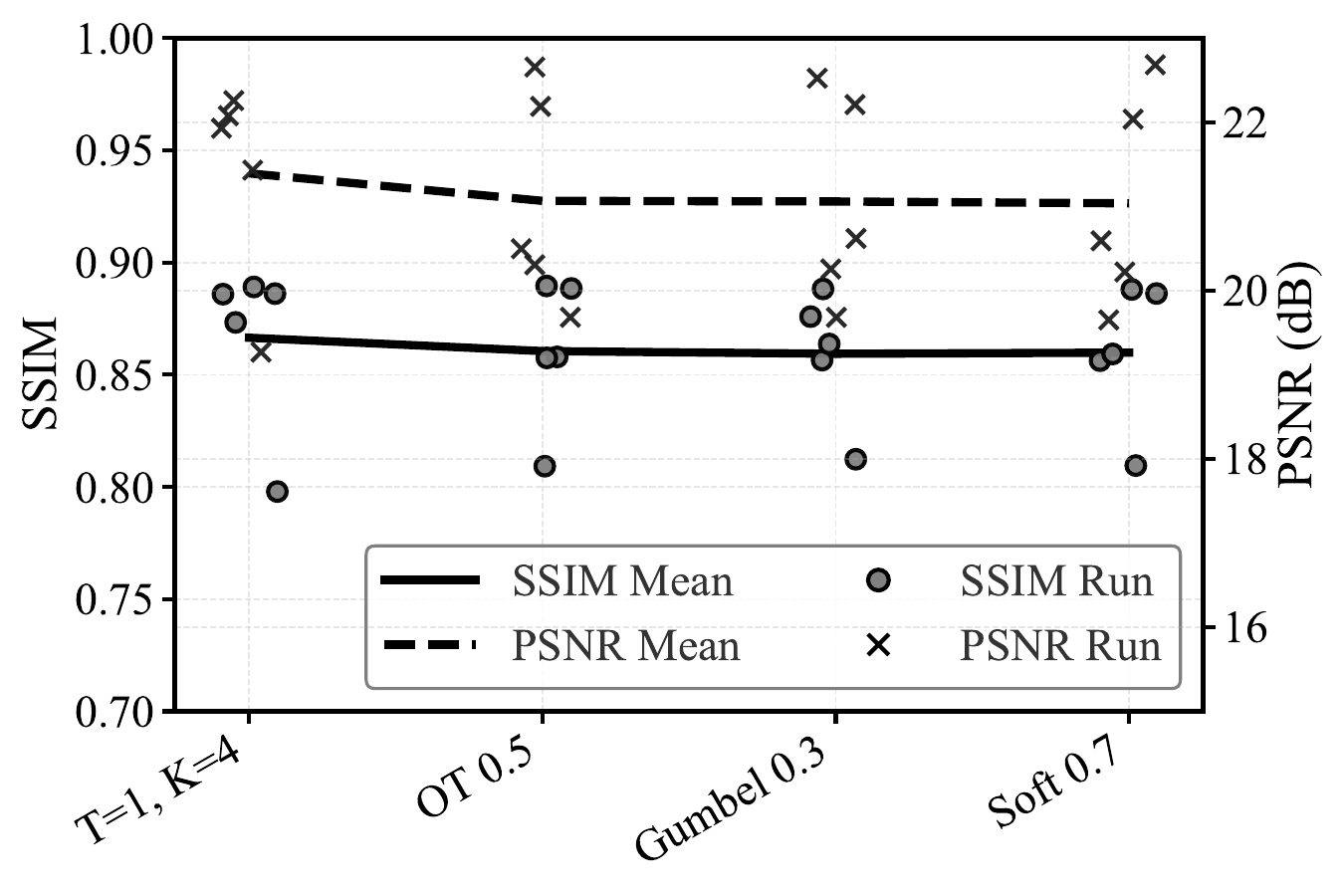}
    \caption{Mean prediction SSIM and PSNR for the best (by mean) model configuration of each resampler in the first experiment setting. 
    Individual runs are shown as scatter points.}
    \label{fig:pendulum_SSIM_PSNR}
\end{figure}

\begin{figure}[t!]
    \centering
    \includegraphics[width=.9\linewidth]{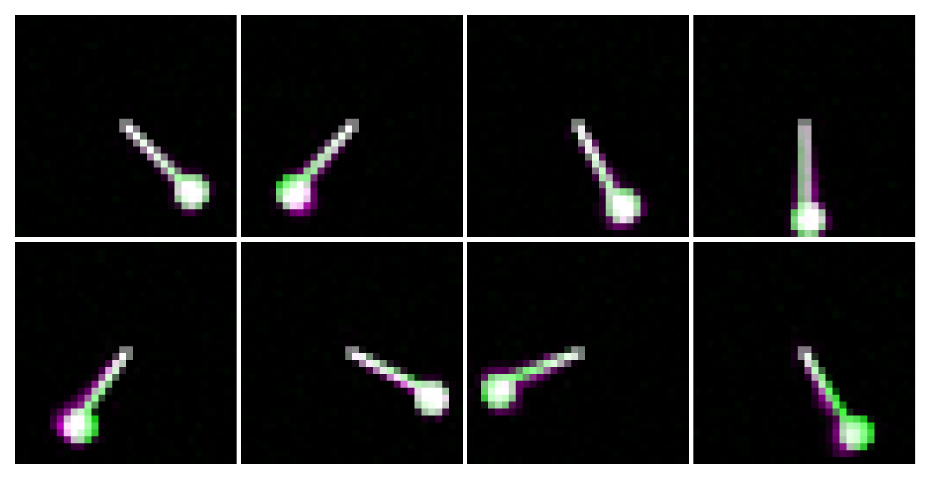}
    \caption{Qualitative comparison of the learnt pendulum dynamics. 
    The ground truth (green) is overlaid with model predictions (purple). White pixels indicate perfect alignment, while coloured regions highlight positional discrepancies (e.g., phase lag). 
    Snapshots are shown at eight evenly spaced time points over the 4 second simulation (read from left to right and top to bottom).
    }
    \label{fig:pendulum_grid}
\end{figure}

\section{Related work}
\label{sec:related-work}
In concurrent work, \citet{Gourevitch2026categorical} propose a differentiable reparameterisation of categorical distributions using stochastic interpolants. 
While our diffusion resampler similarly constitutes a differentiable relaxation of categorical sampling, the key departure is that their closed-form denoiser is derived under one-hot encoded samples $\lbrace X_i \rbrace_{i=1}^N$, whereas we focus on samples in $\R^d$. 
We further consider the case when the empirical measure approximates an underlying continuous distribution as $N\to\infty$, while \citet{Gourevitch2026categorical} mostly focus on discrete categorical distributions. 
There exists no exact reparametrisation producing the true expectation gradient \emph{under the categorical measure}, however, a consistent reparametrisation (e.g., ours) exists converging to the true expectation gradient \emph{under the underlying continuous measure}. 

Another work closely related to ours is that by~\citet{Wan2025diffpf}, who also leverages a diffusion model for differentiable resampling within the SMC framework. 
While empirically powerful, their approach relies on a \emph{trained} conditional
diffusion model, which introduces bias, breaks consistency guarantees, and adds substantial computational cost. 
In addition, their method introduces a further challenge in that the resampling gradient should also be propagated through the diffusion training. 
In contrast, our method focuses on a statistically grounded diffusion resampling scheme without training, and is more informative. 

\section{Conclusion}
\label{sec:conclusion}
In this paper we have proposed a new differentiable-by-construction, computationally efficient, and informative resampling method built on diffusion models. 
We have explicitly quantified a convergent error bound in the sample size and diffusion parameters. 
Our experiments verify that the proposed method outperforms the state-of-the-art differentiable resampling methods for filtering and parameter estimation of state-space models. 

\textbf{Limitations and future work}. 
We observe that propagating gradients through the diffusion resampling can be numerically sensitive to the SDE solver. 
This could potentially be mitigated by, e.g., reversible adjoint Heun's method~\citep{Kidger2021gradient} or related variants. 
We also note that the ensemble score may not be the only choice to achieve diffusion resampling.

\section*{Acknowledgements}
This work was partially supported by the Wallenberg AI, Autonomous Systems and Software Program (WASP) funded by the Knut and Alice Wallenberg (KAW) Foundation. 
Computations were enabled by the supercomputing resource Berzelius provided by National Supercomputer Centre at Link\"{o}ping University and the KAW foundation.
We also thank the Stiftelsen G.S Magnusons fond (MG2024-0035). 

Authors' contributions are as follows. 
JA verified the theoretical results, wrote Related work, and conducted the pendulum and weather forecasting experiments. 
ZZ came up with the idea and developed the method, wrote the initial draft, proved the theoretical results, and performed the other experiments. 
All authors contributed to and revised the final manuscript. 

\section*{Impact statement}
The work is concerned with a fundamental problem within statistics and machine learning. 
It does not directly lead to any ethical and societal concerns needed to be explicitly discussed here.

\bibliography{refs}

@article{Agapiou2017,
    author = {S. Agapiou and O. Papaspiliopoulos and D. Sanz-Alonso and A. M. Stuart},
    journal = {Statistical Science},
    number = {3},
    pages = {405--431},
    title = {Importance Sampling: Intrinsic Dimension and Computational Cost},
    volume = {32},
    year = {2017}
}

@book{Ambrosio2008, 
    author = {Luigi Ambrosio and Nicola Gigli and Giuseppe Savar\'{e}},
    title = {Gradient flows in metric spaces and in the space of probability measures}, 
    publisher = {Birkh\"{a}user Verlag}, 
    series = {Lectures in Mathematics ETH Z\"{u}rich}, 
    edition = {2nd}, 
    year = {2008}
}

@article{Anderson1982,
	title = {Reverse-time diffusion equation models},
	journal = {Stochastic Processes and their Applications},
	volume = {12},
	number = {3},
	pages = {313--326},
	year = {1982},
	author = {Anderson, Brian D. O.}
}

@article{Andrieu2024,
    author = {Andrieu, Christophe and Lee, Anthony and Power, Sam and Wang, Andi Q.},
    title = {Explicit convergence bounds for {M}etropolis {M}arkov chains: Isoperimetry, spectral gaps and profiles},
    volume = {34},
    journal = {The Annals of Applied Probability},
    number = {4},
    pages = {4022--4071},
    year = {2024}
}

@inproceedings{Arya2022,
    author = {Arya, Gaurav and Schauer, Moritz and Sch\"{a}fer, Frank and Rackauckas, Christopher},
    booktitle = {Advances in Neural Information Processing Systems},
    pages = {10435--10447},
    publisher = {Curran Associates, Inc.},
    title = {Automatic Differentiation of Programs with Discrete Randomness},
    volume = {35},
    year = {2022}
}

@inproceedings{Baker2024,
    author = {Baker, Elizabeth Louise and Yang, Gefan and Severinsen, Michael L. and Hipsley, Christy Anna and Sommer, Stefan},
    booktitle = {Advances in Neural Information Processing Systems},
    pages = {10801--10826},
    publisher = {Curran Associates, Inc.},
    title = {Conditioning non-linear and infinite-dimensional diffusion processes},
    volume = {37},
    year = {2024}
}

@inproceedings{Baker2025,
    title = {Score matching for bridges without learning time-reversals},
    author = {Baker, Elizabeth Louise and Schauer, Moritz and Sommer, Stefan},
    booktitle = {Proceedings of the 28th International Conference on Artificial Intelligence and Statistics},
    pages = {775--783},
    year = {2025},
    volume = {258},
    month = {03--05 May},
    publisher = {PMLR}
}

@article{BaoEnsScore2024,
	title = {An ensemble score filter for tracking high-dimensional nonlinear dynamical systems},
	journal = {Computer Methods in Applied Mechanics and Engineering},
	volume = {432},
	pages = {117447},
	year = {2024},
	author = {Feng Bao and Zezhong Zhang and Guannan Zhang}
}

@inproceedings{Bartosh2025sde,
	title     = {{SDE} Matching: Scalable and Simulation-Free Training of Latent Stochastic Differential Equations},
    author    = {Bartosh, Grigory and Vetrov, Dmitry and Naesseth, Christian A.},
    booktitle = {Proceedings of the 42nd International Conference on Machine Learning},
    pages     = {3054--3070},
    year      = {2025},
    volume    = {267},
    month     = {13--19 Jul},
    publisher = {PMLR}
}

@article{Blondel2021,
    title={Efficient and Modular Implicit Differentiation},
    author={Blondel, Mathieu and Berthet, Quentin and Cuturi, Marco and Frostig, Roy 
    and Hoyer, Stephan and Llinares-L{\'o}pez, Felipe and Pedregosa, Fabian 
    and Vert, Jean-Philippe},
    journal={arXiv preprint arXiv:2105.15183},
    year={2021}
}

@software{Jax2018github,
  author = {James Bradbury and Roy Frostig and Peter Hawkins and Matthew James Johnson and Chris Leary and Dougal Maclaurin and George Necula and Adam Paszke and Jake Vander{P}las and Skye Wanderman-{M}ilne and Qiao Zhang},
  title = {{JAX}: composable transformations of {P}ython+{N}um{P}y programs},
  url = {http://github.com/jax-ml/jax},
  year = {2018},
}

@article{Brady2025pydpf,
    title={{PyDPF}: A {P}ython Package for Differentiable Particle Filtering},
    author={Brady, John-Joseph and Cox, Benjamin and Elvira, V{\'\i}ctor and Li, Yunpeng},
    journal={arXiv preprint arXiv:2510.25693},
    year={2025}
}

@article{Buckwar2011,
    author = {Buckwar, E. and Riedler, M. G. and Kloeden, P. E.},
    title = {The numerical stability of stochastic ordinary differential equations with additive noise},
    journal = {Stochastics and Dynamics},
    volume = {11},
    number = {02n03},
    pages = {265--281},
    year = {2011}
}

@article{Burns2025linear,
  title={Linear-Space Extragradient Methods for Fast, Large-Scale Optimal Transport},
  author={Burns, Matthew X and Liang, Jiaming},
  journal={arXiv preprint arXiv:2511.11359},
  year={2025}
}

@inproceedings{Chang2022bnn,
    title={On diagonal approximations to the extended {K}alman filter for online training of {B}ayesian neural networks},
    author={Peter G. Chang and Kevin Patrick Murphy and Matt Jones},
    booktitle={Continual Lifelong Learning Workshop at ACML 2022},
    year={2022}
}

@inproceedings{Chen2023sampling,
    title={Sampling is as easy as learning the score: theory for diffusion models with minimal data assumptions},
    author={Sitan Chen and Sinho Chewi and Jerry Li and Yuanzhi Li and Adil Salim and Anru Zhang},
    booktitle={Proceedings of the 11th International Conference on Learning Representations},
    year={2023}
}

@book{ChopinBook2020,
	author = {Nicolas Chopin and Omiros Papaspiliopoulos},
	title = {An introduction to sequential {M}onte {C}arlo},
	publisher = {Springer},
	series = {Springer Series in Statistics}, 
	year = {2020}
}

@article{Chopin2022PDMP,
    author = {Nicolas Chopin and Sumeetpal S. Singh and Tom{\'a}s Soto and Matti Vihola},
    title = {{On resampling schemes for particle filters with weakly informative observations}},
    volume = {50},
    journal = {The Annals of Statistics},
    number = {6},
    pages = {3197--3222},
    year = {2022},
}

@inproceedings{Corenflos2021,
	title = {Differentiable Particle Filtering via Entropy-Regularized Optimal Transport},
	author = {Corenflos, Adrien and Thornton, James and Deligiannidis, George and Doucet, Arnaud},
	booktitle = {Proceedings of the 38th International Conference on Machine Learning},
	pages = {2100--2111},
	year = {2021},
	volume = {139},
	month = {18--24 Jul},
	publisher = {PMLR}
}

@inproceedings{Corenflos2024FBS,
    title={Conditioning diffusion models by explicit forward-backward bridging},
    author={Corenflos, Adrien and Zhao, Zheng and S\"{a}rkk\"{a}, Simo and Sj\"{o}lund, Jens and Sch\"{o}n, Thomas B.},
    booktitle={Proceedings of the 28th International Conference on Artificial Intelligence and Statistics (AISTATS)},
    pages={3709--3717}, 
    volume={258},
    year={2025}, 
    publisher={PMLR}
}

@article{Cuturi2022OTT,
    title={Optimal Transport Tools ({OTT}): A {JAX} Toolbox for all things {W}asserstein},
    author={Cuturi, Marco and Meng-Papaxanthos, Laetitia and Tian, Yingtao and Bunne, Charlotte and Davis, Geoff and Teboul, Olivier},
    journal={arXiv preprint arXiv:2201.12324},
    year={2022}
}

@article{DeBortoli2022convergence,
    title={Convergence of denoising diffusion models under the manifold hypothesis},
    author={De Bortoli, Valentin},
    journal={Transactions on Machine Learning Research},
    year={2022}
}

@article{DeBortoli2025convergence,
    title={Dimension-free error estimate for diffusion model and optimal scheduling},
    author={De Bortoli, Valentin and Elie, Romauld and Kazeykina, Anna and Ren, Zhenjie and Zhang, Jiacheng},
    journal={arXiv preprint arXiv:2512.01820},
    year={2025}
}

@book{DelMoral2004, 
	author = {Del Moral, Pierre}, 
	title = {Feynman-{K}ac formulae: genealogical and interacting particle systems with applications}, 
	year = {2004}, 
	publisher = {Springer New York}
}

@inproceedings{Deng2020,
    author = {Deng, Ruizhi and Chang, Bo and Brubaker, Marcus A and Mori, Greg and Lehrmann, Andreas},
    booktitle = {Advances in Neural Information Processing Systems},
    pages = {7805--7815},
    publisher = {Curran Associates, Inc.},
    title = {Modeling Continuous Stochastic Processes with Dynamic Normalizing Flows},
    volume = {33},
    year = {2020}
}

@book{Dragomir2003, 
	author = {Dragomir, Sever Silverstru}, 
	title = {Some {G}r\"{o}nwall type inequalities and applications}, 
	year = {2003}, 
	publisher = {Nova Science Publisher}
}

@article{Fan2016,
    author = {Fan, Jianqing and Liao, Yuan and Liu, Han},
    title = {An overview of the estimation of large covariance and precision matrices},
    journal = {The Econometrics Journal},
    volume = {19},
    number = {1},
    pages = {C1--C32},
    year = {2016}
}

@inproceedings{Finke2026aistats,
  title = {On the bias of variational resampling},
  author = {Axel Finke and Oskar Kviman and Nicola Branchini and V{\'\i}ctor Elvira},
  booktitle = {Proceedings of The 29th International Conference on Artificial Intelligence and Statistics},
  year = {2026},
  publisher = {PMLR}
}

@article{Freitas2000,
  author={Freitas, J. F. G. de and Niranjan, M. and Gee, A. H. and Doucet, A.},
  journal={Neural Computation}, 
  title={Sequential {M}onte {C}arlo Methods to Train Neural Network Models}, 
  year={2000},
  volume={12},
  number={4},
  pages={955-993}
  }

@article{Gourevitch2026categorical,
    title={Categorical Reparameterization with Denoising Diffusion models},
    author={Gourevitch, Samson and Durmus, Alain and Moulines, Eric and Olsson, Jimmy and Janati, Yazid},
    journal={arXiv preprint arxiv:2601.00781},
    year={2026}
}

@article{Haussmann1986,
    author = {Haussmann, U. G. and Pardoux, \'{E}tienne},
    journal = {The Annals of Probability},
    number = {4},
    pages = {1188--1205},
    title = {Time Reversal of Diffusions},
    volume = {14},
    year = {1986}
}

@software{Flax2020github,
  author = {Jonathan Heek and Anselm Levskaya and Avital Oliver and Marvin Ritter and Bertrand Rondepierre and Andreas Steiner and Marc van {Z}ee},
  title = {{F}lax: A neural network library and ecosystem for {JAX}},
  url = {http://github.com/google/flax},
  year = {2024},
}

@inproceedings{Jang2017categorical,
    title={Categorical Reparameterization with {G}umbel-{S}oftmax},
    author={Eric Jang and Shixiang Gu and Ben Poole},
    booktitle={Proceedings of the 5th International Conference on Learning Representations},
    year={2017},
}

@article{Jentzen2009,
    author = {Arnulf Jentzen and Kloeden, Peter E.},
    title = {Overcoming the order barrier in the numerical approximation of stochastic partial differential equations with additive space-time noise},
    journal = {Proceedings of the Royal Society A: Mathematical, Physical and Engineering Sciences},
    volume = {465},
    number = {2102},
    pages = {649--667},
    year = {2009}
}

@article{Jordan1998,
    author = {Jordan, Richard and Kinderlehrer, David and Otto, Felix},
    title = {The Variational Formulation of the {F}okker--{P}lanck Equation},
    journal = {SIAM Journal on Mathematical Analysis},
    volume = {29},
    number = {1},
    pages = {1--17},
    year = {1998}
}

@article{Kang2025,
    author={Kang, Jiayi and Jiao, Xiaopei and Yau, Stephen S.-T.},
    journal={IEEE Transactions on Automatic Control}, 
    title={Estimation of the Linear System via Optimal Transportation and Its Application for Missing Data Observations}, 
    year={2025},
    volume={70},
    number={9},
    pages={5644--5659}
}

@book{Karatzas1991,
	author = {Ioannis Karatzas and Steven E. Shreve},
	title = {Brownian motion and stochastic calculus},
	publisher = {Springer-Verlag New York},
	series = {Graduate Texts in Mathematics},
	volume = {113},
	edition = {2nd},
	year = {1991}
}

@inproceedings{Karkus2018,
    title = {Particle Filter Networks with Application to Visual Localization},
    author = {Karkus, Peter and Hsu, David and Lee, Wee Sun},
    booktitle = {Proceedings of The 2nd Conference on Robot Learning},
    pages = {169--178},
    year = {2018},
    volume = {87},
    month = {29--31 Oct},
    publisher = {PMLR}
}

@phdthesis{Kidger2021on,
    title={On neural differential equations},
    author={Patrick Kidger},
    year={2021},
    school={University of Oxford},
}

@inproceedings{Kidger2021gradient,
    author = {Kidger, Patrick and Foster, James and Li, Xuechen (Chen) and Lyons, Terry},
    booktitle = {Advances in Neural Information Processing Systems},
    pages = {18747--18761},
    publisher = {Curran Associates, Inc.},
    title = {Efficient and Accurate Gradients for Neural {SDE}s},
    volume = {34},
    year = {2021}
}

@inproceedings{Klaas2005,
    author = {Klaas, Mike and Freitas, Nando de and Doucet, Arnaud},
    title = {Toward practical $N^2$ {M}onte {C}arlo: the marginal particle filter},
    year = {2005},
    booktitle = {Proceedings of the 21st Conference on Uncertainty in Artificial Intelligence},
    pages = {308--315}
}

@inproceedings{Kviman24a,
  title = {Variational Resampling},
  author = {Kviman, Oskar and Branchini, Nicola and Elvira, V\'{i}ctor and Lagergren, Jens},
  booktitle = {Proceedings of The 27th International Conference on Artificial Intelligence and Statistics},
  pages = {3286--3294},
  year = {2024},
  volume = {238},
  month = {02--04 May},
  publisher =    {PMLR}
}

@inproceedings{Ho2020,
    author = {Ho, Jonathan and Jain, Ajay and Abbeel, Pieter},
    booktitle = {Advances in Neural Information Processing Systems},
    pages = {6840--6851},
    publisher = {Curran Associates, Inc.},
    title = {Denoising Diffusion Probabilistic Models},
    volume = {33},
    year = {2020}
}

@inproceedings{Lai2022,
    title = {Variational Marginal Particle Filters},
    author = {Lai, Jinlin and Domke, Justin and Sheldon, Daniel},
    booktitle = {Proceedings of The 25th International Conference on Artificial Intelligence and Statistics},
    pages = {875--895},
    year = {2022},
    volume = {151},
    series = {Proceedings of Machine Learning Research},
    publisher = {PMLR},
}

@article{Lee01012010,
    author = {Anthony Lee and Christopher Yau and Michael B. Giles and Arnaud Doucet and Christopher C. Holmes},
    title = {On the Utility of Graphics Cards to Perform Massively Parallel Simulation of Advanced {M}onte {C}arlo Methods},
    journal = {Journal of Computational and Graphical Statistics},
    volume = {19},
    number = {4},
    pages = {769--789},
    year = {2010}
}

@inproceedings{LiXueChen2020,
    title = {Scalable Gradients for Stochastic Differential Equations},
    author = {Li, Xuechen and Wong, Ting-Kam Leonard and Chen, Ricky T. Q. and Duvenaud, David},
    booktitle = {Proceedings of the Twenty Third International Conference on Artificial Intelligence and Statistics},
    pages = {3870--3882},
    year = {2020},
    volume = {108},
    month = {26--28 Aug},
    publisher = {PMLR}
}

@article{Li2024differentiable,
    title={Differentiable Particle Filters with Smoothly Jittered Resampling},
    author={Li, Yichao and Wang, Wenshuo and Deng, Ke and Liu, Jun S},
    journal={Statistica Sinica},
    year={2024},
    volume={34},
    pages={1241--1262}
}

@article{Lord2004,
    author={Lord, Gabriel J. and Rougemont, Jacques},
    journal={IMA Journal of Numerical Analysis}, 
    title={A numerical scheme for stochastic {PDEs} with {G}evrey regularity}, 
    year={2004},
    volume={24},
    number={4},
    pages={587--604}
}

@book{Lord2014, 
	place={Cambridge}, 
	series={Cambridge Texts in Applied Mathematics}, 
	volume={50},
	title={An introduction to computational stochastic PDEs},
	publisher={Cambridge University Press}, 
	author={Lord, Gabriel J. and Powell, Catherine E. and Shardlow, Tony}, 
	year={2014}
}

@article{Lu2025expn,
    title={{DPM}-solver++: Fast solver for guided sampling of diffusion probabilistic models},
    author={Lu, Cheng and Zhou, Yuhao and Bao, Fan and Chen, Jianfei and Li, Chongxuan and Zhu, Jun},
    journal={Machine Intelligence Research},
    pages={730--751},
    volume = {22}, 
    year={2025}
}

@inproceedings{Luo2023IRSDE,
	title = {Image restoration with mean-reverting stochastic differential equations},
	booktitle = {Proceedings of the 40th International Conference on Machine Learning},
	author = {Ziwei Luo and Fredrik K. Gustafsson and Zheng Zhao and Jens Sj\"{o}lund and Thomas B. Sch\"{o}n},
	volume = {202},
	pages = {23045--23066},
	publisher = {PMLR},
	year = {2023}
}

@article{Luo2024rsta,
    title={Taming diffusion models for image restoration: a review},
    author={Luo, Ziwei and Gustafsson, Fredrik K. and Zhao, Zheng and Sj\"{o}lund, Jens and Sch\"{o}n, Thomas B.},
    journal={Philosophical Transactions of the Royal Society A: Mathematical, Physical and Engineering Sciences},
    volume={383},
    number={2299},
    pages={20240358},
    year={2025}
}

@inproceedings{Luo2023OT,
    title = {Improved Rate of First Order Algorithms for Entropic Optimal Transport},
    author = {Luo, Yiling and Xie, Yiling and Huo, Xiaoming},
    booktitle = {Proceedings of The 26th International Conference on Artificial Intelligence and Statistics},
    pages = {2723--2750},
    year = {2023},
    volume = {206},
    month = {25--27 Apr},
    publisher =    {PMLR}
}

@article{Malik2011,
    title = {Particle filters for continuous likelihood evaluation and maximisation},
    journal = {Journal of Econometrics},
    volume = {165},
    number = {2},
    pages = {190--209},
    year = {2011},
    author = {Sheheryar Malik and Michael K. Pitt}
}

@book{Meyn2009, 
    title = {Markov chain and stochastic stability},
    author = {Sean Meyn and Tweedie, Richard L.}, 
    year = {2009}, 
    edition = {2nd}, 
    publisher = {Cambridge University Press}
}

@article{Millet1989,
 author = {A. Millet and D. Nualart and M. Sanz},
 journal = {The Annals of Probability},
 number = {1},
 pages = {208--238},
 title = {Integration by Parts and Time Reversal for Diffusion Processes},
 volume = {17},
 year = {1989}
}

@inproceedings{Naesseth2018,
  title = {Variational Sequential {M}onte {C}arlo},
  author = {Naesseth, Christian and Linderman, Scott and Ranganath, Rajesh and Blei, David},
  booktitle = {Proceedings of the 21st International Conference on Artificial Intelligence and Statistics},
  pages = {968--977},
  year = {2018},
  volume = {84},
  month = {09--11 Apr},
  publisher = {PMLR}
}

@inproceedings{Ng2025,
    title = {Temperature Optimization for {B}ayesian Deep Learning},
    author = {Ng, Kenyon and van der Heide, Chris and Hodgkinson, Liam and Wei, Susan},
    booktitle = {Proceedings of the 41st Conference on Uncertainty in Artificial Intelligence},
    pages = {3155--3181},
    year = {2025},
    volume = {286},
    month = {21--25 Jul},
    publisher = {PMLR}
}

@book{Oksendal2003,
	author = {Bernt {\O}ksendal},
	title = {Stochastic differential equations: an introduction with applications},
	edition = {6th},
	series = {Universitext},
	publisher = {Springer-Verlag Berlin Heidelberg},
	year = {2007},
}

@article{Poyiadjis2011,
    author = {Poyiadjis, George and Doucet, Arnaud and Singh, Sumeetpal S.},
    title = {Particle approximations of the score and observed information matrix in state space models with application to parameter estimation},
    journal = {Biometrika},
    volume = {98},
    number = {1},
    pages = {65--80},
    year = {2011}
}

@article{Reich2013,
    author = {Reich, Sebastian},
    title = {A Nonparametric Ensemble Transform Method for {B}ayesian Inference},
    journal = {SIAM Journal on Scientific Computing},
    volume = {35},
    number = {4},
    pages = {A2013--A2024},
    year = {2013}
}

@article{Rosato2022,
  author={Rosato, Conor and Devlin, Lee and Beraud, Vincent and Horridge, Paul and Schön, Thomas B. and Maskell, Simon},
  journal={IEEE Transactions on Signal Processing}, 
  title={Efficient Learning of the Parameters of Non-Linear Models Using Differentiable Resampling in Particle Filters}, 
  year={2022},
  volume={70},
  pages={3676--3692}
}

@article{Scibior2021differentiable,
    title={Differentiable particle filtering without modifying the forward pass},
    author={{\'S}cibior, Adam and Wood, Frank},
    journal={arXiv preprint arXiv:2106.10314},
    year={2021}
}

@article{Schauer2017,
    author = {Moritz Schauer and van der Meulen, Frank and van Zanten, Harry},
    title = {Guided proposals for simulating multi-dimensional diffusion bridges},
    volume = {23},
    journal = {Bernoulli},
    number = {4A},
    pages = {2917--2950},
    year = {2017}
}

@inproceedings{Sharma2023,
  title = {Do {B}ayesian Neural Networks Need To Be Fully Stochastic?},
  author = {Sharma, Mrinank and Farquhar, Sebastian and Nalisnick, Eric and Rainforth, Tom},
  booktitle = {Proceedings of The 26th International Conference on Artificial Intelligence and Statistics},
  pages = {7694--7722},
  year = {2023},
  volume = {206},
  month = {25--27 Apr},
  publisher = {PMLR}
}

@inproceedings{Song2021scorebased,
	title = {Score-based generative modeling through stochastic differential equations},
	author = {Yang Song and Jascha Sohl-Dickstein and Diederik P Kingma and Abhishek Kumar and Stefano Ermon and Ben Poole},
	booktitle = {Proceedings of the 9th International Conference on Learning Representations},
	year={2021}
}

@inproceedings{Vandermerwe2000,
    author = {van der Merwe, Rudolph and Doucet, Arnaud and de Freitas, Nando and Wan, Eric},
    booktitle = {Advances in Neural Information Processing Systems},
    publisher = {MIT Press},
    title = {The Unscented Particle Filter},
    volume = {13},
    year = {2000}
}

@article{VonRenesse2005,
    author = {von Renesse, Max-K. and Sturm, Karl-Theodor},
    title = {Transport inequalities, gradient estimates, entropy and {R}icci curvature},
    journal = {Communications on Pure and Applied Mathematics},
    volume = {58},
    number = {7},
    pages = {923--940},
    year = {2005}
}

@article{Xie2020,
    author = {Xie, Yuchen and Byrd, Richard H. and Nocedal, Jorge},
    title = {Analysis of the {BFGS} Method with Errors},
    journal = {SIAM Journal on Optimization},
    volume = {30},
    number = {1},
    pages = {182--209},
    year = {2020}
}

@article{Xiongjie2025,
    title = {An overview of differentiable particle filters for data-adaptive sequential {B}ayesian inference},
    journal = {Foundations of Data Science},
    volume = {7},
    number = {4},
    pages = {915--943},
    year = {2025},
    author = {Xiongjie Chen and Yunpeng Li}
}

@article{Yang2013,
    author={Yang, Tao and Mehta, Prashant G. and Meyn, Sean P.},
    journal={IEEE Transactions on Automatic Control}, 
    title={Feedback Particle Filter}, 
    year={2013},
    volume={58},
    number={10},
    pages={2465--2480}
}

@article{Yuan2007,
    author = {Yuan, Ming and Lin, Yi},
    title = {Model selection and estimation in the {G}aussian graphical model},
    journal = {Biometrika},
    volume = {94},
    number = {1},
    pages = {19--35},
    year = {2007}
}

@inproceedings{Zhang2023fast,
    title={Fast Sampling of Diffusion Models with Exponential Integrator},
    author={Qinsheng Zhang and Yongxin Chen},
    booktitle={Proceedings of the 11th International Conference on Learning Representations},
    year={2023}
}

@article{Zhao2023MFS, 
    author = {Zheng Zhao and Juha Sarmarvuori}, 
    title = {Stochastic filtering with moment representation},
    journal = {arXiv preprint arXiv:2303.13895},
    year = {2023},
}

@inproceedings{ZhaoPBNN2024,
    author = {Zheng Zhao and Sebastian Mair and Thomas B. Sch\"{o}n and Jens Sj\"{o}lund}, 
    title = {On {F}eynman--{K}ac training of partial {B}ayesian neural networks},
    booktitle = {Proceedings of the 27th International Conference on Artificial Intelligence and Statistics (AISTATS)},
    volume = {238},
    pages = {3223--3231}, 
    publisher = {PMLR},
    year = {2024},
}

@article{Zhao2024rsta,
    title={Conditional sampling within generative diffusion models},
    author={Zhao, Zheng and Luo, Ziwei and Sj\"{o}lund, Jens and Sch\"{o}n, Thomas B.},
    journal={Philosophical Transactions of the Royal Society A: Mathematical, Physical and Engineering Sciences},
    volume={383},
    number={2299},
    pages={20240329},
    year={2025}
}

@article{Zhao2025B0SMC,
	title = {Generative diffusion posterior sampling for informative likelihoods},
	journal = {Communications in Information and Systems},
	author = {Zheng Zhao},
	year = {2025},
	note={Special issue for celebrating Thomas Kailath's 90th birthday. In press}
}

@article{Zhu2020towards,
    title={Towards differentiable resampling},
    author={Zhu, Michael and Murphy, Kevin and Jonschkowski, Rico},
    journal={arXiv preprint arXiv:2004.11938},
    year={2020}
}

@article{Wan2025diffpf,
  title   = {{DiffPF}: Differentiable Particle Filtering with Generative Sampling via Conditional Diffusion Models},
  author  = {Wan, Ziyu and Zhao, Lin},
  journal = {arXiv preprint arXiv:2507.15716},
  year    = {2025}
}

@inproceedings{Liu2017stein,
    author    = {Qiang Liu},
    title     = {Stein Variational Gradient Descent as Gradient Flow},
    booktitle = {Advances in Neural Information Processing Systems},
    pages = {3118--3126},
    year  = {2017}, 
    volume = {30},
    publisher = {Curran Associates, Inc.}
}

@inproceedings{Course2023amortized,
    title     = {Amortized Reparametrization: Efficient and Scalable Variational Inference for Latent {SDE}s},
    author    = {Course, Kevin and Nair, Prasanth B.},
    booktitle = {Advances in Neural Information Processing Systems},
    pages = {78296--78318},
    publisher = {Curran Associates, Inc.},
    volume = {36},
  year      = {2023}
}

@inproceedings{Sitzmann2019siren,
    author = {Sitzmann, Vincent and Martel, Julien and Bergman, Alexander and Lindell, David and Wetzstein, Gordon},
    booktitle = {Advances in Neural Information Processing Systems},
    pages = {7462--7473},
    publisher = {Curran Associates, Inc.},
    title = {Implicit Neural Representations with Periodic Activation Functions},
    volume = {33},
    year = {2020}
}

@inproceedings{greydanus2019hamiltonian,
    author    = {Greydanus, S and Dzamba, M and Yosinski, J},
    title     = {Hamiltonian Neural Networks},
    booktitle = {Advances in Neural Information Processing Systems}, 
    volume = {32},
    year      = {2019}
}

@article{Rasp2020WeatherBench,
  author    = {Rasp, S. and Dueben, P. D. and Scher, S. and Weyn, J. A. and Mouatadid, S. and Thuerey, N.},
  title     = {WeatherBench: A benchmark data set for data-driven weather forecasting},
  journal   = {Journal of Advances in Modeling Earth Systems},
  year      = {2020}
}
\bibliographystyle{icml2026}

\newpage
\appendix
\onecolumn

\section{Diffusion resampling with Gaussian reference}
\label{app:diffres-gaussian}
For easy reproducibility, we present the diffusion resampling specifically with the mean-reverting Gaussian reference $\refmeasure$ in the following algorithm. 
In practice, the algorithm is always implemented for log weights. 

\SetKwProg{Def}{def}{:}{}
\begin{algorithm2e}[]
    \SetAlgoLined
    \DontPrintSemicolon
    \KwInputs{Weighted samples $\lbrace (w_i, X_i) \rbrace_{i=1}^n$ and time grid $0=t_0 < t_1 < \cdots < t_K=T$. }
    \KwOutputs{Differentiable re-samples $\lbrace (\frac{1}{N}, X^{*}_i) \rbrace_{i=1}^n$}
    $\mu_N \gets \sum_{i=1}^N w_i \, X_i$\;
    $\Sigma_N \gets \sum_{i=1}^N w_i \, (X_i - \mu_N) \, (X_i - \mu_N)^\trans$\tcp*{Or other Gaussian approximation methods}
    \Def{$s(x, t)$}{%
        $\alpha_i(x, t) \gets w_i \, \mathrm{N}(x; m_t(X_i), V_t)$\tcp*{parallel for $i=1,2,\ldots, N$}
        $\alpha_i(x, t) \gets \alpha_i(x, t) \, / \, \sum_{j=1}^N \alpha_j(x, t)$ \;
        return 
        \begin{equation*}
            -\sum_{i=1}^N \alpha_i(x, t) \, V_t^{-1} \, \bigl(x - m_t(X_i) \bigr)
        \end{equation*}
    }
    \For(\tcp*[f]{parallel for $i=1,2,\ldots, N$}){$i=1,2,\ldots, N$}{%
        $U_{i, 0} \sim \mathrm{N}(\mu_N, \Sigma_N)$\;
        \For{$k=1, 2, \ldots, K$}{%
            $\Delta_k = t_k - t_{k-1}$\;
			Draw $\xi_k^i \sim \mathrm{N}(0, 2 \,  b^2 \, \Delta_k \, I_d)$\;
            $U_{i, t_{k}} = U_{i, t_{k-1}} + b^2 \bigl[\Sigma_N^{-1} \, (U_{i, t_{k-1}} - \mu_N) + 2 \, s(U_{i, t_{k-1}}, T - t_{k-1}) \bigr]\, \Delta_k + \xi_k^i$\;
            \tcp{Or any other SDE solver for Equation~\eqref{equ:rev}}
        }
        $X^*_i \gets U_{i, T}$
    }
    \caption{Diffusion resampling \texttt{diffres} with Gaussian reference}
    \label{app:alg-diffres-normal}
\end{algorithm2e}

\section{Proof of Proposition~\ref{proposition:main}}
\label{app:proof-main}
For any $t\in[0, T)$, define a residual $h_t \coloneqq U(t) - \widetilde{U}(t)$, and recall that they both are driven by the same Brownian motion. 
We have 
\begin{equation}
	\begin{split}
		\abs{h_t} &\leq \abs{U(0) - \widetilde{U}(0)} \\
		&\quad + b^2 \biggl( \int^t_0 \absbig{ \grad\log \refmeasure(\widetilde{U}(\tau)) - \grad\log\refmeasure(U(\tau))}\diff \tau + 2\int^t_0 \absbig{ \grad\log p_{T - \tau}(U(\tau)) - s_N(\widetilde{U}(\tau), T - \tau)} \diff \tau \biggr) \\
	\end{split}
\end{equation}
By It\^{o}'s formula we have 
\begin{equation}
	\abs{h_t}^2 = \abs{h_0}^2 + 2 \int^t_0 \innerp{h_\tau, h'_\tau} \diff \tau,
\end{equation}
where $h'_\tau$ stands for the drift of $h_\tau$. 
We obtain
\begin{equation*}
	\begin{split}
		\abs{h_t}^2 &= \abs{h_0}^2 + 2 \, b^2 \int^t_0 \innerpbig{h_\tau, \grad\log \refmeasure(\widetilde{U}(\tau)) - \grad\log\refmeasure(U(\tau)) + 2 \, \grad\log p_{T - \tau}(U(\tau)) - 2 \, s_N(\widetilde{U}(\tau), T - \tau)} \diff \tau \\
		&= \abs{h_0}^2 + 2 \, b^2 \int^t_0 \innerpbig{h_\tau, \grad\log \refmeasure(\widetilde{U}(\tau)) - \grad\log\refmeasure(U(\tau))} \diff \tau \\
		&\qquad\qquad+ 4 \, b^2 \int^t_0 \innerpbig{h_\tau, \grad\log p_{T - \tau}(U(\tau)) - \grad\log p_{T - \tau}(\widetilde{U}(\tau))} + \innerpbig{h_\tau, \grad\log p_{T - \tau}(\widetilde{U}(\tau)) - s_N(\widetilde{U}(\tau), T - \tau)} \diff \tau.
	\end{split}
\end{equation*}
Therefore, applying Assumptions~\ref{assumption:diffusion} and~\ref{assumption-score-approx}, and H\"{o}lder's inequality we get
\begin{equation*}
	\begin{split}
		\expecbig{\abs{h_t}^2} &\leq \expecbig{\abs{h_0}^2} + 2\,b^2 \,(C_\mathrm{ref} - 2\, C_p)\int^t_0 \expecbig{\abs{h_\tau}^2} \diff \tau + 4 \, b^2 \int^t_0 \expecbig{\abs{h_\tau}^2}^{\frac{1}{2}} \, \frac{C_e(T - \tau)}{N^r} \diff \tau,
	\end{split}
\end{equation*}
where recall that the weighted samples and the SDE system are mutually independent.
Applying (non-linear) Gr\"{o}nwall inequality~\citep[][Theorem 21]{Dragomir2003} we finally obtain
\begin{equation}
	\expecbig{\abs{h_t}^2} \leq \expecbig{\abs{h_0}^2} \, \expp^{b^2 \, (C_{\mathrm{ref}}-2 \, C_p) \, t} + 2 \, b^2 \, N^{-r} \int^t_0 C_e(T -\tau) \, \expp^{b^2 \, (C_{\mathrm{ref}}-2 \, C_p) \, (t - \tau)} \diff \tau.
\end{equation}
This directly implies an upper bound for the 2-Wasserstein distance between the measures of $U(t) \sim q_t$ and $\widetilde{U}(t) \sim \widetilde{q}_t$:
\begin{equation}
	\mathsf{W}_2^2(\widetilde{q}_t, q_t) \leq \expp^{b^2 \,(C_{\mathrm{ref}} - 2\, C_p) \, t} \mathsf{W}_2^2(p_T, \refmeasure) + 2 \, b^2 \, N^{-r} \overline{C}_e(t, T),
	\label{equ:appendix-bound}
\end{equation}
where $\overline{C}_e(t, T) \coloneqq \int^t_0 C_e(T -\tau) \exp[b^2 \, (C_{\mathrm{ref}}-2 \, C_p) \, (t - \tau)] \diff \tau$.

\section{Proof of Corollary~\ref{corollary:asymp}}
\label{app:proof-corollary}
Recall the forward (Langevin) equation
\begin{equation}
	\begin{split}
		\diff X(t) &= b^2\grad \log \refmeasure(X(t)) \diff t + \sqrt{2} \, b \diff W(t), \\
		X(0) &\sim \pi.
	\end{split}
\end{equation}
The error $\mathsf{W}_1(p_t, \refmeasure) \to 0$ geometrically fast as $t\to\infty$, more specifically, 
\begin{equation}
	\mathsf{W}_2^2(p_T, \refmeasure) \leq \expp^{-2 \,b^2 \, C_\mathrm{ref}^- \, T} \, \mathsf{W}_2^2(\pi, \refmeasure), 
	\label{equ:appendix-langevin-conv}
\end{equation}
where $C_{\mathrm{ref}}^-$ is the concave constant in Assumption~\ref{assumption:diffusion}. 
This is a classical result, see, for instance, \citet{Ambrosio2008} and~\citet{VonRenesse2005}. 
Now choose $N^{r-c} = \overline{C}_e(t, T)$ for any constant $c$ such that $0<c<r$, and substitute Equation~\eqref{equ:appendix-langevin-conv} into Equation~\eqref{equ:appendix-bound} we get 
\begin{equation}
	\mathsf{W}_2^2(\widetilde{q}_t, q_t) \leq 2 \, b^2 \, N^{-c} + \expp^{b^2 \,(C_{\mathrm{ref}} - 2\, C_p) \, t - 2\,b^2 \, C_\mathrm{ref}^- \, T} \, \mathsf{W}_2^2(\pi, \refmeasure). 
\end{equation}
Recall that $t\mapsto C_e(t)$ is positive non-increasing, therefore $t\mapsto \overline{C}_e(t, T)$ is increasing for any $T>0$. 
The aim here is to show there exists a sequence $t\mapsto T(t)$ such that $\lim_{t\to \infty}\mathsf{W}_1(\widetilde{q}_t, q_t) = 0$ \emph{without further imposing conditions on $C_e$}. 
Clearly, the limit holds when 1) $T(t) > t$ uniformly and 2) $t \mapsto N(t)$ is increasing. 
To show 2), we take derivative of $t \mapsto \overline{C}_e(t, T(t))$ obtaining
\begin{equation*}
	\overline{C}_e'(t, T(t)) = C_e(T(t) - t) + \int^t_0 T'(t) \, C_e'(T(t)-\tau) \, \expp^{z \, (t-\tau)} + C_e(T(t)-\tau) \, \expp^{z \, (t - \tau)} \, z \diff \tau,
\end{equation*}
where we shorthand $z = b^2 \, (C_\mathrm{ref} - 2 \, C_p)$ for simplicity. 
Using integration by parts on $\int^t_0 C_e'(T(t)-\tau) \, \expp^{z \, (t-\tau)} \diff \tau$ we get
\begin{equation}
	\int^t_0 C'_e(T(t) - \tau) \, \expp^{z \, (t-\tau)} \diff \tau = -C_e(T(t) - t) + C_e(T(t)) \, \expp^{z \, t} - \int^t_0 C_e(T(t)-\tau) \, \expp^{z \, (t - \tau)} \, z \diff \tau. 
\end{equation}
Therefore, 
\begin{equation}
	\overline{C}_e'(t, T(t)) = (1 - T'(t)) \, \biggl( C_e(T(t) - t) + \int^t_0 C_e(T(t)-\tau) \, \expp^{z \, (t - \tau)} \, z \diff \tau \biggr) + T'(t) \, C_e(T(t)) \, \expp^{z \, t}.
\end{equation}
Due to the positivity of $C_e$, a sufficient condition for the derivative above being positive is $T'(t) > 0$ and $T'(t) \leq 1$. 
The set of such sequence is not empty, and a trivial example is $T(t) = t + \delta$ for some parameter $\delta >0$, which satisfies 1) and is clearly independent of any of the system components. 
Moreover, the exponent $b^2 \,(C_{\mathrm{ref}} - 2\, C_p) \, t - 2 \, b^2 \, C_\mathrm{ref}^- \, T(t)$ is always negative under such an example. 

Note that the factor $N^{-c}$ is likely suboptimal since it can not exceed the importance sampling factor $r$. 
This is intuitive, because we are essentially asking $N$ to compensate the SDE accumulation error. 
It is possible to relax from this and choose $c \geq r$. 
This in turn means that we need more assumptions on $C_e$ to ensure that $\overline{C}_e(t, T(t))$ is \emph{non-increasing} in $t$, in contrast to what we aimed here for being increasing which is mild. 
However, such settings will make Assumption \ref{assumption-score-approx} more difficult to verify in reality. 
Since making mild and practically checkable assumptions on score approximation remains an open discussion~\citep[see, e.g.,][]{DeBortoli2022convergence, Chen2023sampling}, we opt for a suboptimal bound under assumptions that are simple and easy-to-verify (n.b., we have only assumed $C_e$ to be a positive non-increasing function without specifying any rate). 

The result essentially reveals how we can leverage information from $\refmeasure$ to achieve for lower resampling error. 
Commonly used resampling schemes assume that we only have access to the given samples $\lbrace (w_i, X_i) \rbrace_{i=1}^N$. 
On the contrary, diffusion resampling additionally assumes a sampler for $\refmeasure$ that approximates $\pi$ yielding more information of the target. 
The diffusion resampling effectively fuses the samples from $\lbrace (w_i, X_i) \rbrace_{i=1}^N$ and $\refmeasure$ to obtain the re-samples, with the constant $b$ indicating how much we rely on $\refmeasure$. 
With $b=0$, it means that we completely trust the samples from $\refmeasure$, and this is indeed optimal when $\refmeasure=\pi$. 
See Appendix~\ref{app:elaboration-doob} for more details.

\section{Elaboration of Remark~\ref{remark:doob}}
\label{app:elaboration-doob}
Recall the forward/noising process
\begin{equation}
    \diff X(t) = b^2\grad \log \refmeasure(X(t)) \diff t + \sqrt{2} \, b \diff W(t), \quad X(0) \sim \pi,
    \label{equ:app-fwd}
\end{equation}
which corresponds to the reversal
\begin{equation}
    \diff U(t) = b^2 \bigl[-\grad \log \refmeasure(U(t)) + 2\grad\log p_{T - t}(U(t)) \bigr]\diff t + \sqrt{2} \, b \diff W(t), \quad U(0) \sim p_T.
    \label{equ:app-rev-empirical}
\end{equation}
Also recall the associated $h$-function $h(x, t) = \sum_{i=1}^N w_i \, p_{t | 0}(x \cond X_i)$. 
Now if we instead initialise $X(0)$ at the empirical $\pi^N \coloneqq \sum_{i=1}^N w_i \, \delta_{X_i}$, the resulting process $t\mapsto X^N(t)$ becomes the correspondence of reversal
\begin{equation}
    \diff U^N(t) = b^2 \bigl[-\grad \log \refmeasure(U^N(t)) + 2 \grad \log h(U^N(t), T - t) \bigr]\diff t + \sqrt{2} \, b \diff W(t), \quad U^N(0) \sim p^N_T, 
\end{equation}
where $p_T^N(\cdot) = h(\cdot, T)$, and at time $t=T$ we recover $U^N(T) \sim \pi^N$. 
As such, we can see that this empirical forward-reversal pair is essentially a differentiable reparametrisation of multinomial resampling. 
However, we do not implement this reversal in practice, since sampling from the mixture $p_T^N$ too typically introduces discrete randomness. 
The actually implementable reversal is 
\begin{equation}
    \diff \widetilde{U}(t) = b^2 \bigl[-\grad \log \refmeasure(\widetilde{U}(t)) + 2 \grad \log h(\widetilde{U}(t), T - t) \bigr]\diff t + \sqrt{2} \, b \diff W(t),  \quad   \widetilde{U}(0) \sim \refmeasure. 
\end{equation}
At time $t=T$, we obtain $\widetilde{U}(T) \sim \sum_{i=1}^N \gamma_i \, \delta_{X_i}$, where $\gamma_i \neq w_i$ depending on $\refmeasure$. 
To arrive at the new weights, let us denote the transition distribution of $U^N$ by $\widetilde{q}^N_{t | s}$ which is the same for $\widetilde{U}$. 
Equations~\eqref{equ:app-fwd} and~\eqref{equ:app-rev-empirical} imply that $p_T^N(u_0) \, \widetilde{q}^N_{T|0}(u_T \cond u_0) = \pi^N(u_T)\, p_{T | 0}(u_0 \cond u_T)$. 
Therefore, the distribution of $\widetilde{U}(T)$ is 
\begin{equation}
    \begin{split}
        \widetilde{q}_T(u_T) = \int \refmeasure(u_0) \, \widetilde{q}^N_{T|0}(u_T \cond u_0) \diff u_0 &= \int \refmeasure(u_0) \, \widetilde{q}^N_{T|0}(u_T \cond u_0) \, \frac{p_T^N(u_0)}{p_T^N(u_0)} \diff u_0 \\
        &=\int \frac{\refmeasure(u_0) \, p_{T|0}(u_0 \cond u_T)}{p_T^N(u_0)} \, \pi^N(u_T) \diff u_0 = \sum_{i=1}^N \gamma_i(u_T) \, \delta_{X_i}(u_T), 
        \label{equ:resampling-dist}
    \end{split}
\end{equation}
where the weight $\gamma_i(u_T) = w_i \, \int \frac{\refmeasure(u_0) \, p_{T|0}(u_0 \cond u_T)}{p_T^N(u_0)} \diff u_0$ depends on the spatial location, and the integral is the Radon--Nikodym derivative between $\widetilde{q}_T$ and $\pi^N$. 
Clearly, we see that the diffusion resampling is essentially an importance sampling upon $\pi^N$ similar to soft resampling. 
However, the crucial difference is that the soft resampling's importance proposal is completely uninformative about the target while the diffusion resampling is (i.e., the proposal $\refmeasure$ and derivative $\gamma_i$ know information about the target).  
As a special case, if we choose $\refmeasure = p_T^N$ then $\gamma_i = w_i$, reducing to the target empirical measure. 

\section{Error analysis of the resampling mapping}
\label{app:error-resampling-only}
Previously in our main results (e.g., Corollary~\ref{corollary:asymp}), we have analysed the error between the resampled distribution and the underlying true continuous distribution. 
In this section, we forgo the continuous distribution, and purely analyse the resampling mapping, that is, the $L_2$ error between the input and output ensembles. 

Recall the input ensemble $\pi^N \coloneqq \sum_{i=1}^N w_i \, \delta_{X_i}$, the resampled ensemble $\pi^{*, N}$, and denote $\pi^N(\phi) \coloneqq \mathbb{E}_{\pi^N}[\phi] = \sum_{i=1}^N w_i \, \phi(X_i)$ for any test function $\phi$. 
Define the $\chi^2$ divergence by $\chi^2(p \, \Vert \, q) \coloneqq \int (p(x)-q(x) \, / \, q(x)^2) \, q(x) \diff x$. 
Based on Equation~\eqref{equ:resampling-dist} we have 
\begin{equation}
    \pi^{\star,N}(\phi)=\int\pi_{\mathrm{ref}}(u_0)\sum_{i=1}^N\phi(X_i) \, \frac{w_i \, p_{T|0}(u_0 \cond X_i)}{p_T^N(u_0)} \diff u_0=\int\pi_{\mathrm{ref}}(u_0) \, \tilde{\phi}_N(u_0) \diff u_0,
\end{equation}
where $\tilde{\phi}_N(u_0)\coloneqq\sum_{i=1}^N\phi(X_i) \, \frac{w_i \, p_{T|0}(u_0 \cond X_i)}{p_T^N(u_0)}$. 
Then $\pi^N(\phi)=\sum_{i=1}^N w_i \, \phi(X_i)\int p_{T|0}(u_0 \cond X_i) \diff u_0=\int p_T^N(u_0) \, \tilde{\phi}_N(u_0) \diff u_0$ and we have the residual $\pi^{\star,N}(\phi)-\pi^N(\phi)=\int(\pi_{\mathrm{ref}}(u_0)-p_T^N(u_0)) \, \tilde{\phi}_N(u_0) \diff u_0$. 
Hence
\begin{equation}
    \absbig{\pi^{\star,N}(\phi)-\pi^N(\phi)}^2=\int \absbigg{\tilde{\phi}_N(u_0) \, \bigl(\pi_{\mathrm{ref}}(u_0)-p_T^N(u_0) \bigr) \, \frac{\sqrt{p_T^N(u_0)}}{\sqrt{p_T^N(u_0)}}}\diff u_0\leq\pi^N(\phi^2) \, \chi^2(\pi_{\mathrm{ref}} \, \Vert \, p_T^N),
\end{equation}
where the identity $\tilde{\phi}_N(u_0)^2\leq\sum_{i=1}^N\phi(X_i)^2 \, \frac{w_i \, p_{T|0}(u_0 \cond X_i)}{p_T^N(u_0)}$ was applied.
Assume that the test function $\phi$ is uniformly bounded, i.e., $\pi^N(\phi^2) \leq c_\phi$, and also knowing that $p_T = \expec{p_T^N}$, we finally have
\begin{equation}
    \expecbig{\abs{\pi^{\star,N}(\phi)-\pi^N(\phi)}^2}\leq c_\phi \, \expec{\chi^2(p_T \, \Vert \, p_T^N)}+c_\phi \, \expecbigg{\int\frac{p_T(u)-\pi_{\mathrm{ref}}(u)}{p_T^N(u)} \diff u}.
\end{equation}

\section{Common experiment settings}
Unless otherwise stated, all experiments share the following same settings. 

Implementations are based on JAX~\citep{Jax2018github}. 
When applying a Wasserstein distance to quantify the quality of approximate samples, we use the earth-moving cost function denoted by $\mathsf{W}_1$, approximated by a sliced Wasserstein distance with 1,000 projections. 
We directly use the implementation by OTT-JAX~\citep{Cuturi2022OTT}. 
For neural network implementations, we use Flax~\citep{Flax2020github}. 

For all particle filtering, the resampling is triggered at every steps.
When applied to a state-space model, a bootstrap construction of the Feynman--Kac model is consistently used. 

Whenever dealing with probability density evaluations, such as importance sampling, diffusion resampling, and Gaussian mixture, we always implement in the log domain. 

Diffusion resampling uses the mean-reverting construction of the reference distribution $\refmeasure$ as in Algorithm~\ref{app:alg-diffres-normal}, and we set the diffusion coefficient $b^2 = \Sigma_N$.
In practice, the diffusion coefficient should be calibrated, as Corollary~\ref{corollary:asymp} implies that there is likely an optimal $b$. 
Here we choose $b^2 = \Sigma_N$ just to simplify comparison, and this choice is not necessarily optimal. 
All SDE solvers are applied on evenly spaced time grids $0=t_0 < t_1 < \cdots < t_K=T$. 

The numerical solver for SDEs impact the quality of diffusion resampling. 
In all experiments involving diffusion resampling, we test a combination of ways for solving the resampling SDE:
\begin{itemize}
    \item Four SDE integrators. 
    The commonly used Euler--Maruyama, and the two exponential integrators by~\citet{Jentzen2009} and~\citet{Lord2004}, see Section~\ref{sec:exp-integrator} for formulae, and Tweedie's formula~\citep[][essentially DDPM]{Ho2020}. 
    They are called by EM, Jentzen--Kloeden, Lord--Rougemont, and Tweedie, respectively in the later context.
    \item SDE and probability flow. 
    The resampling SDE in Equation~\ref{equ:rev} can be solved also with a probability flow ODE~\citep{Song2021scorebased}. 
    We test both SDE and ODE versions. 
    \item Diffusion time $T$ and the number of discretisation steps $K$. 
    We have mainly tested for $T = 1, 2, 3$ and $K=4, 8, 32$. 
\end{itemize}

The settings above result in at least 72 combinations, and it is not possible to report them all in the main body of the paper due to page limit. 
Therefore, in the main paper we only report the best combination, and we detail the rest of the combinations in appendix.

\paragraph{Gumbel-Softmax resampling} The seminal work by~\citet{Jang2017categorical} did not formulate how the Gumbel trick can be applied for resampling. 
We here make it explicit. 
The gist is to approximate the discrete indexing of multinomial resampling by a matrix-vector multiplication which is differentiable. 
Recall our weighted samples $\lbrace (w_i, X_i) \rbrace_{i=1}^N$, and define a matrix $\cu{X} \in\R^{N \times d}$ aggregating all the $N$ samples. 
For each $i$, draw $N$ independent samples $\lbrace u_{i, j} \sim \mathrm{Uniform}[0, 1] \rbrace_{j=1}^N$ and compute $g_{i, j} = -\log\log u_{i, j}$.
Then, the $i$-th re-sample is
\begin{equation*}
    X^*_i = \sum_{j=1}^N S_{i, j} \, X_j, 
\end{equation*}
where $S_i\in\R^{1 \times N}$ is a Softmax vector with elements $S_{i, j} = \overline{S}_{i, j} \, / \, \sum_{j=1}^N \overline{S}_{i, j}$, where $\overline{S}_{i, j} = \exp((\log w_j + g_{i, j}) \, / \, \tau)$. 
This is independently repeated for $i=1,2,\ldots, N$ to generate the re-samples $\lbrace X^*_i \rbrace_{i=1}^N$. 
Upon the limit $\tau \to 0$, this recovers multinomial resampling, but the variance of the gradient estimation will grow too. 
See also~\citet{Rosato2022}. 

\paragraph{Soft resampling} The approach by~\citet{Karkus2018} view the samples $\lbrace (w_i, X_i) \rbrace_{i=1}^N$ as a discrete distribution with probability given by their normalised weights.
Denote this discrete distribution by $\pi^{\mathrm{D}}$. 
Directly sampling from $\pi^{\mathrm{D}}$ using weighted random choice will result in undefined gradients. 
Soft resampling mitigates this issue by introducing another proposal (discrete) distribution $q^{\mathrm{D}}$ defined by $q^{\mathrm{D}}(X_i) \coloneqq \alpha \, w_i + (1 - \alpha) \, \frac{1}{N}$ for $i=1,2,\ldots, N$. 
Then, sampling from $\pi^{\mathrm{D}}$ can be achieved by an importance sampling as follows.
\begin{enumerate}
    \item Draw $I_i \sim \mathrm{Categorical}\bigl( w^q_1, w^q_2, \ldots, w^q_N \bigr)$, where $w^q_i \coloneqq q^{\mathrm{D}}(X_i) = \alpha \, w_i + (1 - \alpha) \, \frac{1}{N}$. 
    \item Indexing $X^*_i = X_{I_i}$. Until here, we have sampled from the proposal $q^{\mathrm{D}}$. 
    \item Weight $w^*_i = \pi^{\mathrm{D}}(X^*_i) \, / \, q^{\mathrm{D}}(X^*_i) = w_{I_i} \, / \, w^q_{I_i}$ and then normalise. 
    \item Return $ \lbrace (w_i^*, X_i^*) \rbrace_{i=1}^N$. 
\end{enumerate}
Clearly, we can see that when $\alpha=0$, the gradient is fully defined but the resampling and the gradient completely discard information from the weights $\lbrace w_i \rbrace_{i=1}^N$, resulting in high variance. 
The setting $\alpha=1$ recovers the standard multinomial resampling, but the gradient becomes undefined. 
The gradient produced by method is thus always biased.
Furthermore, we can also see that the soft resampling does not return uniform resampling weights unlike other methods, which to some extents, contradicts the purpose of resampling. 

We will compare to the soft resampling and Gumbel-Softmax resampling with different settings of their tuning parameters. 

\section{Gaussian mixture resampling}
\label{app:exp-gm}
Recall the model
\begin{equation}
    \begin{split}
        \phi(x) &= \sum_{i=1}^c \omega_i \, \mathrm{N}(x; m_i, v_i), \\
        p(y \cond x) &= \mathrm{N}(y; H \, x, \Xi),
    \end{split}
\end{equation}
where $x\in\R^d$, $y \in\R$, $d=8$, and the number of components $c=5$. 
The posterior distribution $\pi(x) \propto \phi(x) \, p(y \cond x)$ is also a Gaussian mixture $\pi(x) = \sum_{i=1}^c \Omega_i \, \mathrm{N}(x; \mathcal{M}_i, \mathcal{V}_i)$~\citep{Zhao2025B0SMC}, given by 
\begin{equation*}
    \begin{split}
        G_i &= H \, v_i \, H^\trans + \Xi, \\
        \overline{\Omega}_i &= \omega_i \, \mathrm{N}(y; H \, m_i, G_i), \\
        \Omega_i &= \overline{\Omega}_i \, / \, \sum_{j=1}^c \overline{\Omega}_j, \\
        \mathcal{M}_i &= m_i + v_i \, H^\trans  \, G^{-1}_i \, (y - H \, m_i), \\
        \mathcal{V}_i &= v_i - v_i \, H^\trans \, G^{-1}_i \, H \, v_i.
    \end{split}
\end{equation*}

At each individual experiment, we randomly generate the Gaussian mixture components. 
Specifically, we draw the mixture mean $m_i \sim \mathrm{Uniform}([-5, 5]^d)$ and the covariance $v_i = \overline{v}_i + I_d$, where $\overline{v}_i \sim \mathrm{Wishart}(I_d)$.
To reduce experiment variance, the weights $\omega_i = 1 \, / \, c$ are fixed to be even, the observation operator $H\in\R^{1 \times d}$ is an all-one vector, and $\Xi=1$.
Experiments are repeated 100 times independently.

\begin{remark}[Resampling variance]
    In literature, the resampling variance is usually defined as the variance of the resampling algorithm itself conditioned on the input samples, that is, $\varrBig{\frac{1}{N}\sum_{i=1}^N X_i^* \cond \lbrace (w_i, X_i) \rbrace_{i=1}^N}$. 
    This is useful to heuristically gauge the noise level of resampling. 
    However, this does not inform how well the re-samples approximate the true distribution $\pi$. 
    With a slight abuse of terminology, we define the resampling variance as the mean estimator error $\expecBig{\bigl(\frac{1}{N}\sum_{i=1}^N X_i^* - \mathcal{M} \bigr)^2}$, where $\mathcal{M}$ stands for the true mean of the Gaussian mixture posterior $\pi$.
    The expectation is approximated by the 100 independent Monte Carlo runs.
\end{remark}

Tables~\ref{tbl:app-gm-swd-1} and~\ref{tbl:app-gm-swd-2} show more detailed results compared to Table~\ref{tbl:gm-compare} in the main body. 
We see that with a fixed fine-enough discretisation, the SWD in general decreases as $T$ increases independent of the SDE integrator used. 
It also shows that the probability flow (ODE) version for solving the resampling SDE is better than SDE, especially when $K$ is small. 
As for the integrators, Jentzen--Kloeden appears to be the best, and it is especially more useful when the discretisation is coarse. 
The OT resampling performs roughly the same as with diffusion resampling with setting $\varepsilon=0.8$ and $T=3, K=32$. 

\begin{table}[]
\centering
\caption{Sliced 1-Wasserstein distance (SWD, scaled by $10^{-1}$) of the Gaussian mixture experiments. 
This table focuses only on the diffusion resampling. 
We see that the Jentzen--Kloeden integrator performs the best in average. }
\label{tbl:app-gm-swd-1}
\wrapbox{.99}{%
    \begin{tabular}{@{}llllllll@{}}
    \toprule
     \multirow{2}{*}{Method}  & \multicolumn{2}{l}{Euler--Maruyama} & \multicolumn{2}{l}{Lord--Rougemont} & \multicolumn{2}{l}{Jentzen--Kloeden} & Tweedie         \\ \cmidrule(l){2-8} 
                   & ODE              & SDE              & ODE              & SDE              & ODE               & SDE              & SDE             \\ \cmidrule(l){2-8} 
    $T=1$, $K=8$   & $1.92 \pm 0.41$  & $2.39 \pm 0.57$  & $1.79 \pm 0.27$  & $3.04 \pm 0.67$  & $1.64 \pm 0.35$   & $2.70 \pm 0.62$  & $2.10 \pm 0.51$ \\
    $T=1$, $K=32$  & $1.31 \pm 0.30$  & $1.03 \pm 0.26$  & $1.29 \pm 0.30$  & $1.03 \pm 0.26$  & $1.26 \pm 0.30$   & $1.03 \pm 0.26$  & $1.01 \pm 0.26$ \\
    $T=1$, $K=128$ & $1.23 \pm 0.31$  & $0.92 \pm 0.25$  & $1.23 \pm 0.31$  & $0.90 \pm 0.25$  & $1.22 \pm 0.31$   & $0.91 \pm 0.25$  & $0.92 \pm 0.25$ \\
    $T=2$, $K=8$   & $2.45 \pm 0.56$  & $4.15 \pm 0.91$  & $4.41 \pm 0.74$  & $7.38 \pm 1.37$  & $1.93 \pm 0.45$   & $5.56 \pm 1.04$  & $3.28 \pm 0.80$ \\
    $T=2$, $K=32$  & $1.02 \pm 0.22$  & $1.38 \pm 0.34$  & $1.22 \pm 0.23$  & $1.66 \pm 0.39$  & $0.93 \pm 0.22$   & $1.51 \pm 0.35$  & $1.27 \pm 0.32$ \\
    $T=2$, $K=128$ & $0.82 \pm 0.21$  & $0.84 \pm 0.24$  & $0.84 \pm 0.21$  & $0.85 \pm 0.24$  & $0.82 \pm 0.21$   & $0.85 \pm 0.24$  & $0.83 \pm 0.24$ \\
    $T=3$, $K=8$   & $3.22 \pm 0.73$  & $5.60 \pm 1.14$  & $10.78 \pm 2.17$ & $13.21 \pm 2.14$ & $2.53 \pm 0.60$   & $8.62 \pm 1.42$  & $4.12 \pm 0.97$ \\
    $T=3$, $K=32$  & $1.20 \pm 0.26$  & $1.89 \pm 0.46$  & $2.27 \pm 0.45$  & $2.52 \pm 0.57$  & $1.06 \pm 0.24$   & $2.16 \pm 0.49$  & $1.66 \pm 0.43$ \\
    $T=3$, $K=128$ & $0.82 \pm 0.21$  & $0.89 \pm 0.24$  & $0.93 \pm 0.21$  & $0.94 \pm 0.25$  & $0.80 \pm 0.21$   & $0.91 \pm 0.25$  & $0.87 \pm 0.24$ \\ \bottomrule
    \end{tabular}
}
\end{table}

\begin{table}[]
    \centering
    \caption{Sliced 1-Wasserstein distance (SWD, scaled by $10^{-1}$) and resampling variance (scaled by $10^{-2}$) of the Gaussian mixture experiments. 
    This table should be compared to Table~\ref{tbl:app-gm-swd-1}. }
    \label{tbl:app-gm-swd-2}
    \wrapbox{.99}{%
        \begin{tabular}{@{}lll@{}}
        \toprule
        Method                 & SWD             & Resampling variance \\ \midrule
        OT ($\varepsilon=0.3$) & $0.84 \pm 0.22$ & $3.42 \pm 3.26$     \\
        OT ($\varepsilon=0.6$) & $0.97 \pm 0.21$ & $3.41 \pm 3.29$     \\
        OT ($\varepsilon=0.8$) & $1.08 \pm 0.20$ & $3.42 \pm 3.30$     \\
        OT ($\varepsilon=0.9$) & $1.14 \pm 0.20$ & $3.42 \pm 3.29$     \\ \bottomrule
        \end{tabular}
        \hfill
        \begin{tabular}{@{}lll@{}}
        \toprule
        Method     & SWD              & Resampling variance \\ \midrule
        Gumbel 0.2 & $1.40 \pm 0.24$  & $3.92 \pm 3.74$     \\
        Gumbel 0.2 & $2.52 \pm 0.41$  & $3.90 \pm 3.76$     \\
        Gumbel 0.4 & $5.16 \pm 0.82$  & $3.83 \pm 3.76$     \\
        Gumbel 0.8 & $11.57 \pm 1.81$ & $3.59 \pm 3.54$     \\ \bottomrule
        \end{tabular}
        \hfill
        \begin{tabular}{@{}lll@{}}
        \toprule
        Method   & SWD             & Resampling variance \\ \midrule
        Soft 0.2 & $0.92 \pm 0.25$ & $4.71 \pm 3.60$     \\
        Soft 0.4 & $0.86 \pm 0.23$ & $4.11 \pm 3.98$     \\
        Soft 0.8 & $0.85 \pm 0.24$ & $4.12 \pm 4.09$     \\
        Soft 0.9 & $0.83 \pm 0.24$ & $3.75 \pm 3.77$     \\ \bottomrule
        \end{tabular}
    }
\end{table}

\section{Time comparison}
\label{app:time}
In this experiment we focus on the actual computational time and compare different methods when controlling them to have a similar level of estimation error. 
The main result is shown in Figure~\ref{fig:time}, supplemented by Tables~\ref{tbl:time-err-diffres} and~\ref{tbl:time-err-ot}. 

The results are averaged over 50 independent runs on an NVIDIA A100 80G GPU with a fixed dimension $d=8$.
We have also experimented on a CPU (AMD EPYC 9354 32-Core) but we did not observe any significant difference compared to that of GPU worth to report. 

For previous experiments, $T$ and $K$ are given, and the time interval is adapted by $\Delta = T \, / \,(K + 1)$. 
For this experiment, we fix a time interval $\Delta =0.1$ and change $K$, so that $T = \Delta \, (K + 1)$. 
We compute the resampling error which is a square root of the resampling variance. 

Tables~\ref{tbl:time-err-diffres} and~\ref{tbl:time-err-ot} show that the diffusion and OT methods have a similar estimation error especially for large sample size $N$. 
Moreover, the results empirically verify the theoretical connection between the OT regularisation $\varepsilon$ and the diffusion time $T$ stated in Section~\ref{sec:analysis}. 
That is, $K$ scales proportionally to $1 \, / \, \varepsilon$ in order for them to have the same level of statistical performance. 
However, as evidenced in Figure~\ref{fig:time}, the diffusion resampling is in average faster than that of OT.
When both methods are at their best estimation performance (i.e., $\varepsilon=0.1$ and $K=32$), diffusion resampling is still faster than OT. 

\begin{table}[t!]
    \caption{The resampling error of diffusion resampling in terms of the number of time steps $K$ and sample size $N$. 
    Related to the time experiment in Appendix~\ref{app:time}. }
    \label{tbl:time-err-diffres}
    \centering
    \begin{tabular}{@{}llllllll@{}}
    \toprule
     \multirow{2}{*}{Method}  & \multicolumn{7}{c}{Number of samples $N$}                                                                                                                                               \\ \cmidrule(l){2-8} 
           & \multicolumn{1}{c}{128} & \multicolumn{1}{c}{256} & \multicolumn{1}{c}{512} & \multicolumn{1}{c}{1024} & \multicolumn{1}{c}{2048} & \multicolumn{1}{c}{4096} & \multicolumn{1}{c}{8192} \\ \cmidrule(l){2-8} 
    $K=4$  & 0.29 $\pm$ 0.07         & 0.21 $\pm$ 0.04         & 0.15 $\pm$ 0.04         & 0.11 $\pm$ 0.03          & 0.08 $\pm$ 0.02          & 0.05 $\pm$ 0.01          & 0.04 $\pm$ 0.01          \\
    $K=8$  & 0.28 $\pm$ 0.06         & 0.21 $\pm$ 0.04         & 0.15 $\pm$ 0.04         & 0.11 $\pm$ 0.03          & 0.08 $\pm$ 0.03          & 0.05 $\pm$ 0.01          & 0.04 $\pm$ 0.01          \\
    $K=16$ & 0.27 $\pm$ 0.06         & 0.20 $\pm$ 0.04         & 0.15 $\pm$ 0.04         & 0.11 $\pm$ 0.03          & 0.08 $\pm$ 0.03          & 0.04 $\pm$ 0.01          & 0.04 $\pm$ 0.01          \\
    $K=32$ & 0.27 $\pm$ 0.06         & 0.20 $\pm$ 0.04         & 0.15 $\pm$ 0.04         & 0.11 $\pm$ 0.03          & 0.08 $\pm$ 0.03          & 0.04 $\pm$ 0.01          & 0.04 $\pm$ 0.01          \\ \bottomrule
    \end{tabular}
\end{table}

\begin{table}[t!]
    \caption{The resampling error of OT in terms of entropy regularisation $\varepsilon$ and sample size $N$. 
    Related to the time experiment in Appendix~\ref{app:time}.}
    \label{tbl:time-err-ot}
    \centering
    \begin{tabular}{@{}llllllll@{}}
    \toprule
    \multirow{2}{*}{Method}  & \multicolumn{7}{c}{Number of samples $N$}                                                                                                                                               \\ \cmidrule(l){2-8} 
                      & \multicolumn{1}{c}{128} & \multicolumn{1}{c}{256} & \multicolumn{1}{c}{512} & \multicolumn{1}{c}{1024} & \multicolumn{1}{c}{2048} & \multicolumn{1}{c}{4096} & \multicolumn{1}{c}{8192} \\ \cmidrule(l){2-8} 
    $\varepsilon=0.8$ & 0.27 $\pm$ 0.06         & 0.20 $\pm$ 0.03         & 0.15 $\pm$ 0.03         & 0.11 $\pm$ 0.02          & 0.08 $\pm$ 0.02          & 0.05 $\pm$ 0.01          & 0.04 $\pm$ 0.01          \\
    $\varepsilon=0.4$ & 0.27 $\pm$ 0.06         & 0.20 $\pm$ 0.03         & 0.15 $\pm$ 0.03         & 0.11 $\pm$ 0.02          & 0.08 $\pm$ 0.02          & 0.05 $\pm$ 0.01          & 0.04 $\pm$ 0.01          \\
    $\varepsilon=0.2$ & 0.27 $\pm$ 0.06         & 0.20 $\pm$ 0.03         & 0.15 $\pm$ 0.03         & 0.11 $\pm$ 0.02          & 0.08 $\pm$ 0.02          & 0.05 $\pm$ 0.01          & 0.04 $\pm$ 0.01          \\
    $\varepsilon=0.1$ & 0.27 $\pm$ 0.06         & 0.20 $\pm$ 0.03         & 0.15 $\pm$ 0.03         & 0.11 $\pm$ 0.02          & 0.08 $\pm$ 0.02          & 0.05 $\pm$ 0.01          & 0.04 $\pm$ 0.01          \\ \bottomrule
    \end{tabular}
\end{table}

\section{Linear Gaussian SSM}
\label{app:lgssm}

Recall the model
\begin{equation}
    \begin{split}
        Z_j \cond Z_{j-1} &\sim \mathrm{N}(z_j ; \theta_1 \, z_{j-1}, I_d), \quad Z_0 \sim \mathrm{N}(0, I_d), \\
        Y_j \cond Z_j &\sim \mathrm{N}(y_j ; \theta_2 \, z_j, 0.5 \, I_d),
    \end{split}
\end{equation}
where we set parameters $\theta_1=0.5$ and $\theta_2=1$. 
For each experiment, we generate a measurement sequence with $j=0,1,\ldots, 128$ steps, with $N=32$ particles. 
The loss function estimate error $\norm{L - \widehat{L}}^2_2 \coloneqq \int_\Theta (L(\theta_1, \theta_2) - \widehat{L}(\theta_1, \theta_2))^2 \diff \theta_1\diff\theta_2$, where $\widehat{L}$ is estimated by Trapezoidal quadrature at the Cartesian grids $\Theta = [\theta_1 - 0.1, \theta_1 + 0.1]\times [\theta_2 - 0.1, \theta_2 + 0.1]$. 
The filtering error is evaluated by Kullback--Leibler (KL) divergence between the true filtering distribution (which is a Gaussian computed exactly by a Kalman filter) and the particle filtering samples. 
Precisely, the error is defined by $\frac{1}{129}\sum_{j=0}^{128}\klbig{\mathrm{N}(m_j^f, V^f_j)}{\mathrm{N}(\hat{m}_j^f, \hat{V}^f_j)}$, where $\mathrm{N}(m_j^f, V^f_j)$ is the true filtering distribution at step $j$, and $\mathrm{N}(\hat{m}_j^f, \hat{V}^f_j)$ stands for the empirical approximation by the particle samples. 

The optimiser L-BFGS is implemented using JAXopt~\citep{Blondel2021} \texttt{scipyminimize} wrapper with default settings and initial parameters $\theta + 1$. 

We observe a numerical issue due to the gradient estimate by the resampling methods. 
The L-BFGS optimiser frequently diverges. 
This is expected, since L-BFGS is highly sensitive to the quality of gradient estimate~\citep{Xie2020}; see also~\citet{Zhao2023MFS} for an empirical validation. 
As such, when we report the results we have defined a convergent run by 1) positive convergence flag returned by the L-BFGS optimiser and 2) the error $\norm{\theta - \widehat{\theta}}_2 < 1.9 $ is not absurd. 
It turns out that the effect is especially pronounced for the soft and Gumbel-Softmax resampling, where at 100 runs, only nearly 10\% return a successful flag of the optimiser, and all their estimated parameters diverge to a meaningless position. 
This was not problematic for the diffusion and OT samplers, where they have approximately 80\% success rate in average. 

We also observe a numerical issue of OT implementation. 
By default, for instance, in OTT-JAX, gradient propagation through the Sinkhorn solver uses implicit differentiation by solving a linear system. 
This makes gradient computation of the OT resampling efficient. 
However, the linear system often becomes ill-conditioned for most runs, and making OT parameter estimation diverges largely. 
We thus had to disable this feature by unrolling the gradients. 
To fairly compare SDEs and OT, we have also unrolled gradient propagation through SDEs without using any implicit/adjoint methods. 

The results are detailed in Tables~\ref{tbl:app-lgssm-loss-diffusion} to~\ref{tbl:app-lgssm-loss-others}.
We clearly see that the diffusion resampling outperforms other methods across all the three metrics, in particular for the filtering and parameter estimation errors. 
The superiority of the log-likelihood function estimation is not significant due to that the loss function's magnitude does not change much in $\Theta$, see also Figure~\ref{fig:lgssm-loss}. 

Let us focus on the diffusion resampling in Tables~\ref{tbl:app-lgssm-loss-diffusion} and~\ref{tbl:app-lgssm-params-diffusion}. 
In Tables~\ref{tbl:app-lgssm-loss-diffusion} we find that the loss function estimation error seems to be invariant in the time $T$ and steps $K$ when using an ODE solver (except for Lord--Rougemont), giving a relatively large error $2.72$. 
With a fixed $T$, it is not clear if increasing $K$ will improve the log-likelihood estimate. 
This also resonates with Table~\ref{tbl:app-lgssm-params-diffusion} that increasing $K$ can possibly lead to worst even exploding parameter estimation. 
This may be caused by numerical errors when back-propagating gradients through SDE solvers which can be addressed by, for instance, \citet{Kidger2021gradient}.
On the other hand, with $K$ fixed, increasing $T$ improves the result. 
Like in the previous experiments, the Jentzen--Kloeden exponential integrator consistently achieves the best result, albeit marginally, while Lord--Rougemont is the most sensitive to $K$. 

Table~\ref{tbl:app-lgssm-kl-diffusion} shows the performance of the filtering, and this does not compute any gradients.  
We see that improving $T$ and $K$ improves the filtering estimation in general. 
This empirically verifies the results from Tables~\ref{tbl:app-lgssm-loss-diffusion} and~\ref{tbl:app-lgssm-params-diffusion} that efficient and accurate back-propagation through SDE solvers may be necessary. 

Figure~\ref{fig:lgssm-loss} shows loss function landscapes estimated by the particle filtering ($N=32$ particles) with different resampling schemes. 
The calibration parameters for Gumbel-Softmax and soft resampling are 0.2 and 0.1, respectively, and for diffusion, we use Jentzen--Kloden SDE integrator with $T=2$ and $K=4$, and for OT $\varepsilon=0.3$. 
We see in this figure that multinomial gives the most noisy estimate, hard for searching the optimum. 
This is also true for soft resampling, and even at a low parameter $0.1$ which already means very uninformative resampling, the loss function is still rather noisy. 
On the other hand, the loss function estimates by diffusion, OT, and Gumbel-Softmax look smooth.

\begin{table}[]
\centering
\caption{The errors of loss function estimate $\norm{L - \widehat{L}}_2$ associated with the linear Gaussian state-space model (LGSSM) experiment. 
This table focuses only on the diffusion resampling and should be compared to Table~\ref{tbl:app-lgssm-loss-others} }
\label{tbl:app-lgssm-loss-diffusion}
\wrapbox{.99}{%
    \begin{tabular}{@{}llllllll@{}}
    \toprule
     \multirow{2}{*}{Method} & \multicolumn{2}{l}{Euler--Maruyama} & \multicolumn{2}{l}{Lord--Rougemont} & \multicolumn{2}{l}{Jentzen--Kloeden} & Tweedie         \\ \cmidrule(l){2-8} 
                  & ODE              & SDE              & ODE              & SDE              & ODE               & SDE              & SDE             \\ \cmidrule(l){2-8} 
    $T=1$, $K=4$  & $2.72 \pm 2.12$  & $2.66 \pm 2.11$  & $2.90 \pm 2.24$  & $2.63 \pm 2.10$  & $2.72 \pm 2.12$   & $2.61 \pm 2.08$  & $2.69 \pm 2.13$ \\
    $T=1$, $K=8$  & $2.72 \pm 2.13$  & $2.70 \pm 2.00$  & $2.77 \pm 2.17$  & $2.69 \pm 1.98$  & $2.72 \pm 2.13$   & $2.68 \pm 1.98$  & $2.71 \pm 2.02$ \\
    $T=1$, $K=16$ & $2.72 \pm 2.13$  & $2.77 \pm 2.23$  & $2.74 \pm 2.15$  & $2.79 \pm 2.23$  & $2.72 \pm 2.13$   & $2.77 \pm 2.23$  & $2.77 \pm 2.23$ \\
    $T=1$, $K=32$ & $2.72 \pm 2.14$  & $2.67 \pm 2.12$  & $2.73 \pm 2.15$  & $2.66 \pm 2.11$  & $2.72 \pm 2.14$   & $2.66 \pm 2.11$  & $2.67 \pm 2.12$ \\
    $T=2$, $K=4$  & $2.72 \pm 2.11$  & $2.58 \pm 2.06$  & $5.68 \pm 4.25$  & $2.55 \pm 2.05$  & $2.72 \pm 2.11$   & $2.51 \pm 1.95$  & $2.64 \pm 2.14$ \\
    $T=2$, $K=8$  & $2.72 \pm 2.12$  & $2.66 \pm 1.93$  & $3.29 \pm 2.51$  & $2.69 \pm 1.92$  & $2.72 \pm 2.12$   & $2.62 \pm 1.89$  & $2.70 \pm 1.97$ \\
    $T=2$, $K=16$ & $2.72 \pm 2.13$  & $2.79 \pm 2.31$  & $2.87 \pm 2.23$  & $2.81 \pm 2.30$  & $2.72 \pm 2.13$   & $2.78 \pm 2.32$  & $2.80 \pm 2.30$ \\
    $T=2$, $K=32$ & $2.72 \pm 2.13$  & $2.62 \pm 2.10$  & $2.77 \pm 2.18$  & $2.62 \pm 2.10$  & $2.72 \pm 2.13$   & $2.61 \pm 2.09$  & $2.62 \pm 2.11$ \\
    $T=3$, $K=4$  & $2.72 \pm 2.11$  & $2.51 \pm 2.01$  & $14.89 \pm 8.90$ & $33.38 \pm 5.45$ & $2.72 \pm 2.11$   & $2.57 \pm 1.67$  & $2.63 \pm 2.13$ \\
    $T=3$, $K=8$  & $2.72 \pm 2.11$  & $2.61 \pm 1.92$  & $5.90 \pm 4.39$  & $2.70 \pm 2.04$  & $2.72 \pm 2.11$   & $2.55 \pm 1.89$  & $2.67 \pm 1.96$ \\
    $T=3$, $K=16$ & $2.72 \pm 2.12$  & $2.78 \pm 2.33$  & $3.37 \pm 2.56$  & $2.82 \pm 2.33$  & $2.72 \pm 2.12$   & $2.76 \pm 2.35$  & $2.80 \pm 2.32$ \\
    $T=3$, $K=32$ & $2.72 \pm 2.13$  & $2.58 \pm 2.08$  & $2.89 \pm 2.23$  & $2.56 \pm 2.09$  & $2.72 \pm 2.13$   & $2.56 \pm 2.07$  & $2.59 \pm 2.09$ \\ \bottomrule
    \end{tabular}
}
\end{table}

\begin{table}[]
\centering
\caption{The filtering error in terms of KL divergence (scaled by $10^{-1}$) associated with the linear Gaussian state-space model (LGSSM) experiment. 
This table focuses only on the diffusion resampling and should be compared to Table~\ref{tbl:app-lgssm-loss-others} }
\label{tbl:app-lgssm-kl-diffusion}
\wrapbox{.99}{%
    \begin{tabular}{@{}llllllll@{}}
    \toprule
    \multirow{2}{*}{Method} & \multicolumn{2}{l}{Euler--Maruyama} & \multicolumn{2}{l}{Lord--Rougemont} & \multicolumn{2}{l}{Jentzen--Kloeden} & Tweedie         \\ \cmidrule(l){2-8} 
                  & ODE              & SDE              & ODE              & SDE              & ODE               & SDE              & SDE             \\ \cmidrule(l){2-8} 
    $T=1$, $K=4$  & $5.03 \pm 5.75$  & $5.05 \pm 7.00$  & $5.39 \pm 6.57$   & $5.03 \pm 7.59$  & $5.02 \pm 5.73$   & $4.94 \pm 6.92$  & $5.28 \pm 8.24$ \\
    $T=1$, $K=8$  & $4.99 \pm 5.67$  & $5.05 \pm 7.69$  & $5.10 \pm 5.86$   & $4.90 \pm 6.95$  & $4.99 \pm 5.65$   & $4.97 \pm 7.36$  & $5.10 \pm 7.98$ \\
    $T=1$, $K=16$ & $4.98 \pm 5.68$  & $5.11 \pm 7.90$  & $4.99 \pm 5.64$   & $5.50 \pm 11.01$ & $4.97 \pm 5.67$   & $5.17 \pm 8.46$  & $5.03 \pm 7.27$ \\
    $T=1$, $K=32$ & $5.01 \pm 5.87$  & $4.84 \pm 5.24$  & $5.00 \pm 5.83$   & $4.82 \pm 5.26$  & $5.00 \pm 5.87$   & $4.80 \pm 5.13$  & $4.85 \pm 5.25$ \\
    $T=2$, $K=4$  & $5.07 \pm 5.84$  & $4.71 \pm 5.76$  & $7.48 \pm 9.61$   & $5.06 \pm 12.06$ & $5.06 \pm 5.82$   & $4.38 \pm 5.47$  & $5.07 \pm 6.81$ \\
    $T=2$, $K=8$  & $5.03 \pm 5.75$  & $4.60 \pm 5.20$  & $5.87 \pm 7.56$   & $4.35 \pm 4.49$  & $5.02 \pm 5.72$   & $4.40 \pm 4.78$  & $4.82 \pm 6.01$ \\
    $T=2$, $K=16$ & $4.99 \pm 5.66$  & $5.24 \pm 8.62$  & $5.27 \pm 6.18$   & $5.18 \pm 8.12$  & $4.98 \pm 5.65$   & $5.17 \pm 8.38$  & $5.32 \pm 8.97$ \\
    $T=2$, $K=32$ & $4.98 \pm 5.68$  & $4.86 \pm 5.48$  & $5.01 \pm 5.62$   & $4.82 \pm 5.32$  & $4.97 \pm 5.67$   & $4.77 \pm 5.28$  & $4.89 \pm 5.56$ \\
    $T=3$, $K=4$  & $5.09 \pm 5.89$  & $4.39 \pm 5.08$  & $15.80 \pm 19.50$ & $3.59 \pm 5.97$  & $5.08 \pm 5.87$   & $4.13 \pm 7.93$  & $4.85 \pm 5.71$ \\
    $T=3$, $K=8$  & $5.06 \pm 5.81$  & $4.51 \pm 4.92$  & $7.68 \pm 9.93$   & $4.16 \pm 4.35$  & $5.05 \pm 5.78$   & $4.26 \pm 4.49$  & $4.76 \pm 5.57$ \\
    $T=3$, $K=16$ & $5.01 \pm 5.71$  & $5.17 \pm 7.26$  & $5.96 \pm 7.74$   & $5.29 \pm 7.74$  & $5.00 \pm 5.69$   & $5.17 \pm 7.50$  & $5.18 \pm 7.31$ \\
    $T=3$, $K=32$ & $4.98 \pm 5.65$  & $4.78 \pm 5.41$  & $5.26 \pm 6.13$   & $4.74 \pm 5.39$  & $4.97 \pm 5.64$   & $4.76 \pm 5.44$  & $4.81 \pm 5.48$ \\ \bottomrule
    \end{tabular}
}
\end{table}

\begin{table}[]
\centering
\caption{The parameter estimation error (scaled by $10^{-1}$) associated with the linear Gaussian state-space model (LGSSM) experiment. 
This table focuses only on the diffusion resampling and should be compared to Table~\ref{tbl:app-lgssm-loss-others}. }
\label{tbl:app-lgssm-params-diffusion}
\wrapbox{.99}{%
    \begin{tabular}{@{}llllllll@{}}
    \toprule
    \multirow{2}{*}{Method} & \multicolumn{2}{l}{Euler--Maruyama} & \multicolumn{2}{l}{Lord--Rougemont} & \multicolumn{2}{l}{Jentzen--Kloeden} & Tweedie         \\ \cmidrule(l){2-8} 
                  & ODE              & SDE              & ODE              & SDE              & ODE               & SDE              & SDE             \\ \cmidrule(l){2-8} 
    $T=1$, $K=4$  & $1.32 \pm 0.70$  & $1.45 \pm 1.90$  & $1.72 \pm 0.98$  & $1.65 \pm 1.61$  & $1.36 \pm 0.74$  & $1.28 \pm 0.70$   & $1.46 \pm 1.94$  \\
    $T=1$, $K=8$  & $1.48 \pm 0.93$  & $1.70 \pm 1.86$  & $1.56 \pm 0.94$  & $2.25 \pm 3.09$  & $1.44 \pm 0.82$  & $2.14 \pm 3.35$   & $1.75 \pm 2.51$  \\
    $T=1$, $K=16$ & $1.76 \pm 2.11$  & $11.30 \pm 5.22$ & $1.59 \pm 1.55$  & $11.94 \pm 4.90$ & $1.40 \pm 0.74$  & $12.07 \pm 4.68$  & $11.93 \pm 4.94$ \\
    $T=1$, $K=32$ & $4.43 \pm 5.39$  & $14.14 \pm 0.00$ & $4.05 \pm 5.33$  & $14.14 \pm 0.00$ & $3.23 \pm 4.38$  & $14.14 \pm 0.00$  & $14.14 \pm 0.00$ \\
    $T=2$, $K=4$  & $1.34 \pm 0.70$  & $1.25 \pm 0.70$  & $6.07 \pm 2.57$  & $3.19 \pm 3.83$  & $1.33 \pm 0.69$  & $1.49 \pm 1.55$   & $1.26 \pm 0.67$  \\
    $T=2$, $K=8$  & $1.36 \pm 0.76$  & $1.30 \pm 0.68$  & $2.29 \pm 1.22$  & $1.68 \pm 1.47$  & $1.34 \pm 0.70$  & $1.29 \pm 0.78$   & $1.23 \pm 0.62$  \\
    $T=2$, $K=16$ & $1.39 \pm 0.73$  & $1.84 \pm 2.52$  & $1.80 \pm 1.94$  & $1.72 \pm 2.33$  & $1.48 \pm 0.93$  & $1.39 \pm 0.89$   & $1.63 \pm 1.77$  \\
    $T=2$, $K=32$ & $1.59 \pm 1.69$  & $10.57 \pm 5.68$ & $1.68 \pm 2.05$  & $12.40 \pm 4.41$ & $1.62 \pm 1.67$  & $11.49 \pm 5.06$  & $13.31 \pm 3.06$ \\
    $T=3$, $K=4$  & $1.34 \pm 0.70$  & $1.27 \pm 0.74$  & $8.74 \pm 3.94$  & $14.14 \pm 0.00$ & $1.39 \pm 0.81$  & $1.79 \pm 1.09$   & $1.23 \pm 0.67$  \\
    $T=3$, $K=8$  & $1.32 \pm 0.70$  & $1.25 \pm 0.62$  & $5.96 \pm 2.64$  & $2.26 \pm 2.61$  & $1.32 \pm 0.70$  & $1.28 \pm 0.75$   & $1.28 \pm 0.61$  \\
    $T=3$, $K=16$ & $1.40 \pm 0.72$  & $1.48 \pm 1.56$  & $2.48 \pm 1.31$  & $1.41 \pm 0.89$  & $1.39 \pm 0.72$  & $1.26 \pm 0.74$   & $1.53 \pm 2.01$  \\
    $T=3$, $K=32$ & $1.39 \pm 0.73$  & $1.95 \pm 2.88$  & $1.92 \pm 2.07$  & $3.20 \pm 4.33$  & $1.41 \pm 0.76$  & $3.26 \pm 4.43$   & $2.81 \pm 4.32$  \\ \bottomrule
    \end{tabular}
}
\end{table}

\begin{table}[]
    \centering
    \caption{The errors of loss function, filtering in terms of KL divergence (scaled by $10^{-1}$), and parameter estimation (scaled by $10^{-1}$) associated with the linear Gaussian state-space model (LGSSM) experiment. 
    This table focuses only on methods other than the diffusion resampling, and should be compared to Tables~\ref{tbl:app-lgssm-loss-diffusion}, \ref{tbl:app-lgssm-kl-diffusion}, and \ref{tbl:app-lgssm-params-diffusion}. }
    \label{tbl:app-lgssm-loss-others}
    \wrapbox{.5}{%
        \begin{tabular}{@{}llll@{}}
        \toprule
        Method                 & $\norm{L - \widehat{L}}_2$ & Filtering KL         & $\norm{\theta - \widehat{\theta}}_2$ \\ \midrule
        OT ($\varepsilon=0.4$) & $2.64 \pm 2.13$            & $5.07 \pm 6.21$      & $1.53 \pm 1.16$                      \\
        OT ($\varepsilon=0.8$) & $2.68 \pm 2.16$            & $5.07 \pm 5.70$      & $1.58 \pm 1.22$                      \\
        OT ($\varepsilon=1.6$) & $2.76 \pm 2.20$            & $5.11 \pm 5.17$      & $1.49 \pm 0.97$                      \\
        Gumbel (0.1)           & $2.79 \pm 2.14$            & $4.83 \pm 5.76$      & NaN                     \\
        Gumbel (0.3)           & $2.75 \pm 2.17$            & $4.89 \pm 5.42$      & NaN                     \\
        Gumbel (0.5)           & $2.73 \pm 2.22$            & $5.10 \pm 5.93$      & NaN                     \\
        Soft (0.5)             & $3.63 \pm 2.10$           & $7.75 \pm 11.19$ & NaN                     \\
        Soft (0.7)             & $3.08 \pm 1.88$            & $5.34 \pm 7.82$    & NaN                     \\
        Soft (0.9)             & $2.85 \pm 1.80$            & $4.66 \pm 5.68$     & NaN                     \\
        Multinomial            & $2.80 \pm 1.84$            & $5.49 \pm 6.87$      & NaN                     \\ \bottomrule
        \end{tabular}
    }
\end{table}

\section{Prey-predator model}
\label{app:prey-predator}
Recall our prey-predator (or also called Lokta--Volterra) model
\begin{equation}
    \begin{split}
        \diff C(t) &= C(t) \, (\alpha - \beta \, R(t)) \diff t + \sigma \, C(t) \diff W_{1}(t),\\
        \diff R(t) &= R(t) \, (\zeta \, C(t) - \gamma) \diff t + \sigma \, R(t) \diff W_{2}(t), \\
        Y_j &\sim \mathrm{Poisson}\bigl(\lambda(C(t_j), R(t_j))\bigr),
    \end{split}
\end{equation}
where we set $\alpha=\gamma=6$, $\beta=2$, $\zeta=4$, and $\sigma=0.15$. 
The observation rate function $\lambda\colon \R\times\R\to\R^2_{>0}$ is defined by
\begin{equation}
    \lambda(c, r) \coloneqq \frac{5}{1 + \exp\Biggl(\begin{bmatrix} -5\, c \\ - c \, r\end{bmatrix} + 4 \Biggr) }.
\end{equation}
We simulate the model with Milstein's method and generate data at $t\in[0, 3]$ with 256 dicretisation steps. 
The discretisation results in an SSM $Z_{j+1} = f(Z_j, \epsilon_j)$, where $Z_j\in\R^2$ encodes $C(t_j)$ and $R(t_j)$, and $\epsilon_j\in\R^2$ stands for Brownian motion increment. 
We train a neural network to approximate the SDE dynamics by minimising the negative log-likelihood, estimated by a particle filter with resampling. 
The neural network is simply a three-layers fully connected network with residual connection, see Figure~\ref{fig:app-lokta-nn}, mimicking a discrete SDE solver. 
No prior knowledge of the dynamics is incorporated into the neural network. 
We train the neural network at a fixed 1,000 iterations by the Adam optimiser with learning rate 0.005.
At testing, we make 100 independent predictions from the learnt model and compute the root mean square error (RMSE) with respect to a reference trajectory generated using the true dynamics. 
We use 64 particles, and repeat the experiments 20 times. 

\begin{figure*}
    \begin{minipage}{.45\linewidth}
        \centering
        \wrapbox{.99}{%
            \tikzset{every picture/.style={line width=0.75pt}} 

\begin{tikzpicture}[x=0.75pt,y=0.75pt,yscale=-1,xscale=1]

\draw [line width=1.5]    (140,22) -- (140,50.8) ;
\draw [shift={(140,53.8)}, rotate = 270] [color={rgb, 255:red, 0; green, 0; blue, 0 }  ][line width=1.5]    (14.21,-4.28) .. controls (9.04,-1.82) and (4.3,-0.39) .. (0,0) .. controls (4.3,0.39) and (9.04,1.82) .. (14.21,4.28)   ;
\draw [line width=1.5]    (140,70) -- (140,98.8) ;
\draw [shift={(140,101.8)}, rotate = 270] [color={rgb, 255:red, 0; green, 0; blue, 0 }  ][line width=1.5]    (14.21,-4.28) .. controls (9.04,-1.82) and (4.3,-0.39) .. (0,0) .. controls (4.3,0.39) and (9.04,1.82) .. (14.21,4.28)   ;
\draw [line width=1.5]    (140,177.2) -- (140,206) ;
\draw [shift={(140,209)}, rotate = 270] [color={rgb, 255:red, 0; green, 0; blue, 0 }  ][line width=1.5]    (14.21,-4.28) .. controls (9.04,-1.82) and (4.3,-0.39) .. (0,0) .. controls (4.3,0.39) and (9.04,1.82) .. (14.21,4.28)   ;
\draw [line width=1.5]    (140,32) .. controls (-31.64,14.59) and (-36.46,180.33) .. (117.66,219.42) ;
\draw [shift={(120,220)}, rotate = 193.65] [color={rgb, 255:red, 0; green, 0; blue, 0 }  ][line width=1.5]    (14.21,-4.28) .. controls (9.04,-1.82) and (4.3,-0.39) .. (0,0) .. controls (4.3,0.39) and (9.04,1.82) .. (14.21,4.28)   ;
\draw [line width=1.5]    (140,123.2) -- (140,152) ;
\draw [shift={(140,155)}, rotate = 270] [color={rgb, 255:red, 0; green, 0; blue, 0 }  ][line width=1.5]    (14.21,-4.28) .. controls (9.04,-1.82) and (4.3,-0.39) .. (0,0) .. controls (4.3,0.39) and (9.04,1.82) .. (14.21,4.28)   ;

\draw (115,2.4) node [anchor=north west][inner sep=0.75pt]    {$Z_{j} ,\ \epsilon _{j}$};
\draw (38,54) node [anchor=north west][inner sep=0.75pt]   [align=left] {MLP (input 4, output 32), Swish};
\draw (51,161) node [anchor=north west][inner sep=0.75pt]   [align=left] {MLP (input 32, output 2)};
\draw (147,181.4) node [anchor=north west][inner sep=0.75pt]    {$\times \Delta $};
\draw (134,211.4) node [anchor=north west][inner sep=0.75pt]    {$+$};
\draw (117,231) node [anchor=north west][inner sep=0.75pt]   [align=left] {Output};
\draw (31,101) node [anchor=north west][inner sep=0.75pt]   [align=left] {MLP (input 32, output 32), Swish};
\draw (61,2) node [anchor=north west][inner sep=0.75pt]   [align=left] {Concat};

\end{tikzpicture}
        }
        \caption{The neural network used for learning the prey-predator model. 
        At the input, $Z_j$ and $\epsilon$ are concatenated to a four-dimensional vector. }
        \label{fig:app-lokta-nn}
    \end{minipage}
    \hfill
    \begin{minipage}{.41\linewidth}
        \captionof{table}{Prediction error (RMSE) and the number of successful runs (out of 20) for the prey-predator model experiment. }
        \label{tbl:app-lokta-others}
        \wrapbox{.99}{%
            \begin{tabular}{@{}lll@{}}
            \toprule
            Method                 & RMSE            & Number of success \\ \midrule
            OT ($\varepsilon=0.3$) & $2.35 \pm 0.95$ & 19                \\
            OT ($\varepsilon=0.5$) & $2.72 \pm 1.60$ & 20                \\
            OT ($\varepsilon=1.0$) & $3.27 \pm 2.71$ & 20                \\
            OT ($\varepsilon=1.5$) & $3.52 \pm 3.65$ & 20                \\
            Gumbel (0.1)           & $9.05 \pm 8.65$ & 15                \\
            Gumbel (0.3)           & $2.09 \pm 1.13$ & 20                \\
            Gumbel (0.5)           & $2.40 \pm 1.05$ & 20                \\
            Soft (0.5)             & $3.29 \pm 4.83$ & 17                \\
            Soft (0.7)             & $2.13 \pm 0.87$ & 17                \\
            Soft (0.9)             & $1.89 \pm 0.64$ & 16                \\
            Stopped                & $1.96 \pm 0.55$ & 20                \\ \bottomrule
            \end{tabular}
        }
    \end{minipage}
\end{figure*}

\begin{table}[]
    \centering
    \caption{Prediction RMSE (first table) of diffusion resampling for the prey-predator model, and the associated number of successful runs (second table) out of 20.  
    By comparing to Table~\ref{tbl:app-lokta-others} we see that all the entries here are largely better than the other resampling methods. 
    However, we also see that exponential integrators do not always provide stable gradient back-propagation, although they provide accurate forward estimation. }
    \label{tbl:app-lokta-diffusion}
    \wrapbox{.9}{%
        \begin{tabular}{@{}llllllll@{}}
        \toprule
        \multirow{2}{*}{Method} & \multicolumn{2}{l}{Euler--Maruyama} & \multicolumn{2}{l}{Lord--Rougemont}   & \multicolumn{2}{l}{Jentzen--Kloeden} & Tweedie         \\ \cmidrule(l){2-8} 
                                & ODE              & SDE              & ODE               & SDE               & ODE               & SDE              & SDE             \\ \cmidrule(l){2-8} 
        $T=1$, $K=4$            & $1.29 \pm 0.79$  & $1.30 \pm 0.98$  & NaN               & $17.01 \pm 3.16$  & $1.31 \pm 0.79$   & $2.22 \pm 0.72$  & $1.17 \pm 0.75$ \\
        $T=1$, $K=8$            & $1.31 \pm 0.71$  & $2.38 \pm 6.56$  & $16.62 \pm 6.42$  & $10.21 \pm 7.42$  & $1.29 \pm 0.70$   & $1.65 \pm 0.90$  & $1.23 \pm 0.71$ \\
        $T=1$, $K=16$           & $1.40 \pm 0.89$  & $1.95 \pm 1.10$  & $10.79 \pm 10.42$ & $5.78 \pm 6.15$   & $1.38 \pm 0.88$   & $5.20 \pm 4.62$  & $2.24 \pm 1.02$ \\
        $T=2$, $K=4$            & $1.21 \pm 0.81$  & $1.85 \pm 1.22$  & NaN               & NaN               & $1.21 \pm 0.76$   & NaN              & $1.18 \pm 0.76$ \\
        $T=2$, $K=8$            & $1.26 \pm 0.70$  & $1.61 \pm 0.79$  & NaN               & $13.88 \pm 10.80$ & $1.24 \pm 0.72$   & $2.05 \pm 1.19$  & $1.17 \pm 0.68$ \\
        $T=2$, $K=16$           & $1.29 \pm 0.70$  & $1.01 \pm 0.47$  & $9.52 \pm 1.97$   & $15.38 \pm 13.57$ & $1.30 \pm 0.75$   & $1.72 \pm 1.04$  & $1.29 \pm 0.87$ \\ \bottomrule
        \end{tabular}
    }
    \par\vspace*{1em}
    \wrapbox{.7}{%
        \begin{tabular}{@{}llllllll@{}}
        \toprule
        \multirow{2}{*}{Method} & \multicolumn{2}{l}{Euler--Maruyama} & \multicolumn{2}{l}{Lord--Rougemont} & \multicolumn{2}{l}{Jentzen--Kloeden} & Tweedie \\ \cmidrule(l){2-8} 
                                & ODE              & SDE              & ODE              & SDE              & ODE               & SDE              & SDE     \\ \cmidrule(l){2-8} 
        $T=1$, $K=4$            & 20               & 18               & 0                & 4                & 20                & 9                & 20      \\
        $T=1$, $K=8$            & 20               & 20               & 2                & 16               & 20                & 18               & 20      \\
        $T=1$, $K=16$           & 20               & 18               & 15               & 19               & 20                & 18               & 17      \\
        $T=2$, $K=4$            & 20               & 12               & 0                & 0                & 20                & 0                & 20      \\
        $T=2$, $K=8$            & 20               & 16               & 0                & 8                & 20                & 10               & 20      \\
        $T=2$, $K=16$           & 20               & 18               & 3                & 15               & 20                & 15               & 20      \\ \bottomrule
        \end{tabular}
    }
\end{table}

Results are shown in Tables~\ref{tbl:app-lokta-others} and~\ref{tbl:app-lokta-diffusion}. 
Since learning this model is challenging, most methods have experienced divergent runs. 
Notably, the diffusion resampling exhibits significantly less divergence, nearly a factor of two lower than the other methods. 
Moreover, the prediction error with diffusion resampling is substantially superior than the other methods; approximately two to five times better.
This table is also consistent with that of the LGSSM experiment that the exponential integrators may not propagate SDE gradients effectively, except for Tweedie. 

\section{Vision-based pendulum dynamics tracking}
\label{app:pendulum}

\paragraph{Experiment settings}
We assume that the pendulum dynamics evolve according to Equation~\eqref{eq:pendulum_dynamics}, with parameters $g=9.81$ and $l=0.4$. 
Using the known dynamics and the observation model specified in Equation~\eqref{eq:pendulum_dynamics} with observation noise $\sigma_{\mathrm{obs}} = 0.01$, we generate a single sequence of observational data over $t \in [0,4]$ using 256 discretisation steps (which is $j=0,1,\ldots, 256$ in discrete time) and starting from the initial state $Z_0 = \begin{bmatrix} \pi \, / \, 4 & 0 \end{bmatrix}$. 
During training, we jointly optimise the two neural networks, $f_\theta$ and $r_\phi$, parameterising the transition function and the decoder, respectively, by minimising the negative log-likelihood (NLL) estimated by the particle filter.  

For completeness, we consider two experiment settings: in the first setting we employ effective sample size (ESS)-based adaptive resampling under a tempered observation likelihood. 
Although the tempering modifies the original model, it is a common practice in deep learning for training complex models~\citep[see, e.g.,][]{Ng2025}. 
Here, the process noise covariance is fixed by $\Lambda_{\zeta} = \sigma_{\zeta}^2 \, I_2 $, where $\sigma_{\zeta}^2=0.01$. We use $\sigma_{\mathrm{train}} = 10 \, \sigma_{\mathrm{obs}}$ and scale the observation log-likelihood. In particular, we compute the log-likelihood as the mean over pixel dimensions (rather than the sum) and rescale the result by a factor $\kappa^{-1}$ (where $\kappa = 10^4$). 
We use a resampling threshold such that we resample if the ESS is less than $N=32$. We train the neural networks for 3000 iterations. The learnable parameters are updated using the Adam optimiser (learning rate $10^{-4}$) with global gradient clipping (maximum norm 1.0) and $\beta_2=0.999$ (other Adam defaults as in Optax). We use $N = 32$ particles to approximate the negative log-likelihood during training.

In the second setting, we consider a more challenging regime in which we force resampling at every filtering step after $j > 10$. Here, we use process noise with $\Lambda_{\zeta} = \sigma_{\zeta}^2 \, \diag{0,1}$ with $\sigma_{\zeta}^2=0.1$, and make an additive noise assumption in the learned transition model. We use $\sigma_{\mathrm{train}} = 5 \, \sigma_{\mathrm{obs}}$ and no other scaling of the log-likelihood compared to the true observation model. We train the neural networks for 1500 iterations. The learnable parameters are updated using the Adam optimiser (learning rate $2\times 10^{-4}$) with global gradient clipping (maximum norm 1.0) and $\beta_2=0.99$ (other Adam defaults as in Optax). Here, we use $N=16$ particles. The optimiser omits at most 10 consecutive non-finite parameter updates by leaving the current parameter values unchanged.

\paragraph{Results and evaluation}
We evaluate the learnt models by comparing the average SSIM and PSNR of image sequences obtained by unrolling the dynamics from $Z_0$ using the learnt transition function $f_\theta$, and then generating images by passing the resulting state trajectory through the learnt decoder $r_\phi$. All results corresponding to the first experiment setting are averaged over $5$ independent runs, and all results corresponding to the second experiment setting are averaged over $9$ independent runs.

For the first setting, results for different configurations of the diffusion resampler are presented in Table~\ref{tbl:app-pendulum-diffusion}.
Results for different configurations of the baselines are presented in Table~\ref{tbl:app-pendulum-others}. For the second, more challenging setting, in which resampling is invoked at nearly every filtering step, the corresponding results are presented in Tables~\ref{tbl:app-pendulum-diffusion2} and~\ref{tbl:app-pendulum-others2}, respectively.
Figure~\ref{fig:pendulum_overview1} shows the median loss evolution during training for the best model configuration (in terms of mean SSIM and PSNR) from each resampling class in each experiment setting. Figure~\ref{fig:ssim_psnr_setting2} shows mean prediction SSIM/PSNR for the second setting; see Figure~\ref{fig:pendulum_SSIM_PSNR} for a comparison to the corresponding results in the first setting. 

The difference in SSIM/PSNR between the two experiments indicate that the second setting is more challenging for all resampling methods. In the first setting, 
performance is competitive, and diffusion resampling provides a stable end-to-end optimisation and effective integration into the high-dimensional learning pipeline. Notably, resampling is not always necessary for convergence in end-to-end training using particle filters, and prior work has reported cases where omitting resampling can be more effective ~\citep[see, e.g.,][]{Karkus2018}. In our experiments, we consider end-to-end training \emph{with} resampling, and compare resamplers while holding the rest of the system fixed. Resampling itself can interact with the optimisation process in non-trivial ways, and we thus primarily evaluate the comparison by end-to-end performance. 
However, tempering the observation likelihood in the first setting reduces weight concentration: the weights become more uniform and ESS remains high. 
With this ESS-based adaptive resampling, the resampling can therefore be triggered infrequently. 
As such, to further stress test differentiable resampling in this context, we consider the second setting with more concentrated weights and resampling at nearly every filtering step.

As we have seen, even in this challenging setting, diffusion resampling remains competitive among the baselines. Figure~\ref{fig:pendulum_overview_grid4} shows a qualitative comparison of a top performing model in each resampling class in this setting. Here, the diffusion resampler is among the three top-performing baselines. While soft resampling occasionally shows better performance in terms of final metrics compared to diffusion resampling, see Figure~\ref{fig:ssim_psnr_setting2}, we note that it also exhibits significantly worse weight degeneracy compared to diffusion resampling and the other baselines. In addition, it is by construction not fully differentiable (see Table \ref{tbl:app-compare-all}). We also note that while OT produces the top performing model here, it exhibits high variability and is overall unstable in this setting, with only a few converging training runs.

\paragraph{Identifiability of the latent space}
Because both the transition model $f_\theta$ and the decoder $r_\phi$ are learnt only through the image observation likelihood, the latent state is typically identifiable only up to (possibly nonlinear) invertible transformations, and there is no unique correspondence between the learnt states and the true physical coordinates without special constructions~\citep[see, e.g.,][for a similar issue]{greydanus2019hamiltonian}.
For this reason, we do not directly evaluate the pendulum dynamics in state space, but instead assess model quality in the observation space via image reconstruction metrics. A similar evaluation strategy is common in recent work on latent SDEs for high-dimensional sequence data. 
For example, both~\citet{Bartosh2025sde} and \citet{Course2023amortized} primarily report qualitative comparisons of generated image sequences in comparable settings, rather than quantitative metrics in the latent state space, reflecting the view that high-quality reconstruction is indicative of a meaningful latent representation. 
Nevertheless, we find that sometimes the learnt dynamics closely match the true pendulum dynamics up to a linear transformation of the latent coordinates. Figure~\ref{fig:pendulum_overview2}
illustrates two such examples with the latent dynamics learnt using diffusion resampling, where a simple linear transformation of the latent trajectory yields a close match to the ground-truth pendulum state (coefficient of determination $R^2 \approx 0.98$ (left) and $R^2 \approx 0.92$ (right), respectively). 
This indicates that the models have learnt meaningful latent representations of the underlying pendulum dynamics, albeit in a transformed coordinate system.

\begin{table}
  \caption{Prediction quality in terms of structural similarity index (SSIM) and PSNR (higher the better) for the pendulum dynamics
  tracking experiment, with respect to models trained using diffusion resampling in the first experiment setting. 
  Each result is presented in terms of mean over 5 independent runs, together with the corresponding standard deviation, recorded after 3000 training iterations. 
  For two configurations only 1 out of 5 (*) and 3 out of 5 (**) runs, respectively, were non-divergent.}
  \label{tbl:app-pendulum-diffusion}
  \centering
  \wrapbox{.99}{%
    \begin{tabular}{@{}lllllllllll@{}}
      \toprule
      \multirow{3}{*}{Method}
        & \multicolumn{4}{l}{Euler--Maruyama}
        & \multicolumn{4}{l}{Jentzen--Kloeden}
        & \multicolumn{2}{l}{Tweedie} \\ \cmidrule(l){2-11}
        & \multicolumn{2}{l}{SDE}
        & \multicolumn{2}{l}{ODE}
        & \multicolumn{2}{l}{SDE}
        & \multicolumn{2}{l}{ODE}
        & \multicolumn{2}{l}{SDE} \\ \cmidrule(l){2-11}
        & SSIM & PSNR
        & SSIM & PSNR
        & SSIM & PSNR
        & SSIM & PSNR
        & SSIM & PSNR \\ \cmidrule(l){2-11}
      $T=1$, $K=4$
        & $0.866 \pm 0.039$ & $21.4 \pm 1.23$
        & $0.859 \pm 0.028$ & $21.1 \pm 1.31$
        & $0.508^* \pm \text{--}$ & $16.7^* \pm \text{--}$
        & $0.858 \pm 0.031$ & $21.0 \pm 1.20$
        & $0.823 \pm 0.077$ & $20.5 \pm 1.41$ \\
      $T=1$, $K=8$
        & $0.861 \pm 0.035$ & $21.1 \pm 1.10$
        & $0.859 \pm 0.031$ & $21.0 \pm 1.25$
        & $0.763^{**} \pm 0.103$ & $19.3^{**} \pm 0.84$
        & $0.865 \pm 0.036$ & $21.5 \pm 1.93$
        & $0.817 \pm 0.091$ & $20.4 \pm 1.47$ \\
      $T=1$, $K=16$
        & $0.762 \pm 0.128$ & $19.3 \pm 1.32$
        & $0.826 \pm 0.082$ & $20.6 \pm 1.55$
        & $0.769 \pm 0.086$ & $19.6 \pm 1.02$
        & $0.855 \pm 0.033$ & $20.8 \pm 1.35$
        & $0.736 \pm 0.134$ & $19.9 \pm 1.66$ \\ \bottomrule
    \end{tabular}
  } 
\end{table}

\begin{table}
  \caption{Prediction quality (SSIM, PSNR) for the pendulum dynamics
  tracking experiment, with respect to models trained using OT, Gumbel and Soft
  resampling. Results correspond to the first experiment setting.}
  \label{tbl:app-pendulum-others}
  \centering
  \wrapbox{.35}{%
    \begin{tabular}{@{}lll@{}}
      \toprule
      Method                 & SSIM  & PSNR  \\ \midrule
      OT ($\varepsilon=0.5$) & $0.861 \pm 0.033$ & $21.1 \pm 1.29$ \\
      OT ($\varepsilon=1.0$) & $0.813 \pm 0.109$ & $20.6 \pm 1.91$ \\
      OT ($\varepsilon=1.5$) & $0.849 \pm 0.031$ & $20.8 \pm 1.16$ \\
      Gumbel (0.1)           & $0.846 \pm 0.034$ & $20.6 \pm 1.22$ \\
      Gumbel (0.3)           & $0.859 \pm 0.029$ & $21.1 \pm 1.25$ \\
      Soft (0.7)             & $0.860 \pm 0.031$ & $21.0 \pm 1.27$ \\
      Soft (0.9)             & $0.814 \pm 0.105$ & $20.5 \pm 1.78$ \\ \bottomrule
    \end{tabular}
  }
\end{table}

\begin{table}
  \caption{Prediction quality in terms of structural similarity index (SSIM) and PSNR (higher the better) for the pendulum dynamics
  tracking experiment, with respect to models trained using diffusion resampling in the second experiment setting. 
  Each result is presented in terms of mean over 9 independent runs, together with the corresponding standard deviation, recorded after 1500 training iterations. 
  For four configurations only 8 out of 9 (*) runs, respectively, were non-divergent.}
  \label{tbl:app-pendulum-diffusion2}
  \centering
  \wrapbox{.99}{%
    \begin{tabular}{@{}lllllllllll@{}}
      \toprule
      \multirow{3}{*}{Method}
        & \multicolumn{4}{l}{Euler--Maruyama}
        & \multicolumn{4}{l}{Jentzen--Kloeden}
        & \multicolumn{2}{l}{Tweedie} \\ \cmidrule(l){2-11}
        & \multicolumn{2}{l}{SDE}
        & \multicolumn{2}{l}{ODE}
        & \multicolumn{2}{l}{SDE}
        & \multicolumn{2}{l}{ODE}
        & \multicolumn{2}{l}{SDE} \\ \cmidrule(l){2-11}
        & SSIM & PSNR
        & SSIM & PSNR
        & SSIM & PSNR
        & SSIM & PSNR
        & SSIM & PSNR \\ \cmidrule(l){2-11}
      $T=1$, $K=4$
        & $0.671^* \pm 0.082$ & $17.2^* \pm 0.666$ 
        & $0.607 \pm 0.091$ & $17.2 \pm 0.834$ 
        & $0.517^* \pm 0.134$ & $16.9^* \pm 0.528$ 
        & $0.507 \pm 0.048$ & $16.5 \pm 0.443$ 
        & $0.549^* \pm 0.081$ & $16.7^* \pm 0.666$ \\ 
      $T=1$, $K=8$
        & $0.576^* \pm 0.080$ & $16.6^* \pm 0.542$ 
        & $0.511 \pm 0.070$ & $16.7 \pm 0.554$ 
        & $0.599 \pm 0.101$ & $17.1 \pm 0.579$ 
        & $0.481 \pm 0.074$ & $16.5 \pm 0.253$ 
        & $0.503 \pm 0.067$ & $16.5 \pm 0.32$ \\ 
      $T=1$, $K=16$
        & $0.576 \pm 0.082$ & $16.6 \pm 1.19$ 
        & $0.280 \pm 0.112$ & $16.1 \pm 0.235$ 
        & $0.557 \pm 0.101$ & $16.9 \pm 0.744$ 
        & $0.239 \pm 0.082$ & $16.0 \pm 0.202$ 
        & $0.227 \pm 0.074$ & $16.1 \pm 0.197$ \\ \bottomrule 
    \end{tabular}
  } 
\end{table}

\begin{table}
  \caption{Prediction quality (SSIM, PSNR) for the pendulum dynamics
  tracking experiment, with respect to models trained using OT, Gumbel and Soft
  resampling. Results correspond to the second experiment setting. For OT configurations, only 3 (**) and 4(*) runs were non-divergent.}
  \label{tbl:app-pendulum-others2}
  \centering
  \wrapbox{.35}{%
    \begin{tabular}{@{}lll@{}}
      \toprule
      Method                 & SSIM  & PSNR  \\ \midrule
          OT ($\varepsilon=0.5$) & $0.662^{*}\pm 0.141$ & $17.7^{*} \pm 1.782$ \\
      OT ($\varepsilon=1.0$) & $0.575^{**}\pm 0.303$ & $17.5^{**} \pm 1.924$ \\
      OT ($\varepsilon=1.5$) & $0.708^{**}\pm 0.159$ & $18.2^{**} \pm 2.73$ \\
      Gumbel (0.1)           & $0.509 \pm 0.060$ & $16.7 \pm 0.400$ \\
      Gumbel (0.3)           & $0.593 \pm 0.083$ & $16.8 \pm 0.508$ \\
      Soft (0.7)             & $0.720 \pm 0.107$ & $17.7 \pm 0.707$ \\
      Soft (0.9)             & $0.704 \pm 0.114$ & $17.3 \pm 0.872$ \\ \bottomrule
    \end{tabular}
  }
\end{table}

\paragraph{Network architectures}
Both the neural network modelling the latent dynamics, $f_\theta$, and the neural network modelling the decoder, $r_\phi$, operate on a feature vector $\mathbf{h}(Z_j) = \begin{bmatrix} \sin(Z_j^{(1)}) & \cos(Z_j^{(1)}) & Z_j^{(2)} \, / \, 10 \end{bmatrix}$, which embeds the angle and scales the velocity. 
While the system is initialised with physical coordinates $Z_0$, both networks are unconstrained and unregularised for all $t>0$. 
As discussed above, this means that the system is free to learn a latent manifold sufficient for image reconstruction, even if the latent representation is a non-linear transformation of the true physical coordinate system. 
We employ a SIREN \citep{Sitzmann2019siren} architecture to model the latent dynamics. 
The network takes the concatenated input $\begin{bmatrix} \mathbf{h}(Z_j) & \zeta_j \end{bmatrix} \in \R^5$ and outputs the state derivative. 
The update rule in the first setting is $Z_{j+1} = Z_j + \mathrm{Net}(\begin{bmatrix} \mathbf{h}(Z_j) & \zeta_j \end{bmatrix}) \, \Delta_j$. In the second setting, where we invoke resampling at nearly all filtering steps, we make an additional additive noise assumption, omit concatenating the input and let $Z_{j+1} = Z_j + \mathrm{Net}(\mathbf{h}(Z_j))\,\Delta_j + \zeta_j$. Without this assumption, the neural network is free to learn to neglect the noise, and in the first setting, we indeed found that the learnt dynamics network often had this issue. This can happen in end-to-end learning since minimising the optimisation objective does not necessarily require the network to optimise for a good filtering distribution, and reducing the randomness might, for instance, help stabilise optimisation.

In both settings, the network consists of 3 hidden layers of 256 units with sine activations ($\omega_0=8.0$ in the first layer). The decoder maps $\mathbf{h}(Z_j)$ to observations and is identical in both settings. It first processes the input via two linear layers (16 and 256 units), reshaping the output into a spatial feature map of size $4 \times 4 \times 16$. 
This is followed by three transposed convolution layers (kernel size $3 \times 3$, stride 2) with output channels of 32, 16, and 1, respectively, to upsample to the final $32 \times 32$ image. 
ReLU activations are used for all hidden layers, while the output layer is linear.

\begin{figure}[t!]
    \centering
    \begin{minipage}{0.49\linewidth}
        \centering
        \includegraphics[width=\linewidth]{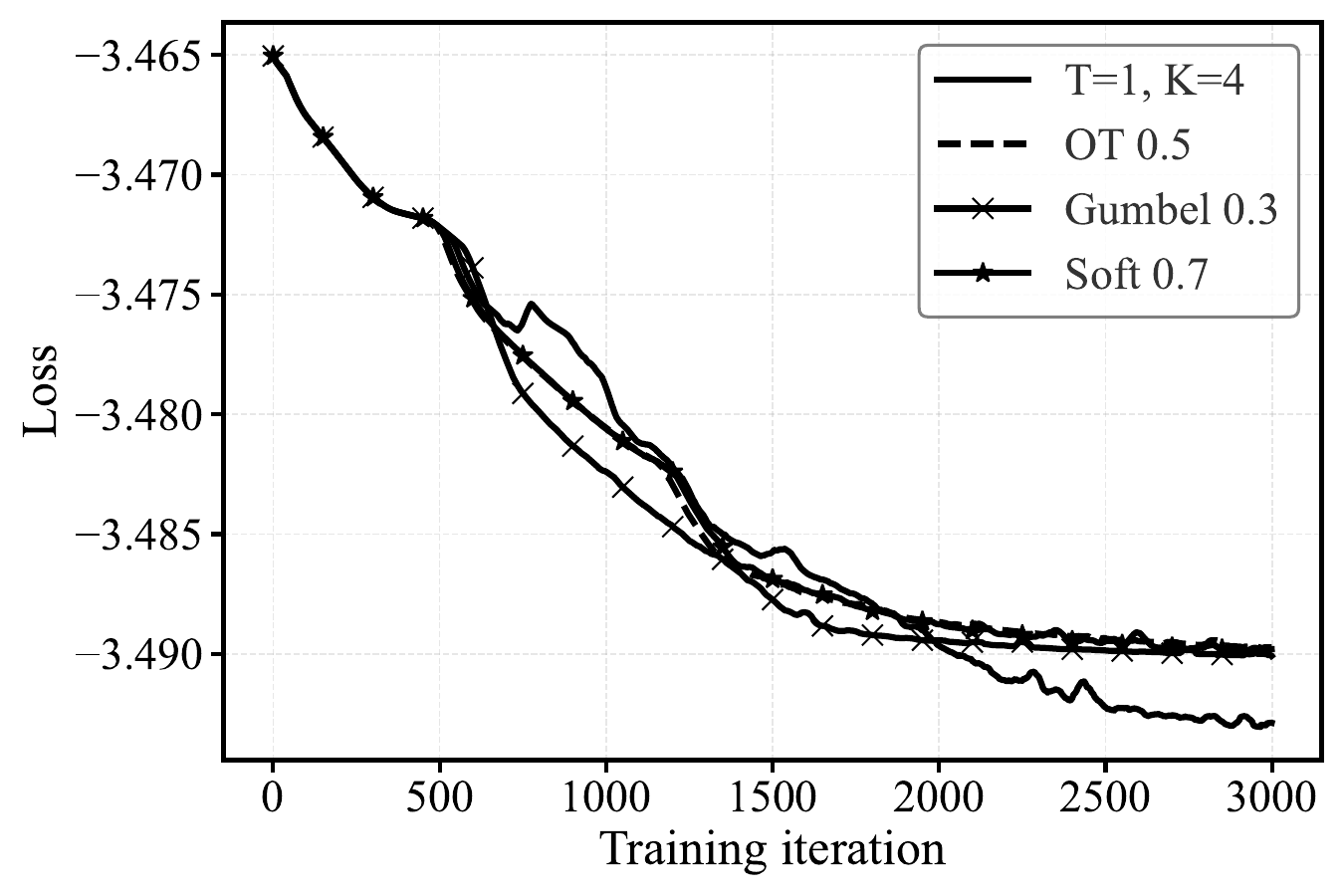}
        \label{fig:pendulum_top_five_loss_1}
    \end{minipage}
    \hfill
    \begin{minipage}{0.49\linewidth}
        \centering
        \includegraphics[width=\linewidth]{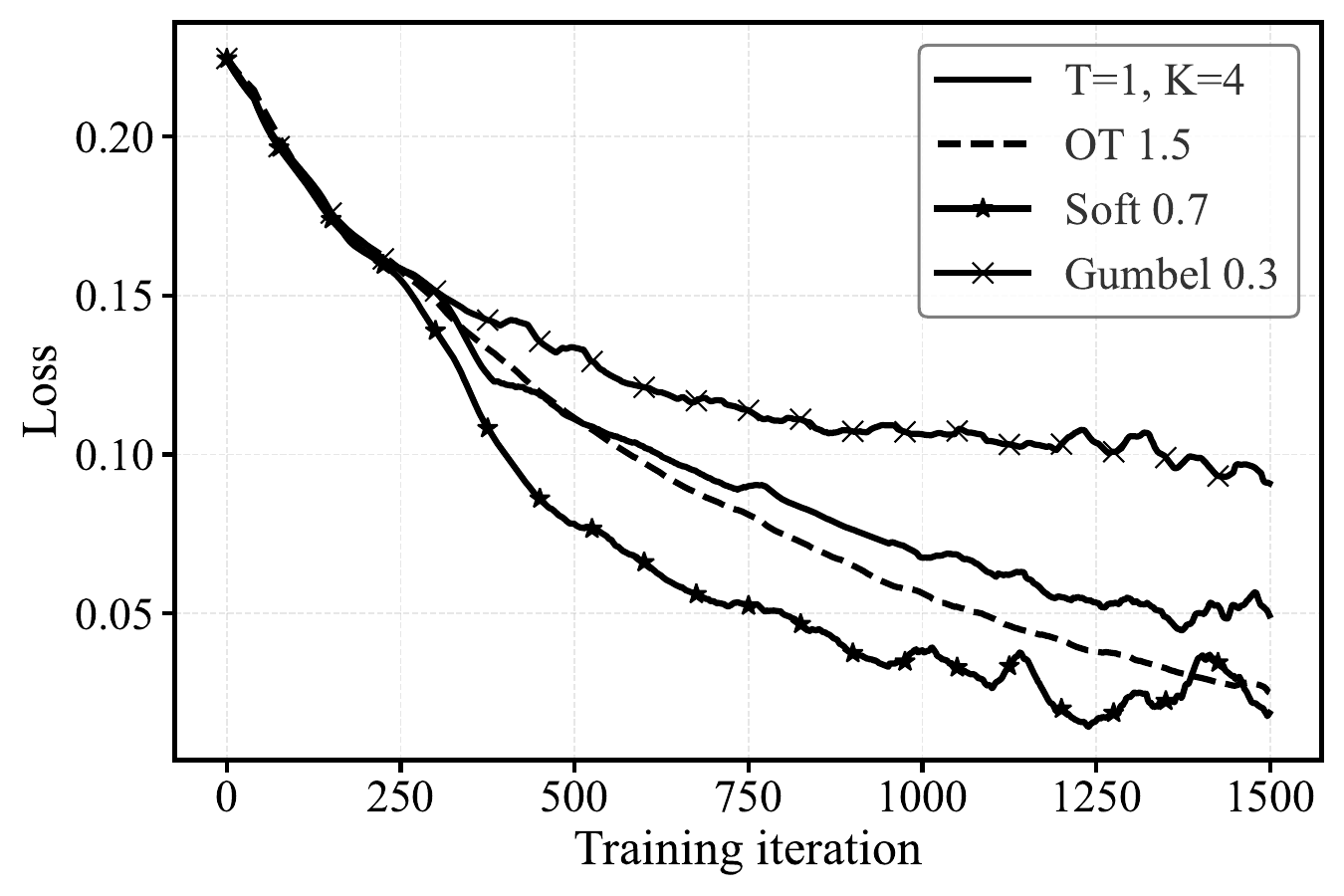}
        \label{fig:pendulum_top_five_loss_2}
    \end{minipage}
    \caption{(Left) Median loss during the training process for the top performing configuration in each resampling class in the first setting. We note that diffusion resampling achieves the lowest loss. 
    (Right) Median loss during the training process for the top performing configuration in each resampling class in the more challenging setting. We note that for OT, the reported median loss is based on the few successful runs and is therefore not statistically meaningful. While Soft achieves the lowest median loss (see the text for discussion), diffusion resampling remains competitive and yields low, stable loss, whereas Gumbel performs markedly worse than all baselines.}
    \label{fig:pendulum_overview1}
\end{figure}

\begin{figure}[t!]
    \centering
    \includegraphics[width=.5\linewidth]{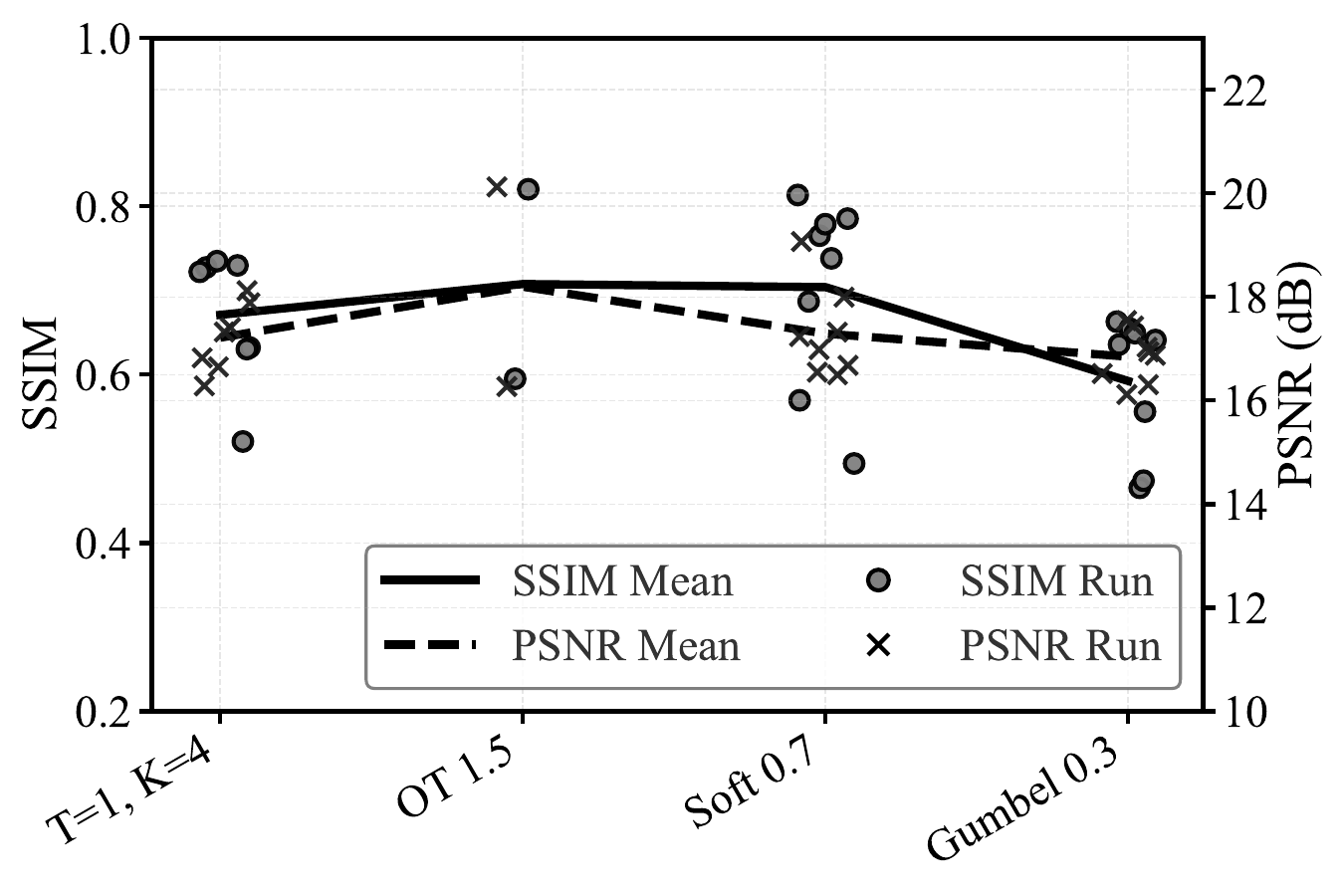}
    \caption{Mean prediction SSIM and PSNR for the best (by mean) model configuration of each resampler in the more challenging setting. 
    Individual runs are shown as scatter points. 
    We find that although Soft can occasionally give better results than diffusion resampling, its results exhibit large variability. 
    We also observe that Gumbel performs worse than diffusion resampling. We further observe that OT gets top SSIM and PSNR here, but note that it exhibits large variability and is not stable in this task (most OT runs diverged during training).
    }
    \label{fig:ssim_psnr_setting2}
\end{figure}

\begin{figure}[t!]
    \centering
    \begin{minipage}{0.49\linewidth}
        \centering
        \includegraphics[width=\linewidth]{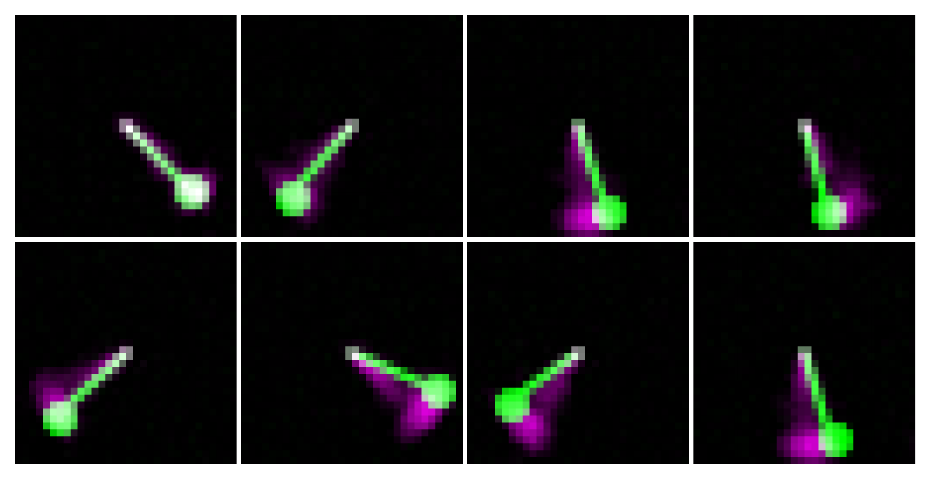}\\[-0.3em]
        \small (a)
    \end{minipage}
    \hfill
    \begin{minipage}{0.49\linewidth}
        \centering
        \includegraphics[width=\linewidth]{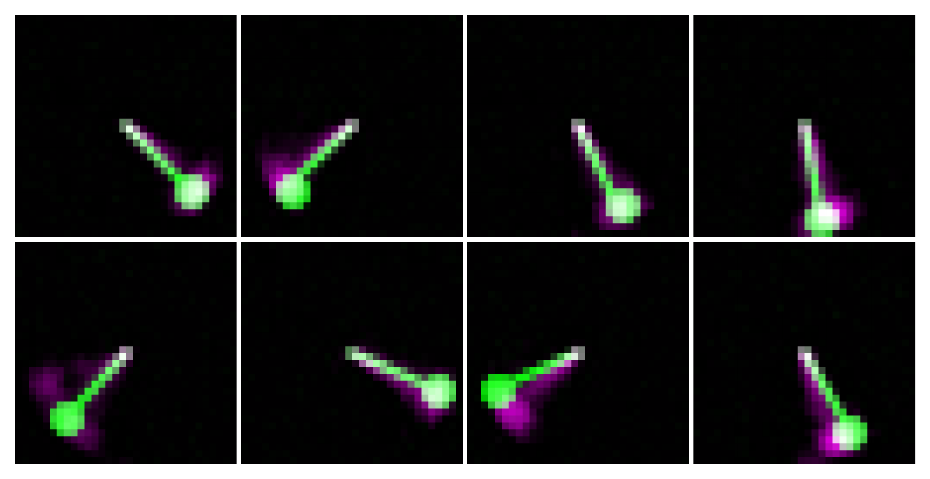}\\[-0.3em]
        \small (b)
    \end{minipage}

    \vspace{0.6em}

    \begin{minipage}{0.49\linewidth}
        \centering
        \includegraphics[width=\linewidth]{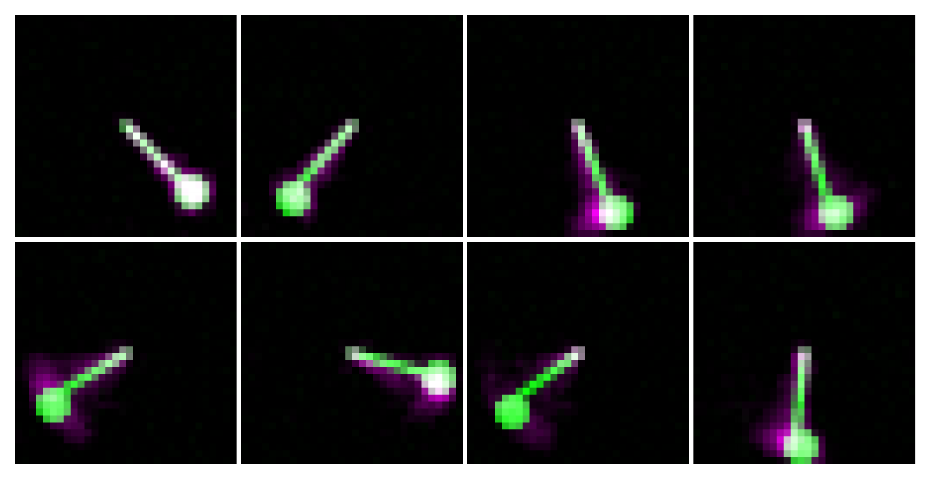}\\[-0.3em]
        \small (c)
    \end{minipage}
    \hfill
    \begin{minipage}{0.49\linewidth}
        \centering
        \includegraphics[width=\linewidth]{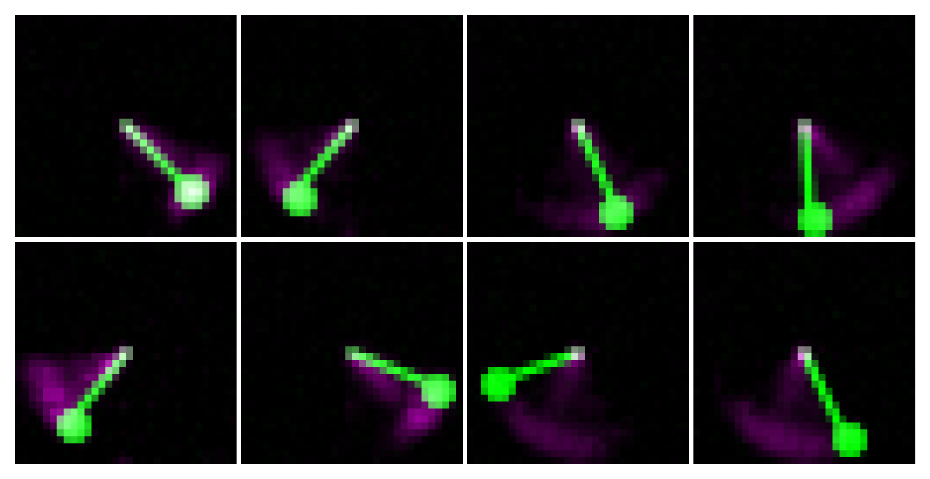}\\[-0.3em]
        \small (d)
    \end{minipage}

    \caption{Qualitative comparisons of the learnt pendulum dynamics trained end-to-end using a differentiable resampler in the SMC training loop. The results correspond to the second, more challenging experiment setting using resampling at nearly all filtering steps. The ground truth (green) is overlaid with model predictions (purple). White pixels indicate perfect alignment, while coloured regions highlight positional discrepancies (e.g., phase lag). Snapshots are shown at eight evenly spaced time points over the 4 second simulation (read from left to right and top to bottom). Each panel shows a top result in terms of mean SSIM for each differentiable resampling method in our comparison: a) Diffusion (mean SSIM/PSNR $0.761/17.0$, b) Soft ($0.816/19.2$), c) OT ($0.820/20.1$), d) Gumbel ($0.663/16.9$).}
    \label{fig:pendulum_overview_grid4}
\end{figure}

\begin{figure}[t!]
    \centering
    \begin{minipage}{0.49\linewidth}
        \centering
        \includegraphics[width=\linewidth]{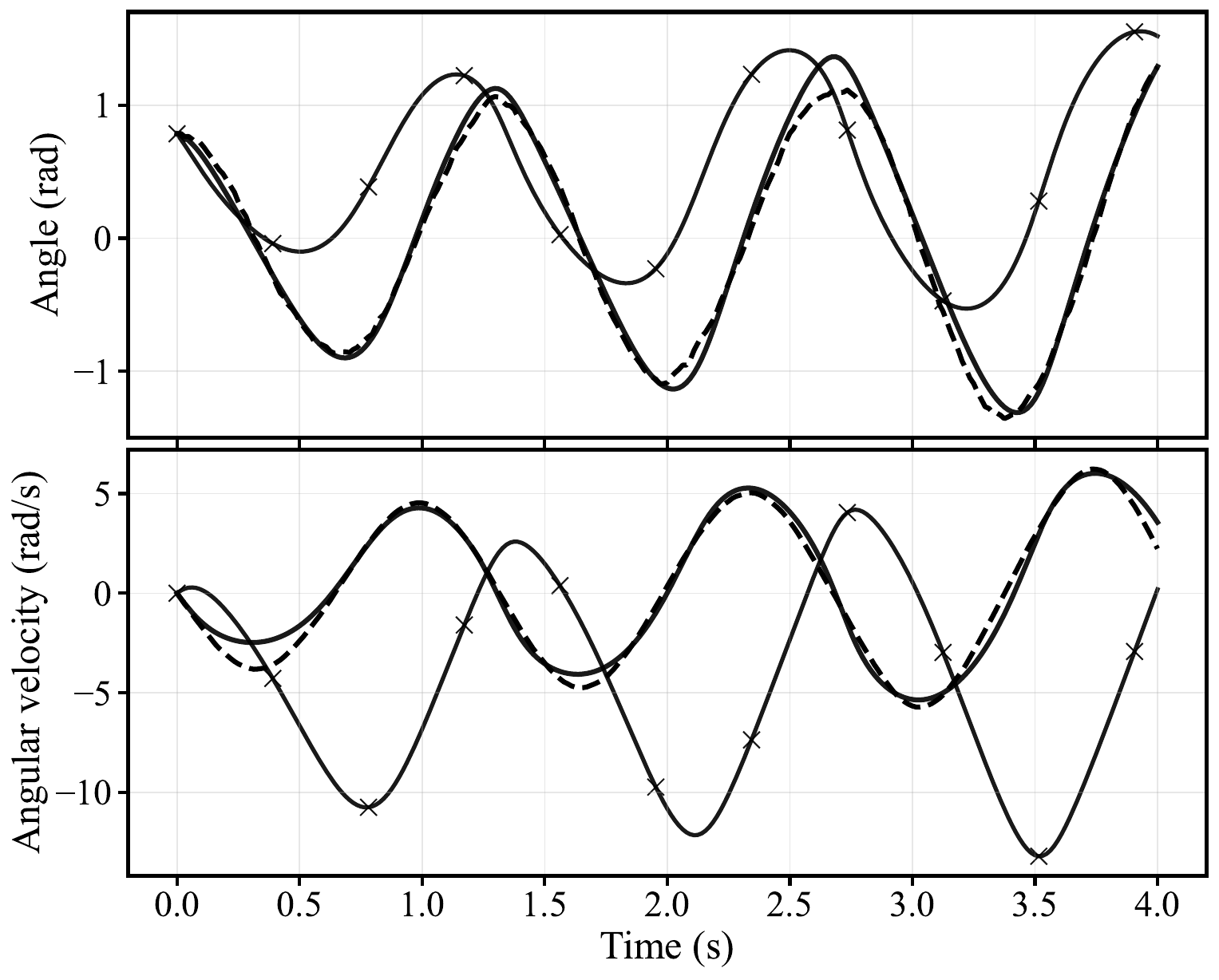}
        \label{fig:pendulum_top_five_loss}
    \end{minipage}
    \hfill
    \begin{minipage}{0.49\linewidth}
        \centering
        \includegraphics[width=\linewidth]{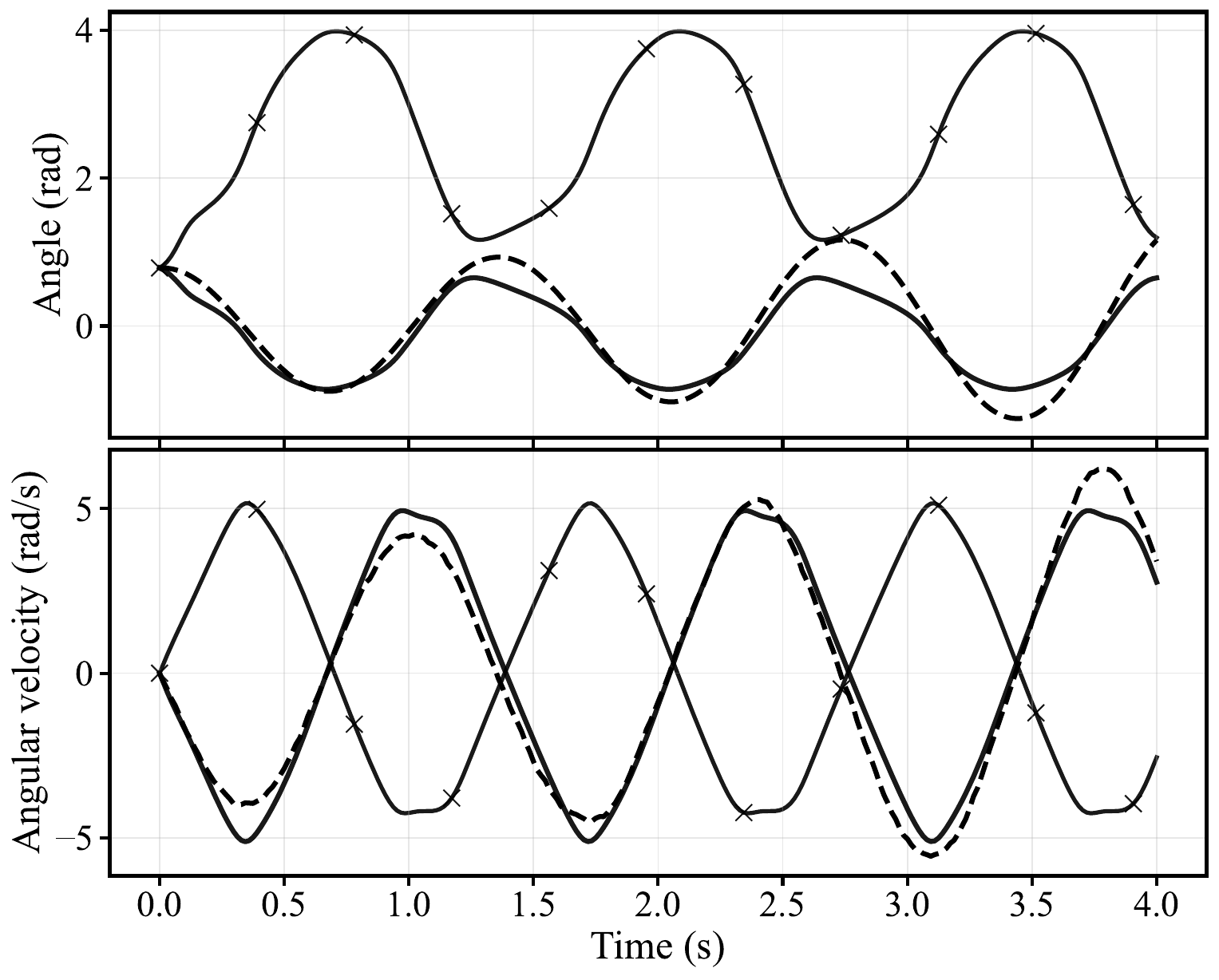}
        \label{fig:pendulum_dynamics}
    \end{minipage}
    \caption{(Left) Ground-truth pendulum state trajectories (dashed) and predictions in the original neural (solid with crosses) and transformed (solid) coordinate systems, for a top-performing model trained with diffusion resampling ($T=1$, $K=4$) in the first setting. 
    The dynamics model is trained until convergence, and correspond to the image predictions presented in Figure \ref{fig:pendulum_grid}.
    (Right) Same as the previous image, but for a model optimised in the more challenging setting. The corresponding image predictions are presented in Figure~\ref{fig:pendulum_overview_grid4}a.}
    \label{fig:pendulum_overview2}
\end{figure}

\section{Bayesian neural network training}
\label{app:pbnn}
We apply SMC for a larger-scale problem: training a (partial) Bayesian neural network~\citep[pBNN]{ZhaoPBNN2024, Sharma2023}. 
There are two steps moving from training a classical BNN to training a pBNN with SMC sampler. 
First, pBNNs only have priors on a (small) subset of the neural network parameters. 
This results in a latent-variable model to learn from the data, where computing the posterior distribution is easier than that of the full BNN. 
Let us denote the pBNN by $x \mapsto f_{\theta, z}(x)$, where $\theta$ and $z$ stand for deterministic and stochastic parameters, respectively.
Second, the prior construction is no longer static or explicit (e.g., $z$ being a Gaussian). 
Instead, $z$ is assumed to follow a Markovian dynamics motivated by~\citet{ZhaoPBNN2024, Chang2022bnn, Freitas2000} so that data are modelled as independent conditionally on a latent process. 
Specifically, we apply the pBNN setting and construct an SSM
\begin{equation}
    \begin{split}
        p(z_j \cond z_{j-1}) &= \mathrm{N}(z_j; \rho \, z_{j-1}, 1 - \rho^2), \\
        p_\theta(D_j \cond z_j) &= \mathrm{Softmax}(f_{\theta, z_j}; D_j),
    \end{split}
\end{equation}
for CIFAR10 classification, where $D_j$ is a batch data at surrogate time $j$ (training step), and we choose $\rho=0.99$. 

We choose a ResNet18 neural network, and set the first two convolution layers be stochastic (i.e., the neural network parameters in this part is the latent variable $z$). 
This results in latent dimension $d_z = 1,856$ and parameter dimension $d_\theta = 11,172,106$. 
We choose the number of particles to be 8, resampling threshold 0.5, and apply an Adam optimiser train for 200 epochs.
The diffusion resampling parameter is $T = 1$ and $K=4$ using the Jentzen--Kloeden SDE integrator. 
The results of the classification evaluated using accuracy and F1 score is shown in Table~\ref{tbl:cifar10-results}.

\begin{table}[t!]
    \centering
    \caption{CIFAR10 classification with different resampling methods. Results are averaged over 5 independent trainings.}
    \label{tbl:cifar10-results}
    \wrapbox{.7}{%
        \begin{tabular}{@{}llllll@{}}
        \toprule
                 & Diffusion                 & OT ($\varepsilon=1$) & Gumbel (0.3)     & Soft (0.7)       & Stopped          \\ \midrule
        Accuracy & $\mathbf{87.81} \pm 0.64$ & $83.28 \pm 0.63$   & $86.92 \pm 0.58$ & $86.57 \pm 0.58$ & $86.28 \pm 0.57$ \\
        F1 score & $\mathbf{87.76} \pm 0.63$ & $83.18 \pm 0.70$   & $86.89 \pm 0.57$ & $86.54 \pm 0.57$ & $86.21 \pm 0.56$ \\ \bottomrule
        \end{tabular}
    }
\end{table}

\section{Weather forecast}
\label{app:weather}
\paragraph{Experiment settings}
As another large-scale experiment, we consider a high-dimensional weather forecasting task to further demonstrate that the diffusion resampler scales to high-dimensional state space models. Given partial noisy observations, we track the evolution of the 850 hPa atmospheric temperature field over time. We use SMC to enable learning the parameters $\theta$ of an SSM with atmosphere dynamics model $p_\theta(z_k \cond z_{k-1})$ and observation model $p(y_k \cond z_k)$. Here, the latent state $z_k \in \R^{2048}$ represents the global temperature field at $5.625^\circ$ resolution, with the resulting latent dimension $d=2048$ ($32 \times 64$ spatial grid). 
The partial observation $y_k$ consists of $80$\% of pixels selected uniformly at random at each time step, corrupted by Gaussian noise with standard deviation $\sigma_y = 0.01$. The observation model is $p(y_k \cond z_k) = \mathcal{N}(y_k \cond \mathcal{M}_kz_k, \sigma_y^2 I)$, where $\mathcal{M}_k$ denotes the random mask operator for the observed pixels at time step $k$.
A UNet parametrises the transition density $p_\theta(z_k \cond z_{k-1})$, and is learnt using the negative log-likelihood estimate provided by a particle filter with resampling at every step.

The models are trained using ERA5 data from the WeatherBench benchmark dataset \citep{Rasp2020WeatherBench}. We present results for two different settings: a full year setting, where the model is tasked with iterative 4-day forecasts over a full year, and a single-sequence setting, where the model is tasked with iterative 8-day forecasts in January. 

\paragraph{Training details}
We use data from the years 2010-2018 for training (excluding 2016), with 2014 held out for evaluation. The temperature fields are normalised to $[0, 1]$ using global min-max normalisation computed across all years. The UNet takes the current state $z_{k-1}$ and process noise concatenated along the channel dimension. The output is given by $z_k = \mathrm{Sigmoid}\bigl(z_{k-1} + \delta_\theta(z_{k-1}, q_k)\bigr)$, where $\delta_\theta$ is the UNet output, and $q_k \sim \mathcal{N}(0, \sigma_q^2 I)$ with $\sigma_q^2 = 0.1$. We train the model for 3000 iterations using the Adam optimiser with learning rate $2 \times 10^{-4}$ and gradient clipping with global norm threshold $1.0$. The optimiser omits at most 10 consecutive non-finite parameter updates by leaving the current parameter values unchanged. We use $N=16$ particles and trigger resampling at each filtering step. By default, the diffusion resampler jitters the diagonal of the Gaussian reference by $10^{-5}$ for numerical stability, and the log-likelihood at each time step is normalised by the number of observed pixels.

\paragraph{Results and evaluation}
The learnt dynamics models are evaluated on unobserved (masked) pixels from the training sequences and on full sequences from the held-out year. We use the mean square error (MSE) averaged over time steps and 16 individual rollout samples to evaluate the performance. All MSE values are reported at scale $\times 10^{-3}$, averaged over 5 independent training runs. Table \ref{tbl:weather} shows results for the full year setting, where training sequences of length 16 are randomly sampled from the full year of 6-hourly data across all 7 training years. The model is evaluated on 5 evenly spaced windows from the held-out year, covering all seasons. Table \ref{tbl:weather2} shows results for the single-sequence setting, where the training data consists of fixed sequences of length 32 covering the first 8 days of January for each year in the training data. The model is evaluated on the corresponding January sequence from the held-out year.

\begin{table}
  \caption{Prediction quality in terms of MSE ($\times 10^{-3}$, lower is better) for the weather forecasting experiment in the full year setting.
  Results are presented as mean $\pm$ standard deviation over 5 independent runs recorded after 3000 training iterations.
  Train MSE is averaged over 5 evenly spaced windows from each of the 7 training years; eval MSE is computed on 5 evenly spaced windows from the held-out year.}
  \label{tbl:weather}
  \centering
  \wrapbox{.65}{%
    \begin{tabular}{@{}lll@{}}
      \toprule
      Method & Train MSE (masked) & Eval MSE (full) \\ \midrule
      Diffusion, Euler--Maruyama ($T=1,K=4$,ODE) & $2.2 \pm 0.7$ & $2.1 \pm 0.8$ \\
      Diffusion, Euler--Maruyama ($T=1,K=4$,SDE) & $2.5 \pm 0.8$ & $2.4 \pm 0.7$ \\
      Diffusion, Euler--Maruyama ($T=1,K=8$,ODE) & $2.1 \pm 0.5$ & $2.0 \pm 0.4$ \\
      Diffusion, Jentzen--Kloeden ($T=1,K=8$,ODE) & $2.0 \pm 0.5$ & $1.9 \pm 0.4$ \\
      Soft ($0.7$)                       & $1.7 \pm 0.1$ & $1.7 \pm 0.1$ \\
      Soft ($0.9$)                       & $1.6 \pm 0.1$ & $1.7 \pm 0.2$ \\
      Gumbel ($0.1$)                       & $4.6 \pm 2.0$ & $4.7 \pm 2.3$ \\
      Gumbel ($0.3$)                       & $1.7 \pm 0.1$ & $1.7 \pm 0.1$ \\
      OT ($\varepsilon=0.5$)                    & $1.9 \pm 0.1$ & $1.8 \pm 0.1$ \\
      OT ($\varepsilon=1.0$)                    & $1.8 \pm 0.2$ & $1.8 \pm 0.1$ \\
      OT ($\varepsilon=1.5$)                    & $2.4 \pm 1.2$ & $2.4 \pm 1.4$ \\ \bottomrule
    \end{tabular}
  }
\end{table}

\begin{table}
  \caption{Prediction quality in terms of MSE ($\times 10^{-3}$, lower is better) for the weather forecasting experiment in the single-sequence setting.
  Results are presented as mean $\pm$ standard deviation over 5 independent runs recorded after 3000 training iterations.
  Train MSE is averaged over all 7 training years; eval MSE is computed on the January sequence from the held-out year.}
  \label{tbl:weather2}
  \centering
  \wrapbox{.65}{%
    \begin{tabular}{@{}lll@{}}
      \toprule
      Method & Train MSE (masked) & Eval MSE (full) \\ \midrule
      Diffusion, Euler--Maruyama ($T=1,K=4$,ODE) & $0.6 \pm 0.1$ & $1.9 \pm 0.1$ \\
      Diffusion, Euler--Maruyama ($T=1,K=4$,SDE) & $0.6 \pm 0.1$ & $2.0 \pm 0.1$ \\
      Diffusion, Euler--Maruyama ($T=1,K=8$,ODE) & $0.5 \pm 0.0$ & $2.0 \pm 0.0$ \\
      Diffusion, Jentzen--Kloeden ($T=1,K=8$,ODE) & $0.5 \pm 0.0$ & $2.0 \pm 0.1$ \\
      Soft ($0.7$)                        & $0.5 \pm 0.1$ & $2.1 \pm 0.1$ \\
      Soft ($0.9$)                        & $0.5 \pm 0.1$ & $2.0 \pm 0.1$ \\
      Gumbel ($0.1$)                        & $0.6 \pm 0.1$ & $2.0 \pm 0.0$ \\
      Gumbel ($0.3$)                        & $0.6 \pm 0.1$ & $1.9 \pm 0.0$ \\
      OT ($\varepsilon=0.5$)                     & $0.6 \pm 0.1$ & $1.9 \pm 0.1$ \\
      OT ($\varepsilon=1.0$)                     & $0.6 \pm 0.0$ & $1.9 \pm 0.1$ \\
      OT ($\varepsilon=1.5$)                     & $0.6 \pm 0.1$ & $2.0 \pm 0.1$ \\ \bottomrule
    \end{tabular}
  }
\end{table}

Our results show that diffusion resampling can be used inside the SMC learning pipeline to learn atmosphere dynamics in this complex, high-dimensional setting. Tables~\ref{tbl:weather} and Table~\ref{tbl:weather2} show that almost all configurations achieve relatively low MSE on both unobserved pixels in sequences drawn from the training data, and on unseen sequences drawn from the held-out evaluation data. The worst results are obtained by Gumbel ($\tau=0.1$) in the full year setting.
In this setting, which contains data from the annual temperature variations, all resamplers struggle to learn fine-grained details in the current setup. Soft resampling achieves the lowest eval MSE with low variance, while the best diffusion configuration (Jentzen--Kloeden, $K=8$, ODE) achieves competitive performance. We note that these results rely on only 5 independent training runs for each model configuration. The standard deviation is not negligible in the comparison, and some configurations in the full year setting (including some diffusion configurations) exhibit relatively high variance.

Figure~\ref{fig:weather_forecast} shows examples of predicted forecasts using a model trained with diffusion resampling. These results correspond to the second experiment setting, where models are trained on a small dataset from a limited season (early January). Though Table~\ref{tbl:weather2} may indicate some overfitting, the evaluation forecasts show that some generalisable dynamics have been learned, and that the forecasts include fine-scale features.

\begin{figure}[h]
    \centering
    \includegraphics[width=\textwidth]{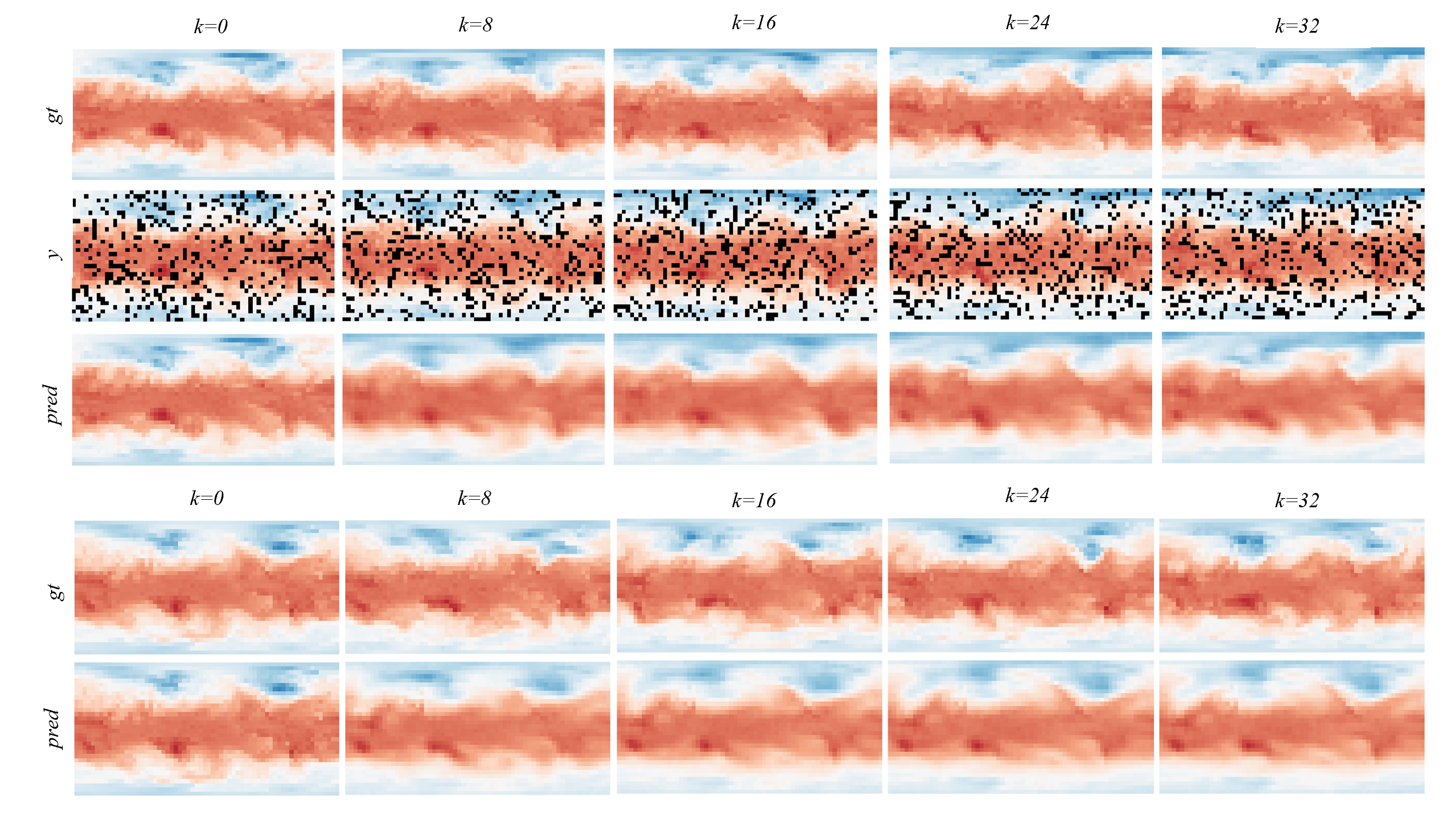}
    \caption{Top panel: Training sequence forecast with corresponding masked observations (middle row). Bottom panel: Evaluation sequence forecast from the held-out year. Each panel shows ground truth (top row) and model prediction (bottom row) at evenly spaced time steps. The model was trained using the diffusion resampler in the learning pipeline.}
    \label{fig:weather_forecast}
\end{figure}

For a more comprehensive comparison, there are limitations to address, such as the limited number of independent runs, choice of evaluation metric and the amount of data used for training. A simple MSE against the ground truth does not fully capture the predictive distribution quality and may not fully capture the forecasting performance in terms of fine-grained details and long forecasting windows. In both settings, the diffusion resampler achieves a performance comparable to the baselines. Training is stable across seeds, which shows that the diffusion resampler scales at latent dimension $d=2048$. 

\section{Choosing the hyperparameters}
Although the diffusion resampling is powerful, it comes with a variety of hyperparameters to tune: the diffusion coefficients $b$ and $\refmeasure$, the diffusion time $T$, the integrator, and integration step $K$. 
Ideally, $\refmeasure$ should well approximate the underlying continuous distribution, and the choice $b$ should correspondingly reflect the approximation error as in Corollary~\ref{corollary:asymp}. 
The setting of diffusion time $T$ is arbitrary if one can sample $p_T$ exactly, but smaller $T$ leads to potentially lower integration steps required. 
As for the integrator and integration steps, our empirical results indicate that Jentzen--Kloeden and Euler--Maruyama are often a good start, and $K\leq 8$ is often sufficient for learning large neural network-parametrised systems.
To retain a good computational complexity, it is suggested that the number of integration steps should be smaller than the number of samples: $K < N$.

\section{Additional related work}
\label{app:additional-related-work}
In addition to the discussion in Section \ref{sec:related-work}, there are also connections to other, less central, lines of work. For instance, the ensemble score estimator is related to the kernel projection used in Stein variational gradient flow \citep{Liu2017stein}. 
The controlled gain term in \citet{Yang2013} also plays a role analogous to resampling in a continuous-time limit, but involves a computationally demanding PDE-solving step. 
While there are interesting connections to our work, the approach focuses on continuous-time flow filtering~\citep[cf.][]{Kang2025} and leads to algorithms that are no longer standard SMC.
\citet{BaoEnsScore2024} propose an ensemble filter where the update step was replaced by a conditional version of the diffusion in Equation~\eqref{equ:diffres-rev}. 
This too targets at Equation~\eqref{equ:resampling-dist} but additionally contains errors from likelihood score approximation. 

\section{Take-away messages}
\label{app:tldr}
We here provide a TL;DR summary of the main findings, with Table~\ref{tbl:app-compare-all} giving an overall comparison among commonly used differentiable resampling schemes. 

\begin{itemize}
    \item Diffusion resampling largely excels for parameter estimation in state-space models (SSMs) and is useful for practical applications. 
    It is computationally fast, and provides consistent and stable gradient estimates.
    \item Diffusion resampling is useful not only for differentiation, but also for reducing resampling error in general. 
    The primary reason for this is that diffusion resampling can take additional information (e.g., $\refmeasure$) of the target into account. 
    When we have a complete information $\refmeasure=\pi$, the diffusion resampling becomes an optimal resampling algorithm. 
    Corollary~\ref{corollary:asymp} explicitly shows how the resampling error can be controlled depending on how well $\refmeasure$ approximates $\pi$, calibrated by $b$. 
    In contrast, multinomial resampling only uses information from the given samples $\lbrace (w_i, X_i) \rbrace_{i=1}^N$ which contains incomplete and less information compared to diffusion resampling using the continuous distribution approximation.
    \item For pure filtering tasks, even without focusing on parameter estimation, the diffusion resampling generally outperforms the peer methods. 
    \item If a good reference distribution is hard to construct, diffusion resampling may not be optimal for a pure resampling problem outside the SSM context. 
    For instance, as shown in the Gaussian mixture experiment, the large discrepancy between the Gaussian reference and the highly non-Gaussian target may prolong the diffusion time. 
    However, for SSMs, thanks to their sequential structure, constructing an effective reference becomes straightforward. 
\end{itemize}

\begin{table}[t!]
    \centering
    \caption{Comparison among commonly used resampling schemes with their calibration parameters. 
    The computational complexity is analysed per sample which can be embarrassingly parallelised over the samples. 
    The complexity of OT is given by~\citet{Luo2023OT}, where we here parametrise the Sinkhorn precision with the regularisation parameter $\varepsilon$ to unify comparison. 
    By ``fully differentiable'' we mean that the pathwise gradient is well defined, for instance, the soft resampling is only partially differentiable. 
    }
    \label{tbl:app-compare-all}
    \wrapbox{.9}{%
        \begin{tabular}{@{}lllll@{}}
        \toprule
        Method           & Fully differentiable      & Consistent                         & Unbiased & \begin{tabular}[c]{@{}l@{}}Computational complexity per re-sample\end{tabular} \\ \midrule
        Diffusion $K$    & Yes                       & Yes, as $N(K) \to\infty$           & No       & $O(K \, N)$                                                                      \\
        OT $\varepsilon$ & Yes                       & Yes, as $N(\varepsilon) \to\infty$ & No       & $O(N^2 \log(N)^{-1} \, \varepsilon^{-1})$                                             \\
        Gumbel $\tau$    & Yes, but not $\tau \to 0$ & No, except at $\tau \to 0$               & No       & $O(N)$                                                                           \\
        Soft $\alpha$    & No                        & Yes, as $N\to\infty$                                & No       & $O(N)$                                                                           \\
        Multinomial      & No                        & Yes                                & Yes, as $N\to\infty$      & $O(N)$                                                                           \\ \bottomrule
        \end{tabular}
    }
\end{table}

\end{document}